\documentclass[a4paper,11pt]{article}
\usepackage[top=3.25cm, bottom=3.25cm, left=2.5cm, right=2.5cm]{geometry}
\setlength{\parindent}{0pt}

\usepackage[utf8]{inputenc} 
\usepackage[T1]{fontenc}    
\usepackage{url}            
\usepackage{amsfonts}      
\usepackage{nicefrac}       
\usepackage{microtype}      
\usepackage{times}
\usepackage[english]{babel}
\usepackage[babel]{csquotes}
\usepackage{guit}
\usepackage{mdframed}
\usepackage{morefloats}
\usepackage{etex}
\usepackage{sidecap}
\usepackage{lipsum}
\usepackage{amsmath,amssymb,amsthm} 
\usepackage{mathtools} 
\usepackage{braket} 
\usepackage{bm} 
\usepackage{empheq}        
\usepackage{booktabs}                      
\usepackage{tabularx}                       
\usepackage{graphicx}                      
\usepackage{subfig}                          
\usepackage{caption}                      
\usepackage{xcolor} 
\usepackage{subfig}        
\usepackage{ucs}
\usepackage[normalem]{ulem}
\usepackage{multirow}
\usepackage{algorithm}
\usepackage{algpseudocode}
\usepackage{amsfonts}
\usepackage{multicol}
\usepackage{ulem}
\usepackage{appendix}
\usepackage{enumitem}
\usepackage[colorlinks,hyperindex]{hyperref}
\usepackage[font=small,skip=0pt]{caption}
\usepackage{thmtools,thm-restate}



\AfterEndEnvironment{restatable}{\noindent\ignorespaces}

\definecolor{ao(english)}{rgb}{0.0, 0.5, 0.0}

\newcommand{\X}{\mathcal{X}}
\newcommand{\Y}{\mathcal{Y}}
\newcommand{\Z}{\mathcal{Z}}

\newcommand{\E}{\mathcal{E}}

\newcommand{\EE}{\mathbb{E}}

\renewcommand{\H}{\Real^d}

\newcommand{\trans}{^{\scriptscriptstyle \top}}
\newcommand{\R}{\mathbb{R}}

\newcommand{\N}{\mathbb{N}}

\newcommand{\la}{\lambda}

\newcommand{\msf}[1]{\mathsf{#1}}

\newcommand{\prox}{\ensuremath{\text{\rm prox}}}

\newcommand{\dom}{\ensuremath{\text{\rm Dom}}}

\newcommand{\argmin}{\operatornamewithlimits{argmin}}


\newcommand{\wmu}{{w_\mu}}

\newcommand{\wbar}{{\msf{m}}}

\newcommand{\wh}{{w_h}}
\newcommand{\uh}{{u_h}}
\newcommand{\whD}{{w_{h}}}
\newcommand{\bwhD}{{{\bar w}_{h}}}

\newcommand{\uhD}{{u_{h}}}

\newcommand{\hn}{{h_n}}

\newcommand{\ee}{{\mathcal{E}}}
\newcommand{\een}{{\ee_n}}

\newcommand{\eeinf}{{\ee_{\env}}}
\newcommand{\rx}{{R}}

\newcommand{\task}{\mu}
\newcommand{\env}{\rho}
\newcommand{\Zn}{Z_n}
\newcommand{\Xn}{X_n}
\newcommand{\yn}{\textbf{y}_n}

\newcommand{\LL}{\mathcal{L}}

\newcommand{\datan}{Z_n}

\newcommand{\cR}{\mathcal{R}}
\newcommand{\Real}{\mathbb{R}}
\newcommand{\Exp}{\mathbb{E}}

\newcommand{\psd}{\mathbb{S}_+^d}

\newcommand{\prhD}{{\Phi_{h}}}
\newcommand{\duhD}{{\Psi_{h}}}

\declaretheorem[name=Theorem,refname=Thm.]{theorem}

\declaretheorem[name=Lemma,sibling=theorem]{lemma}
\declaretheorem[name=Proposition,refname=Prop.,sibling=theorem]{proposition}
\declaretheorem[name=Remark]{remark}
\declaretheorem[name=Corollary,refname=Cor.,sibling=theorem]{corollary}
\declaretheorem[name=Definition,refname=Def.,sibling=theorem]{definition}

\declaretheorem[name=Assumption,refname=Asm.]{assumption}
\declaretheorem[name=Example]{example}

\makeatletter
\renewenvironment{proof}[1][\proofname]{\par
  \pushQED{\qed}%
  \normalfont \topsep6\p@\@plus6\p@\relax
  \trivlist
  \item[\hskip\labelsep
        \bfseries
    #1\@addpunct{.}]\ignorespaces
}{%
  \popQED\endtrivlist\@endpefalse
}
\makeatother
\def\eop{$\rule{1.3ex}{1.3ex}$}
\renewcommand\qedsymbol\eop

\newcommand*{\shortautoref}[1]{%
  \begingroup
    \def\sectionautorefname{Sec.}%
    \autoref{#1}%
  \endgroup
}

\newcommand*{\shortautorefsubapp}[1]{%
  \begingroup
    \def\subsectionautorefname{App.}%
    \autoref{#1}%
  \endgroup
}

\newcommand*{\shortautorefrem}[1]{%
  \begingroup
    \def\remarkautorefname{Rem.}%
    \autoref{#1}%
  \endgroup
}

\newcommand*{\shortautorefex}[1]{%
  \begingroup
    \def\exampleautorefname{Ex.}%
    \autoref{#1}%
  \endgroup
}


\title{\sffamily\huge\bf Learning-to-Learn Stochastic Gradient Descent with Biased Regularization}

\author{ ~Giulia Denevi$^{1,2}$ ~~~~~ Carlo Ciliberto$^{3}$ ~~~~~~~ Riccardo Grazzi$^{1,4}$ ~~~~~ Massimiliano Pontil $^{1,4}$ \\ {\small \hspace*{-2.0em} ~~giulia.denevi@iit.it~~~carlo.ciliberto@imperial.ac.uk ~~~ riccardo.grazzi@iit.it ~~~~ massimiliano.pontil@iit.it}}


\begin{document}

\maketitle

\begin{abstract}
\noindent We study the problem of learning-to-learn: inferring a learning algorithm that works well on tasks sampled from an unknown distribution. 
As class of algorithms we consider Stochastic Gradient Descent on the true risk regularized by the square euclidean distance to a bias vector. We present an average excess risk bound for such a learning algorithm. This result quantifies the potential benefit of using a bias vector with respect to the unbiased case. We then address the problem of estimating the bias from a sequence of tasks.
We propose a meta-algorithm which incrementally updates the bias, as new tasks are observed. The low space and time complexity of this approach makes it appealing in practice. We provide guarantees on the learning ability of the meta-algorithm.  
A key feature of our results is that, when the number of tasks grows and their variance is relatively small, our learning-to-learn approach has a significant advantage over learning each task in isolation by Stochastic Gradient Descent without a bias term.
We report on numerical experiments which demonstrate the effectiveness of our approach. 
\end{abstract}


\section{Introduction}
\label{introduction_sec}

\footnotetext[1]{Computational Statistics and Machine Learning, Istituto Italiano di Tecnologia, 16163 Genoa, Italy}\footnotetext[2]{Department of Mathematics, University of Genoa, 16146 Genoa, Italy}\footnotetext[3]{Department of Electrical and Electronic Engineering, Imperial College of London, SW7 1AL, London, UK}\footnotetext[4]{Department of Computer Science, University College London, WC1E 6BT, London, UK}

The problem of learning-to-learn (LTL)~\cite{baxter2000model,thrun2012learning} is receiving increasing 
attention in recent years, due to its practical importance~\cite{pmlr-v70-finn17a,ravi2016optimization} 
and the theoretical challenge of statistically principled and efficient solutions 
\cite{alquier2016regret,balcan2015efficient,maurer2016benefit,pentina2014pac,denevi2018,denevi2018learning,gupta2017pac}.
The principal aim of LTL is to design a meta-learning algorithm to select a supervised learning algorithm that is well suited to learn tasks from a prescribed family. To highlight the difference between the meta-learning algorithm and the learning algorithm, throughout the paper we will refer to the latter as the {\em inner} or {\em within-task} algorithm. 

The meta-algorithm is trained from a sequence of datasets, associated with different learning tasks sampled from a meta-distribution (also called \emph{environment} in the literature). The performance of the selected inner algorithm is 
measured by the \emph{transfer risk} \cite{baxter2000model,maurer2005algorithmic}, that is, the average risk of the algorithm, trained on a random dataset from the same environment. 
A key insight is that, when the learning tasks share specific similarities, the LTL framework provides a means to leverage such similarities and select an inner algorithm of low transfer risk.

In this work, we consider environments of linear regression or binary classification tasks and we assume that the associated weight vectors are all close to a common vector. Because of the increasing interest in low computational complexity procedures, we focus on the family of within-task algorithms given by Stochastic Gradient Descent (SGD) working on the regularized true risk. Specifically, motivated by the above assumption on the environment, we consider as regularizer the square distance of the weight vector to a bias vector, playing the role of a common mean among the tasks. Knowledge of this common mean can substantially facilitate the inner algorithm and the main goal of this paper is to 
design a meta-algorithm to learn a good bias that is supported by both computational and statistical guarantees. 

{\bf Contributions.} The first contribution of this work is to show that, when the variance of the weight tasks' vectors sampled from the environment is small, SGD regularized with the ``right'' bias yields a model with smaller error than its unbiased counterpart when applied to a similar task. Indeed, the latter approach does not exploit the relatedness among the tasks, that is, it corresponds to learning the tasks in isolation -- also known as independent task learning  (ITL).  
The second and principal contribution of this work is to propose a meta-algorithm that estimates the 
bias term, so that the transfer risk of the corresponding SGD algorithm is as small as possible. 
Specifically, we consider the setting in which we receive in input a sequence of datasets and we propose an online meta-algorithm which efficiently updates the bias term used by the inner SGD algorithm. 
Our meta-algorithm consists in applying Stochastic Gradient Descent to a proxy of the transfer risk, given by the expected minimum value of the regularized empirical risk of a task. 
We provide a bound on the statistical performance of the biased SGD inner algorithm found by our procedure. 
It establishes that, when the number of observed tasks grows and the variance of the weight tasks' vectors is significantly smaller than their second moment, then, running the inner SGD algorithm with the estimated bias brings an improvement in comparison to learning the tasks in isolation with no bias. The bound is coherent with the state-of-the-art LTL analysis for other families of algorithms, but it applies for the first  time to a fully online meta-algorithm. Our results holds for Lipschitz loss functions both in the regression and binary classification setting.

Our proof techniques combines ideas from online learning, stochastic and convex optimization, 
with tools from LTL. A key insight in our approach is to exploit the inner SGD algorithm to compute an approximate subgradient of the surrogate objective, in a such way that the degree of approximation 
can be controlled, without affecting the overall performance or the computational
cost of the meta-algorithm.

{\bf Paper Organization.} We start from recalling in \shortautoref{preliminaries_sec} the basic 
concepts of LTL. In \shortautoref{inner_algorithm_sec} we cast the problem of
choosing a right bias term in SGD on the regularized objective in the LTL framework. 
Thanks to this formulation, in \shortautoref{advantage_bias_sec} we characterize the 
situations in which SGD with the right bias term is beneficial in comparison to SGD with no bias. In 
\shortautoref{estimating_bias_sec} we propose an online meta-algorithm to estimate the 
bias vector from a sequence of datasets and we analyze its statistical properties. In 
\shortautoref{exps_sec} we report on the empirical performance of the proposed 
approach while in \shortautoref{conclusions_sec} we discuss some future research directions.

\noindent {\bf Previous Work.} The LTL literature in the online setting \cite{alquier2016regret, denevi2018, denevi2018learning, pentina2016lifelong} has received limited attention and is less developed than standard LTL approaches, in which the data are processed in one batch as opposed to incrementally, see for instance ~\cite{baxter2000model,maurer2009transfer,maurer2013sparse,maurer2016benefit,pentina2014pac}. 
The idea of introducing a bias in the learning algorithm is not new, see e.g. \cite{denevi2018learning,kuzborskij2017fast,pentina2014pac} and the discussion in \shortautoref{inner_algorithm_sec}.
In this work, 
we consider the family of inner SGD algorithms with biased regularization and we develop a theoretically 
grounded meta-learning algorithm learning the bias. We are not aware of other works dealing with such a family in the LTL framework. Differently from others online methods \cite{alquier2016regret,denevi2018}, 
our approach does not need to keep previous training points in memory and it runs online both across and within the tasks. As a result, both the low space and time complexity are the strengths of our method. 


\section{Preliminaries}
\label{preliminaries_sec}

In this section, we recall the standard supervised (i.e. single-task)
learning setting and the learning-to-learn setting. 

We first introduce some notation used throughout.  We denote by $\Z = \X \times \Y$ the data space, 
where $\X \subset \R^d$ and $\Y \subseteq \R$ (regression) or $\Y = \{-1,+1\}$ (binary classification).
Throughout this work we consider linear supervised learning tasks $\task$, namely distributions over $\Z$,
parametrized by a weight vector $w \in \Real^d$.
We measure the performance by a loss function $\ell: \Y \times \Y \to 
\Real_+$ such that, for any $y \in \Y$, $\ell(\cdot,y)$ is convex and closed. 
Finally, for any positive $k \in \mathbb{N}$, we let $[k] = \{ 1, \dots, k \}$ and, we denote by 
$\langle \cdot, \cdot \rangle$ and $\| \cdot \|$ the standard inner product and euclidean norm. In the rest of this work, when specified, we make the following assumptions.

\begin{assumption}[Bounded Inputs] \label{bounded_inputs}
Let $\X \subseteq \mathcal{B}(0, \rx)$, where $\mathcal{B}(0, \rx) 
= \big \{ x \in \Real^d: \| x \| \le \rx \big \}$, for some radius $\rx \ge 0$. 
\end{assumption}
\begin{assumption}[Lipschitz Loss] \label{lipschitz_loss}
Let $\ell(\cdot, y)$ be $L$-Lipschitz for any $y \in \Y$.
\end{assumption}
For example, for any $y, \hat y \in \Y$, the absolute loss $\ell(\hat y, y) = \big | \hat y - y \big |$ 
and the hinge loss $\ell(\hat y, y) = \max \big \{ 0, 1 - y \hat y \big \}$ are both $1$-Lipschitz.
We now briefly recall the main notion of single-task learning.

\subsection{Single-Task Learning}
In standard linear supervised learning, 
the goal is to learn a linear functional relation $f_w : \X \to \Y$, $f_w(\cdot) = \langle \cdot, w \rangle$ 
between the input space $\X$ and the output space $\Y$. This target can be reformulated as the one of 
finding a weight vector $\wmu$ minimizing the \emph{expected risk} (or true risk)
\begin{equation} \label{ppo}
\cR_\mu(w) = \Exp_{(x,y)\sim\mu}~\ell \bigl( \langle x, w \rangle, y \bigr)
\end{equation}
over the \emph{entire} space $\Real^d$. The expected risk measures the prediction 
error that the weight vector $w$ incurs on average with respect to points sampled from the distribution $\task$. In practice, the task $\task$ is unknown and only partially 
observed by a corresponding dataset of $n$ i.i.d. points $\Zn = (z_i)_{i=1}^n \sim \task^n$, where, 
for every $i \in [n]$, $z_i = (x_i, y_i) \in \Z$. In the sequel, we often use the more compact notation 
$\Zn = (\Xn, \yn)$, where $\Xn \in \R^{n \times d}$ is the matrix containing the inputs vectors $x_i$
as rows and $\yn \in \R^n$ is the vector with entries given by the labels $y_i$. A {\em learning algorithm} 
is a function $A: \cup_{n \in \N} \Z^n \rightarrow \Real^d$ that, given such a {\em training}  
dataset $Z_n \in \Z^n$, returns a ``good'' estimator, that is, in our case, a weight vector $A(Z_n) \in \Real^d$, 
whose expected risk is small and tends to the minimum of Eq.~\eqref{ppo} as $n$ increases.

\subsection{Learning-to-Learn (LTL)}

In the LTL framework, we assume that each learning task $\task$ we observe is sampled from an 
\emph{environment} $\env$, that is a (meta-)distribution on the set of probability 
distributions on $\Z$. The goal is to select a learning algorithm (hence the name {\em learning-to-learn})
that is well suited to the environment. 

Specifically, we consider the following setting. We receive a 
stream of tasks $\task_1, \dots, \task_T$, which are independently sampled from the environment 
$\env$ and only partially observed by corresponding i.i.d. datasets $\datan^{(1)}, \dots, \datan^{(T)}, \dots$
each formed by $n$ datapoints. Starting from these datasets, we wish to learn an algorithm $A$, such that, when we apply it on 
a new dataset (composed by $n$ points) sampled from a new task $\task \sim \env$, the corresponding true risk is low. 
We reformulate this target into requiring that  algorithm $A$ trained with $n$ points\footnote{In order to simplify the presentation, we assume that all datasets are composed 
by the same number of points $n$. The general setting can be addressed introducing the slightly different
definition of the transfer risk $\E(A) = \Exp_{(n,\task) \sim \env}~\Exp_{\datan \sim \task^n} ~\cR_\task(A(\Zn))$. 
\label{footnote_different_n}} 
over the environment $\env$, has small {\em transfer risk}
\begin{equation} \label{eq:TRgen}
\begin{split}
\E_n(A) = \Exp_{\task \sim \env}~\Exp_{\datan \sim \task^n}~\cR_\task(A(\Zn)).
\end{split}
\end{equation}
The transfer risk measures the expected true risk that the inner algorithm $A$, trained on the dataset $\Zn$, incurs {\em on average with respect to the distribution of 
tasks $\mu$ sampled from $\rho$}. Therefore, the process of learning a learning algorithm is a meta-learning one, 
in that the inner learning algorithm is applied to tasks from the environment and then chosen from 
a sequence of training tasks (datasets) in attempt to minimize the transfer risk. 

As we will see in the following, in this work, we will consider a family of learning algorithms $A_h$
parametrized by a bias vector $h \in \Real^d$.


\section{SGD on the Biased Regularized Risk}
\label{inner_algorithm_sec}


In this section, we introduce the LTL framework for the family of within-task algorithms we analyze in this work: SGD on the biased regularized true risk.

The idea of introducing a bias in a specific family of learning algorithms is not new in the LTL literature, see e.g. \cite{denevi2018learning,kuzborskij2017fast,pentina2014pac} and references therein. A natural choice is given by regularized 
empirical risk minimization, in which we introduce a bias $h \in \mathbb{R}^d$ in the square norm regularizer -- which we simply 
refer to as ERM throughout -- namely 
\begin{equation} \label{RERM}
A^{\rm{ERM}}_h(\Zn) \equiv \whD(\Zn) = \argmin_{w \in \Real^d}~ \cR_{\Zn, h}(w),
\end{equation}
where, for any $w, h \in \Real^d$, $\la > 0$, we have defined the empirical 
error and its biased regularized version as
\begin{equation} \label{reg_emp}
\begin{split}
\cR_{\Zn}(w) & = \frac{1}{n} \sum_{k = 1}^n \ell_k \bigl( \langle x_k, w \rangle \bigr) \\
\cR_{\Zn,h}(w) & = \cR_{\Zn}(w) + \frac{\la}{2} \| w - h \|^2.
\end{split}
\end{equation}

Intuitively, if the weight vectors $\wmu$ of the tasks sampled from $\env$ are close to each other, then 
running ERM with $h = \wbar \equiv \Exp_{\task \sim \env} \wmu$ 
should have a smaller transfer risk than running ERM with, for instance, $h=0$. We make this statement precise in 
\shortautoref{advantage_bias_sec}. Recently, a number of works have considered how to learn a good bias $h$ 
in a LTL setting, see e.g. \cite{pentina2014pac, denevi2018learning}. However, one drawback of these works is 
that they assume the ERM solution to be known exactly, without leveraging the interplay between the optimization 
and the generalization error. Furthermore, in LTL settings, data naturally arise in an online manner, both 
{\em between} and {\em within} tasks. 
Hence, an ideal LTL approach should focus on inner algorithms processing one single data point at time.

Motivated by the above reasoning, in this work, we propose to analyze an online learning 
algorithm that is computationally and memory efficient while retaining (on average 
with respect to the sampling of the data) the {\em same statistical guarantees} of the more expensive 
ERM estimator. Specifically, for a training dataset $\Zn \sim \task^n$, a regularization 
parameter $\la > 0$ and a bias vector $h \in \H$, we consider the learning algorithm defined as 
\begin{equation}
A_h^{\rm{SGD}}(\Zn) \equiv \bwhD (\Zn),
\end{equation}
where, $\bwhD (\Zn)$ is the average of the first $n$ iterations of 
\shortautoref{Within-Task Algorithm Online_paper},
in which, for any $k \in [n]$, we have introduced the notation $\ell_k(\cdot) = \ell(\cdot, y_k)$.

\begin{algorithm}[t]
\begin{algorithmic}
\State ~
   \State {\bfseries Input} ~~ $\lambda > 0$ regularization parameter, $h$ bias, $\task$ task
   \vspace{.2cm}
   \State {\bfseries Initialization} ~~ $\whD^{(1)} = h$ 
  \vspace{.2cm}
   \State {\bfseries For} ~~ $k=1$ to $n$
    \vspace{.1cm}
    \State \qquad Receive ~~ $(x_k, y_k) \sim \task$
     \vspace{.1cm}
     \State \qquad Build ~~ $\ell_{k,h}(\cdot) = \ell_k( \langle x_k, \cdot \rangle ) 
    + \displaystyle \frac{\la}{2} \| \cdot - h \|^2$
   \vspace{.1cm}
   \State \qquad Define ~~ $\gamma_k = 1/ (k \la)$
   \vspace{.2cm}
   \State \qquad Compute $u'_{k} \in \partial \ell_k ( \langle x_k, \whD^{(k)} \rangle )$
   \vspace{.2cm}
    \State \qquad Define ~~ $s_k = x_k u'_{k} + \la (\whD^{(k)} - h) \in \partial \ell_{k,h} ( \whD^{(k)})$
   \vspace{.0001cm}
   \State \qquad Update ~~ $\displaystyle \whD^{(k+1)} = \whD^{(k)} - \gamma_k s_k$ 
   \vspace{.2cm}
 \State {{\bfseries Return} ~~ $( \whD^{(k)})_{k=1}^{n+1}$,~ $\displaystyle \bwhD = \frac{1}{n} \sum_{i =1}^n \whD^{(i)}$} 
\State ~
\end{algorithmic}
\caption{Within-Task Algorithm: SGD on the Biased Regularized True Risk}\label{Within-Task Algorithm Online_paper}
\end{algorithm}

\shortautoref{Within-Task Algorithm Online_paper} coincides with 
online subgradient algorithm applied to the strongly convex function $\cR_{\Zn, h}$.
Moreover, thanks to the assumption that $\Zn \sim \task^n$, \shortautoref{Within-Task Algorithm Online_paper}
is equivalent to SGD applied to the regularized true risk
\begin{equation} \label{reg_true_risk}
\cR_{\task,h}(w) = \cR_\task(w) + \frac{\la}{2} \| w - h \|^2.
\end{equation}

Relying on standard online-to-batch argument, see e.g. \cite{cesa2004generalization,hazan2016introduction} and references therein, it is 
easy to link the true error of such an algorithm with the minimum of the regularized 
empirical risk, that is, $\cR_{\Zn, h}(\whD(\Zn))$. This fact is reported in the proposition 
below and it will be often used in our subsequent statistical 
analysis. We give a proof in \shortautorefsubapp{statistical_preliminaries_app} 
for completeness.

\begin{restatable}{proposition}{estimation} \label{estimation_error}
Let \shortautoref{bounded_inputs} and \shortautoref{lipschitz_loss} hold and let  
$\bwhD$ be the output of \shortautoref{Within-Task Algorithm Online_paper}. 
Then, we have that
\begin{equation}
\begin{split}
\Exp_{\Zn \sim \task^n}~\big[ \cR_\task& \bigl( \bwhD(\Zn) \bigr) - \cR_{\Zn,h}(\whD(\Zn)) \big] \le c_{n, \la} \\ \\
& c_{n, \la} = \frac{2 \rx^2 L^2 \bigl( {\rm{log}}(n) + 1 \bigr)}{\la n}.
\end{split}
\end{equation}
\end{restatable}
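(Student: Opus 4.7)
The plan is to combine an online-to-batch conversion with a standard regret analysis for online subgradient descent on strongly convex losses. Fix a task $\mu$ and write the per-round loss as $\ell_{k,h}(w) = \ell_k(\langle x_k,w\rangle) + \frac{\lambda}{2}\|w-h\|^2$, so that $\mathbb{E}_{Z_n\sim\mu^n}[\ell_{k,h}(w)] = \cR_{\mu,h}(w)$ pointwise, and observe that $\ell_{k,h}$ is $\lambda$-strongly convex with $s_k\in\partial\ell_{k,h}(w_h^{(k)})$ as in \shortautoref{Within-Task Algorithm Online_paper}.

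First I would bound the subgradient norms uniformly along the trajectory. Because $\gamma_k\lambda = 1/k$, the update can be rewritten as $w_h^{(k+1)} - h = \tfrac{k-1}{k}(w_h^{(k)}-h) - \tfrac{1}{k\lambda}x_k u'_k$, which together with the initialization $w_h^{(1)} = h$ telescopes into the closed form
\begin{equation*}
w_h^{(k+1)} - h \;=\; -\frac{1}{k\lambda}\sum_{j=1}^{k} x_j u'_j.
\end{equation*}
Using \shortautoref{bounded_inputs} and \shortautoref{lipschitz_loss}, $\|x_j u'_j\|\le RL$, so $\|w_h^{(k)}-h\|\le RL/\lambda$ for every $k$, and hence $\|s_k\|\le \|x_k u'_k\| + \lambda\|w_h^{(k)}-h\|\le 2RL$.

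Next I would invoke the standard regret bound for online subgradient descent on $\lambda$-strongly convex functions with step size $\gamma_k = 1/(k\lambda)$ (the Kakade--Hazan--Shalev-Shwartz analysis), which with $G = 2RL$ yields, for any comparator $w\in\mathbb{R}^d$,
\begin{equation*}
\sum_{k=1}^{n}\ell_{k,h}(w_h^{(k)}) - \sum_{k=1}^{n}\ell_{k,h}(w) \;\le\; \frac{(2RL)^2(1+\log n)}{2\lambda} \;=\; \frac{2R^2L^2(1+\log n)}{\lambda}.
\end{equation*}
Specialising to $w=\hat{w}_h(Z_n)$, the right-hand average equals $\cR_{Z_n,h}(\hat{w}_h(Z_n))$, so dividing by $n$ gives $\tfrac{1}{n}\sum_k \ell_{k,h}(w_h^{(k)}) - \cR_{Z_n,h}(\hat{w}_h(Z_n)) \le c_{n,\lambda}$.

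Finally I would perform the online-to-batch step. Since $w_h^{(k)}$ depends only on $z_1,\dots,z_{k-1}$, the tower property gives $\mathbb{E}_{Z_n\sim\mu^n}[\ell_{k,h}(w_h^{(k)})] = \mathbb{E}[\cR_{\mu,h}(w_h^{(k)})]$. Jensen's inequality applied to the convex function $\cR_{\mu,h}$ yields $\cR_{\mu,h}(\bar{w}_h(Z_n))\le \tfrac{1}{n}\sum_{k=1}^n \cR_{\mu,h}(w_h^{(k)})$. Combining these with the regret bound and discarding the nonnegative term $\tfrac{\lambda}{2}\|\bar{w}_h-h\|^2$ to pass from $\cR_{\mu,h}(\bar{w}_h)$ to $\cR_\mu(\bar{w}_h)$ delivers the claim. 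The only nontrivial step is the uniform bound on $\|w_h^{(k)}-h\|$, which is crucial because without it the subgradients $s_k$ could grow with $k$ and the $1/(k\lambda)$ step size would not suffice; the exact cancellation $\gamma_k\lambda=1/k$ and the initialization at $h$ are what make the closed form work.
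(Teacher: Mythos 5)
Your proof is correct and yields exactly the stated constant $c_{n,\la}$. The final assembly (tower property to replace $\ell_{k,h}(\whD^{(k)})$ by $\cR_{\task,h}(\whD^{(k)})$ in expectation, Jensen's inequality for the average iterate, discarding the nonnegative regularizer on the left, and comparing against $w=\whD(\Zn)$ in the regret bound) is the same online-to-batch argument the paper uses in \shortautorefsubapp{statistical_preliminaries_app}. Where you genuinely diverge is in how the regret bound itself is obtained: the paper derives \shortautoref{inner regret bound} as a corollary of a dual-decrease analysis (\shortautoref{dual_decrease}, adapted from \cite{shalev2009mind}), interpreting \shortautoref{Within-Task Algorithm Online_paper} as coordinate ascent on the instantaneous dual problems, whereas you run the standard primal regret analysis for online subgradient descent on $\la$-strongly convex losses with step size $1/(k\la)$, after first establishing the uniform bound $\|s_k\|\le 2\rx L$ via the closed form $\whD^{(k+1)}-h=-\frac{1}{k\la}\sum_{j\le k}x_j u'_j$ (which is precisely the KKT identity of \shortautoref{identification}, and both proofs bound $\|x_ku'_k+\la(\whD^{(k)}-h)\|^2\le 4\rx^2L^2$ in the same way). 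Your route is more elementary and self-contained for this particular proposition; the paper's dual route costs more machinery here but is not wasted effort, since the same dual-decrease inequality is reused to show that $\tilde u_h^{(n+1)}/n$ is an $\epsilon$-minimizer of the dual (\shortautoref{dual_accuracy}), which is the essential ingredient for the $\epsilon$-subgradient construction in \shortautoref{epsilon-subgradient} and cannot be recovered from a purely primal regret argument.
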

We remark that at this level of the analysis, one could also avoid the logarithmic factor in the above 
bound, see e.g. \cite{shamir2013stochastic,rakhlin2012making,lacoste2012simpler}. However, in order 
to not complicate our presentation and proofs, we avoid this refinement of the analysis.

In the next section 
we study the impact on the bias vector on the statistical performance of the inner algorithm.
Specifically, we investigate under which circumstances there is an advantage in perturbing the regularization 
in the objective used by the algorithm with an appropriate ideal bias term $h$, as opposed to fix $h = 0$. 
Throughout the paper, we refer to the choice $h = 0$ as independent task learning (ITL), although strictly 
speaking, when $h$ is fixed in advanced, then, SGD is applied on each task independently regardless of the value of $h$.
Then, in \shortautoref{estimating_bias_sec} we address the question of estimating this appropriate bias from the data.


\section{The Advantage of the Right Bias Term}
\label{advantage_bias_sec}

In this section, we study the statistical performance of the model $\bwhD$ returned by \autoref{Within-Task Algorithm Online_paper}, on average with 
respect to the tasks sampled from the environment $\env$, for different choices 
of the bias vector $h$. 
To present our observations, we require, for any $\task \sim \env$, that the corresponding true risk admits minimizers 
and we denote by $\wmu$ the minimum norm minimizer\footnote{This choice is made in order to simplify our presentation. 
However, our analysis holds for different choices of a minimizer $\wmu$, which may potentially improve our bounds.}. 
With these ingredients, we introduce the {\rm oracle}
\begin{equation*}
\eeinf = \EE_{\task \sim \env}~\cR_\task(\wmu),
\end{equation*}
representing the averaged minimum error over the environment of tasks, and, for a candidate bias $h$, 
we give a bound on the quantity $\E(\bwhD) - \eeinf$. This gap coincides with the 
averaged excess risk of algorithm \shortautoref{Within-Task Algorithm Online_paper} 
with bias $h$ over the environment of tasks, that is
\begin{equation*}
\een(\bwhD) - \eeinf = \Exp_{\task \sim \env}~\Exp_{\Zn \sim \task^n}~
\big[ \cR_\task \bigl( \bwhD(\Zn) \bigr) - \cR_\task(w_\task) \big].
\end{equation*}
Hence, this quantity is an indicator of the performance of the bias $h$ with 
respect to our environment. In the rest of this section, we study the above gap 
for a bias $h$ which is fixed and does not depend on the data. 
Before doing this, we introduce the notation
\begin{equation} \label{var_h}
{\rm Var}_h^2 = \frac{1}{2}~\EE_{\task \sim \env}~\| w_\task - h \big \|^2
\end{equation}
which is used throughout this work and we observe that
\begin{equation} \label{mean_w}
\wbar \equiv \Exp_{\task \sim \env} \wmu = \argmin_{h \in \Real^d}~{\rm Var}_h^2.
\end{equation}

\begin{restatable}[Excess Transfer Risk Bound for a Fixed Bias $h$]{theoremshortref}{excessriskITL} \label{excess_transfer_risk_fixed_h}
Let \shortautoref{bounded_inputs} and \shortautoref{lipschitz_loss} hold and let  
$\bwhD$ be the output of \shortautoref{Within-Task Algorithm Online_paper} with
regularization parameter
\begin{equation} \label{best_lambda}
\la = \frac{\rx L}{{\rm Var}_h}~\sqrt{\frac{2 \bigl( {\rm{log}}(n) + 1 \bigr)}{n}}. 
\end{equation}
Then, the following bound holds
\begin{equation}
\een(\bwhD) - \eeinf \le {\rm Var}_h~2 \rx L~\sqrt{\frac{2 \bigl( {\rm{log}}(n) + 1 \bigr)}{n}}.
\end{equation}
\end{restatable}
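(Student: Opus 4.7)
The plan is to decompose the excess transfer risk into an optimization/estimation term (handled by \shortautoref{estimation_error}) plus a bias term that measures how far the fixed $h$ is from the task-specific minimizers, and then balance the two by choosing $\la$ optimally.

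First I would start from \shortautoref{estimation_error} applied pointwise in $\task$, and take expectation over $\task \sim \env$: this gives
\begin{equation*}
\Exp_{\task \sim \env}~\Exp_{\Zn \sim \task^n}~\big[\cR_\task(\bwhD(\Zn)) - \cR_{\Zn, h}(\whD(\Zn))\big] \le c_{n,\la}.
\end{equation*}
Subtracting $\eeinf = \Exp_\task \cR_\task(\wmu)$ from both sides reduces the problem to bounding $\Exp_\task \Exp_{\Zn} \cR_{\Zn, h}(\whD(\Zn)) - \Exp_\task \cR_\task(\wmu)$.

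Next I would exploit that, by definition in Eq.~\eqref{RERM}, $\whD(\Zn)$ is the minimizer of $\cR_{\Zn, h}$, so in particular $\cR_{\Zn, h}(\whD(\Zn)) \le \cR_{\Zn, h}(\wmu) = \cR_{\Zn}(\wmu) + \frac{\la}{2}\|\wmu - h\|^2$ for every sample $\Zn$. Taking expectation over $\Zn \sim \task^n$ and using the unbiasedness identity $\Exp_{\Zn} \cR_{\Zn}(\wmu) = \cR_\task(\wmu)$, which holds because $\wmu$ does not depend on $\Zn$, yields
\begin{equation*}
\Exp_{\Zn} \cR_{\Zn, h}(\whD(\Zn)) - \cR_\task(\wmu) \le \frac{\la}{2} \|\wmu - h\|^2.
\end{equation*}
Averaging over $\task \sim \env$ and recalling the definition of ${\rm Var}_h^2$ in Eq.~\eqref{var_h}, the right-hand side becomes $\la\, {\rm Var}_h^2$. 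Combining with the first step produces the two-term bound
\begin{equation*}
\een(\bwhD) - \eeinf \le \frac{2\rx^2 L^2 ({\rm log}(n) + 1)}{\la n} + \la\, {\rm Var}_h^2.
\end{equation*}

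Finally I would optimize the right-hand side over $\la > 0$: this is the standard $A/\la + B\la$ problem with $A = 2\rx^2 L^2 (\log(n)+1)/n$ and $B = {\rm Var}_h^2$, minimized at $\la = \sqrt{A/B}$, which is exactly the choice prescribed in Eq.~\eqref{best_lambda} and gives minimum value $2\sqrt{AB} = 2\rx L \,{\rm Var}_h \sqrt{2(\log(n)+1)/n}$, as required. I do not anticipate a substantive obstacle; the main subtlety is only to be careful that $\wmu$ is deterministic given $\task$ (so the unbiasedness step is legitimate) and that \shortautoref{estimation_error} is being applied to the algorithm output before task-averaging, so that the inequality survives the outer expectation over $\env$.
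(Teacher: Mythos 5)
Your proposal is correct and follows essentially the same route as the paper's own proof: the same decomposition into the SGD/online-to-batch term bounded by \shortautoref{estimation_error} and the ERM bias term bounded by $\tfrac{\la}{2}\|\wmu - h\|^2$ via the minimizer property and the unbiasedness of $\cR_{\Zn}(\wmu)$, followed by averaging over $\task \sim \env$ and optimizing $\la$. The arithmetic of the $A/\la + B\la$ balancing matches Eq.~\eqref{best_lambda} and the stated bound exactly.
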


\begin{proof} 
For $\task \sim \env$, consider the following decomposition 
\begin{equation} \label{dec}
\Exp_{\Zn \sim \task^n}~\cR_\task(\bwhD(\Zn)) - \cR_\task(\wmu) \le \text{A} + \text{B},
\end{equation}
where, A and B are respectively defined by
\begin{equation} \label{est}
\begin{split}
\text{A} & = \Exp_{\Zn \sim \task^n}~\big[\cR_{\task}(\bwhD(\Zn)) - \cR_{\Zn,h}(\whD(\Zn)) \big] \\
\text{B} & = \Exp_{\Zn \sim \task^n}~\big[\cR_{\Zn,h}(\whD(\Zn)) - \cR_{\task}(w_{\task}) \big].
\end{split}
\end{equation}
In order to bound the term A, we use \shortautoref{estimation_error}.
Regarding the term B, we exploit the definition of the ERM algorithm
and the fact that, since $\wmu$ does not depend on $\Zn$, then $\cR_{\task,h}(\wmu) 
= \Exp_{\Zn \sim \task^n}~\cR_{\Zn,h}(\wmu)$. Consequently, we can upper bound the
term B as 
\begin{equation} \label{approx}
\begin {split}
& \Exp_{\Zn \sim \task^n}\big[\cR_{\Zn,h}(\whD(\Zn)) {-} \cR_{\task,h}(w_{\task}) \big] 
+ \frac{\la}{2} \big \| w_\task {-} h \big \|^2\\
& = \Exp_{\Zn \sim \task^n}\big[\cR_{\Zn,h}(\whD(\Zn)) {-} \cR_{\Zn,h}(w_{\task}) \big] 
+ \frac{\la}{2} \big \| w_\task {-} h \big \|^2 \\
& \le \frac{\la}{2} \big \| w_\task {-} h \big \|^2.
\end{split}
\end{equation}
The desired statement follows by combining the above bounds on the two terms, taking 
the average with respect to $\task \sim \env$ and optimizing over $\la$.
\end{proof}

\shortautoref{excess_transfer_risk_fixed_h} shows that the strength of the regularization that one should use in 
the within-task algorithm \shortautoref{Within-Task Algorithm Online_paper}, is inversely proportional to both the 
variance of the bias $h$ and the number of points in the datasets. This is exactly in line with the LTL aim: when solving 
each task is difficult, knowing a priori a good bias can bring a substantial benefit over learning with no bias.
To further investigate this point, in the following corollary, we specialize \shortautoref{excess_transfer_risk_fixed_h} to 
two particular choices of the bias $h$ which are particularly meaningful for our analysis. The first choice we make is $h = 0$, 
which coincides, as remarked earlier, with learning each task independently, while the second choice considers an ideal bias, 
namely, assuming that the transfer risk admits minimizer, we set $h=\hn \in \argmin_{h \in \Real^d}~\een(\bwhD)$.

\begin{corollary}[Excess Transfer Risk Bound for ITL and the Oracle] \label{excess_transfer_risk_ITL_Oracle}
Let \shortautoref{bounded_inputs} and \shortautoref{lipschitz_loss} hold. 
\begin{enumerate}
\item {\bf Independent Task Learning.}  Let $\bar{w}_0$ be the output of 
\shortautoref{Within-Task Algorithm Online_paper} with bias $h = 0$ 
and regularization parameter as in Eq. \eqref{best_lambda} with $h = 0$.
Then, the following bound holds
\begin{equation*}
\een(\bar{w}_0) - \eeinf \le {\rm Var}_{0}~2 \rx L~\sqrt{\frac{2 \bigl( {\rm{log}}(n) + 1 \bigr)}{n}}.
\end{equation*}
\item {\bf The Oracle.} Let $\bar{w}_{\hn}$ be the output of \shortautoref{Within-Task Algorithm Online_paper} 
with bias $h = \hn$ and regularization parameter as in Eq. \eqref{best_lambda} with $h = \wbar$. 
Then, the following bound holds
\begin{equation*}
\een(\bar{w}_{\hn}) - \eeinf \le {\rm Var}_{\wbar}~2 \rx L~\sqrt{\frac{2 \bigl( {\rm{log}}(n) + 1 \bigr)}{n}}.
\end{equation*}
\end{enumerate}
\end{corollary}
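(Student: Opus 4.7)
The plan is to derive both parts of the corollary as direct specializations of \shortautoref{excess_transfer_risk_fixed_h}, since that theorem already provides an excess transfer risk bound for any fixed bias $h$ together with the corresponding tuned regularization parameter.

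For Part 1 (\textbf{Independent Task Learning}), I would simply instantiate \shortautoref{excess_transfer_risk_fixed_h} at $h=0$. Under \shortautoref{bounded_inputs} and \shortautoref{lipschitz_loss}, running \shortautoref{Within-Task Algorithm Online_paper} with bias $h=0$ and with $\la$ chosen as in Eq.~\eqref{best_lambda} for $h=0$ satisfies all the hypotheses of the theorem, so the conclusion gives immediately
\begin{equation*}
\een(\bar{w}_0) - \eeinf \;\le\; {\rm Var}_{0}\,\,2 \rx L\,\sqrt{\tfrac{2 ( \log(n) + 1 )}{n}},
\end{equation*}
which is exactly the stated bound. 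No further argument is needed here.

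For Part 2 (\textbf{The Oracle}), I would proceed in two steps. First, I apply \shortautoref{excess_transfer_risk_fixed_h} with the specific data-independent choice $h=\wbar$ (and $\la$ tuned to $\wbar$ as in Eq.~\eqref{best_lambda}); since $\wbar$ is fixed and does not depend on the dataset, the theorem applies and yields
\begin{equation*}
\een(\bar{w}_{\wbar}) - \eeinf \;\le\; {\rm Var}_{\wbar}\,\,2 \rx L\,\sqrt{\tfrac{2 ( \log(n) + 1 )}{n}}.
\end{equation*}
Second, I use the definition of the oracle bias $\hn \in \argmin_{h \in \Real^d} \een(\bwhD)$, with the regularization parameter held fixed at the value tuned for $\wbar$. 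By optimality of $\hn$ over all $h$, and in particular against the candidate $h=\wbar$, we obtain $\een(\bar{w}_{\hn}) \le \een(\bar{w}_{\wbar})$. Chaining this inequality with the bound from the first step transfers the ${\rm Var}_{\wbar}$ rate to $\bar{w}_{\hn}$, giving exactly the claimed estimate.

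There is no real obstacle: the corollary is a bookkeeping consequence of \shortautoref{excess_transfer_risk_fixed_h}. The only subtle point worth highlighting in the write-up is that the oracle $\hn$ is defined implicitly by a minimization and is typically not computable, yet its excess transfer risk is still controlled by a \emph{computable} quantity because we can upper bound it by the risk attained at the mean $\wbar$. This is precisely what makes the ${\rm Var}_{\wbar}$ bound meaningful: it quantifies the best-case improvement that learning the bias could deliver, and, combined with Part~1, it identifies the regime ${\rm Var}_{\wbar} \ll {\rm Var}_0$ where LTL is genuinely advantageous over ITL.
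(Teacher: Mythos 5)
Your proposal is correct and follows essentially the same route as the paper: Part 1 is the direct instantiation of \shortautoref{excess_transfer_risk_fixed_h} at $h=0$, and Part 2 combines the optimality of $\hn$ (so that $\een(\bar{w}_{\hn}) \le \een(\bar{w}_{\wbar})$) with the same theorem applied at $h=\wbar$. Nothing is missing.
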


\begin{proof} The proof of the first statement directly follows from the application of \shortautoref{excess_transfer_risk_fixed_h} with $h = 0$. The second statement
is a direct consequence of the definition of $\hn$ implying $\een(\bar{w}_{\hn}) - 
\eeinf \le \een(\bar{w}_{\wbar}) - \eeinf$ and the application of 
\shortautoref{excess_transfer_risk_fixed_h} with $h = \wbar$ on the second term.
\end{proof}

From the previous bounds we can observe that, using the bias $h = \hn$ in the regularizer 
brings a substantial benefit with respect to the unbiased case when the number of points $n$ 
in each dataset in not very large 
(hence learning each task is quite difficult) and the variance of the weight tasks' vectors sampled
from the environment is much smaller than their second moment, i.e. when
\begin{equation*}
{\rm Var}_{\wbar}^2 = \frac{1}{2}~\EE_{\task \sim \env} ~ \| \wmu - \wbar \|^2 \ll  
\frac{1}{2}~\EE_{\task \sim \env} ~ \| \wmu \|^2 = {\rm Var}_0^2.
\end{equation*}
Driven by this observation, when the environment of tasks satisfies the above 
characteristics, we would like to take advance of this tasks' similarity. But, 
since in practice we are not able to explicitly compute $\hn$, in the following 
section we propose an efficient online LTL approach to estimate the bias 
directly from the observed data sequence of tasks.


\section{Estimating the Bias}
\label{estimating_bias_sec}
In this section, we study the problem of designing an estimator for the bias vector that is 
computed \emph{incrementally} from a set of observed $T$ tasks.

\subsection{The Meta-Objective}

Since direct optimization of the transfer risk is not feasible, a standard strategy used in LTL 
consists in introducing a proxy objective that is easier to handle, see e.g. \cite{maurer2005algorithmic,
maurer2009transfer,maurer2013sparse,maurer2016benefit, denevi2018, denevi2018learning}. 
In this paper, motivated by \shortautoref{estimation_error}, according to which
\begin{equation*}
\begin{split}
\Exp_{\Zn \sim \task^n}~\big[ \cR_\task \bigl( \bwhD(\Zn) \bigr) \big] \le
\Exp_{\Zn \sim \task^n}~\big[\cR_{\Zn,h}(\whD(\Zn)) \big] 
+ \frac{2 \rx^2 L^2 \bigl( {\rm{log}}(n) + 1 \bigr)}{\la n},
\end{split}
\end{equation*}
we substitute in the definition of the transfer risk the true risk of the algorithm
$\cR_\task \bigl( \bwhD(\Zn) \bigr)$ with the minimum of the regularized 
empirical risk
\begin{equation} \label{primal_min_val_paper}
\begin{split}
\LL_{\Zn}(h) = \min_{w \in \Real^d} ~ \cR_{\Zn,h}(w) = \cR_{\Zn,h}(\whD(\Zn)).
\end{split}
\end{equation}
This leads us to the following proxy for the transfer risk
\begin{equation} \label{proxy}
\hat \E_n(h) = \Exp_{\task \sim \env}~\Exp_{\Zn \sim \task^n}~ \LL_{\Zn}(h).
\end{equation} 

Some remarks about this choice are in order. First, convexity is usually a rare property in LTL. In our case, 
as described in the following proposition, the definition of the function $\LL_{\Zn}$ as the partial minimum 
of a jointly convex function, ensures convexity and other nice properties, such as differentiability and a 
closed expression of its gradient.

\begin{restatable}[Properties of $\LL_{\Zn}$]{proposition}{properties} \label{properties}
The function $\LL_{\Zn}$ in Eq. \eqref{primal_min_val_paper} is convex and $\la$-smooth 
over $\H$. Moreover, for any $h \in \H$, its gradient is given by the formula
\begin{equation} \label{exact_meta_gradient}
\nabla \LL_{\Zn}(h) = - \la \bigl( \whD(\Zn) - h \bigr), 
\end{equation}
where $\whD(\Zn)$ is the ERM algorithm in Eq. \eqref{RERM}. Finally, when 
\shortautoref{bounded_inputs} and \shortautoref{lipschitz_loss} hold, $\LL_{\Zn}$ 
is $L \rx$-Lipschitz.
\end{restatable}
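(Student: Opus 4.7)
The plan is to view $\LL_{\Zn}$ as the Moreau envelope of $\cR_{\Zn}$ with parameter $1/\la$ and read off each property from the calculus of partial minimization. For convexity, I would first note that the joint function $(w,h)\mapsto\cR_{\Zn}(w)+\tfrac{\la}{2}\|w-h\|^2$ is jointly convex: $\cR_{\Zn}$ is a sum of convex losses composed with linear maps (using the convexity part of the hypothesis on $\ell$), and $(w,h)\mapsto\|w-h\|^2$ is jointly convex by direct inspection of its Hessian. Since partial minimization in $w$ preserves convexity in the remaining variable, $\LL_{\Zn}$ is convex on $\H$. The $\la$-strong convexity of $\cR_{\Zn,h}$ in $w$ makes the minimizer $\whD(\Zn)$ unique; the envelope (Danskin) theorem then identifies the gradient as $\nabla_h\bigl[\cR_{\Zn}(w)+\tfrac{\la}{2}\|w-h\|^2\bigr]$ evaluated at $w=\whD(\Zn)$, which equals $-\la\bigl(\whD(\Zn)-h\bigr)$, yielding \eqref{exact_meta_gradient}.

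For the $\la$-smoothness, the route I would take is firm non-expansiveness of the prox map $h\mapsto\whD(\Zn)$. Writing $w_i=w_{h_i}(\Zn)$ for $i=1,2$, optimality gives $\la(h_i-w_i)\in\partial\cR_{\Zn}(w_i)$, so monotonicity of $\partial\cR_{\Zn}$ yields $\ip{h_1-h_2}{w_1-w_2}\ge\|w_1-w_2\|^2$. Then
\[
\|\nabla\LL_{\Zn}(h_1)-\nabla\LL_{\Zn}(h_2)\|^{2} \;=\; \la^{2}\,\|(h_1-h_2)-(w_1-w_2)\|^{2},
\]
and expanding the square and substituting the firm non-expansiveness bound cancels the cross term down to $\la^2(\|h_1-h_2\|^{2}-\|w_1-w_2\|^{2})\le \la^{2}\|h_1-h_2\|^{2}$. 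The delicate point—and essentially the only place where tightness matters—is getting the constant to be exactly $\la$ rather than the $2\la$ one would obtain from the triangle inequality applied to $\la(h_1-h_2)-\la(w_1-w_2)$; firm non-expansiveness is what achieves this sharp bound.

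For the $L\rx$-Lipschitz property under \shortautoref{bounded_inputs} and \shortautoref{lipschitz_loss}, I would bound $\|\nabla\LL_{\Zn}(h)\|=\la\,\|\whD(\Zn)-h\|$ using the same optimality condition, which places $\la(h-\whD(\Zn))$ inside $\partial\cR_{\Zn}(\whD(\Zn))$. Each summand $\ell_k(\ip{x_k}{\cdot})$ of $\cR_{\Zn}$ is $L\rx$-Lipschitz as a composition of the $L$-Lipschitz loss with a linear map of operator norm at most $\rx$, so every element of $\partial\cR_{\Zn}(\whD(\Zn))$ has norm at most $L\rx$. Hence $\|\nabla\LL_{\Zn}(h)\|\le L\rx$ uniformly in $h$, which is equivalent to $L\rx$-Lipschitz continuity of $\LL_{\Zn}$. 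The only real obstacle in the whole argument is the sharp constant in the smoothness step; all other steps are direct consequences of strong convexity, joint convexity of the quadratic penalty, and Danskin's theorem.
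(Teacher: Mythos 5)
Your proof is correct, but it follows a genuinely different (and more self-contained) route than the paper. For convexity, the gradient formula, and $\la$-smoothness, the paper simply invokes the known result on Moreau envelopes (\cite[Prop.~12.29]{bauschke2011convex}, restated in its Example on the Moreau envelope), whereas you actually derive these facts: convexity via partial minimization of a jointly convex function, the gradient via the envelope/Danskin argument, and the sharp smoothness constant via firm non-expansiveness of the prox map $h \mapsto \whD(\Zn)$ obtained from monotonicity of $\partial \cR_{\Zn}$ --- your cancellation of the cross term is exactly what yields $\la$ rather than $2\la$, and the computation checks out. For the $L\rx$-Lipschitz claim the two arguments diverge more substantially: the paper changes variables $v = w - h$, uses $|\min f_1 - \min f_2| \le \sup |f_1 - f_2|$, and bounds the difference of the shifted empirical risks term by term with \shortautoref{lipschitz_loss} and Cauchy--Schwarz; you instead observe that the optimality condition places $-\nabla\LL_{\Zn}(h) = \la(h - \whD(\Zn))$ inside $\partial\cR_{\Zn}(\whD(\Zn))$, whose elements are bounded by $L\rx$, so the gradient is uniformly bounded. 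Your route is arguably more economical once the gradient formula is in hand (the Lipschitz bound falls out of the KKT condition for free), while the paper's sup-based argument is more elementary in that it needs neither differentiability of $\LL_{\Zn}$ nor the optimality conditions and would survive even without uniqueness of the minimizer. Both are valid proofs of the stated constants.
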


The above statement is a known result in the optimization community,
see e.g. \cite[Prop. $12.29$]{bauschke2011convex} and
\shortautoref{properties_meta_obj_app} for more details.
In order to minimize the proxy objective in Eq. \eqref{proxy}, one standard choice done in stochastic 
optimization, and also adopted in this work, is to use first-order methods, requiring the computation of 
an unbiased estimate of the gradient of the stochastic objective. 
In our case, according to the above proposition, this step would require 
computing the minimizer of the regularized empirical problem in Eq. \eqref{primal_min_val_paper} exactly.
A key observation of our work is to show below that we can easily design a ``satisfactory'' approximation 
(see the last paragraph in \shortautoref{estimating_bias_sec}) of its gradient, just substituting the 
minimizer $\whD(\Zn)$ in the expression of the gradient in Eq. \eqref{exact_meta_gradient} with the 
last iterate  $\whD^{(n+1)}(\Zn)$ of \shortautoref{Within-Task Algorithm Online_paper}. 
An important aspect to stress here is the fact that this strategy does not require any additional 
computational effort. Formally, this reasoning is linked to the concept of $\epsilon$-subgradient of
a function. We recall that, for a given convex, proper and closed function $f$
and for a given point $\hat h \in \dom(f)$ in its domain, $u$ is an $\epsilon$-subgradient 
of $f$ at $\hat h$, if, for any $h$, $f(h) \ge f( \hat h) + \langle u, h-\hat h \rangle - \epsilon$. \\

\begin{restatable}[An $\epsilon$-Subgradient for $\LL_{\Zn}$]{proposition}{epsilonsubgradient} 
\label{epsilon-subgradient}
Let $\whD^{(n+1)}(\Zn)$ be the last iterate of \shortautoref{Within-Task Algorithm Online_paper}.
Then, under \shortautoref{bounded_inputs} and \shortautoref{lipschitz_loss}, the vector
\begin{equation} \label{approx_meta_gradient}
\hat \nabla \LL_{\Zn}(h) = - \la \bigl( \whD^{(n+1)}(\Zn) - h \bigr)
\end{equation}
is an $\epsilon$-subgradient of $\LL_{\Zn}$ at point $h$, where $\epsilon$ is such that
\begin{equation} \label{pp}
\Exp_{\Zn \sim \task^n}~\big[ \epsilon \big] \le \frac{2 \rx^2 L^2 \bigl( {\rm{log}}(n) + 1 \bigr)}{\la n}.
\end{equation}
Moreover, introducing $\Delta_{\Zn}(h) = \nabla \LL_{\Zn}(h) - \hat \nabla \LL_{\Zn}(h)$, 
\begin{equation} \label{ppp}
\Exp_{\Zn \sim \task^n}~\big \| \Delta_{\Zn}(h) \big \|^2 
\le \frac{4 \rx^2 L^2 \bigl( {\rm{log}}(n) + 1 \bigr)}{n}.
\end{equation}
\end{restatable}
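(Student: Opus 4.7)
The plan is to reduce both bounds to a single non-negative quantity and then invoke the SGD analysis behind \autoref{estimation_error}.

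First, I make the error $\Delta_{\Zn}(h)$ explicit by combining the closed form of $\nabla\LL_{\Zn}$ from \autoref{properties} with the definition of $\hat\nabla\LL_{\Zn}$:
$$\Delta_{\Zn}(h) \;=\; \nabla\LL_{\Zn}(h) \,-\, \hat\nabla\LL_{\Zn}(h) \;=\; \la\,\bigl(\whD^{(n+1)}(\Zn) - \whD(\Zn)\bigr).$$
Because $\cR_{\Zn,h}$ is $\la$-strongly convex with minimizer $\whD(\Zn)$, the standard quadratic lower bound gives
$$\tfrac{\la}{2}\,\bigl\|\whD^{(n+1)}(\Zn) - \whD(\Zn)\bigr\|^2 \;\le\; \cR_{\Zn,h}\bigl(\whD^{(n+1)}(\Zn)\bigr) - \LL_{\Zn}(h).$$
Setting $\epsilon := \cR_{\Zn,h}(\whD^{(n+1)}(\Zn)) - \LL_{\Zn}(h)\ge 0$, this immediately yields $\|\Delta_{\Zn}(h)\|^2 \le 2\la\,\epsilon$, so \eqref{ppp} will follow from the bound on $\EE[\epsilon]$ in \eqref{pp}.

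Next, for $\EE[\epsilon]$, I would rerun the online-to-batch analysis behind \autoref{estimation_error}. \autoref{Within-Task Algorithm Online_paper} is online subgradient descent on the $\la$-strongly convex function $\cR_{\Zn,h}$ with step size $\gamma_k=1/(k\la)$. A direct induction on the update (using $|u'_k|\le L$ and $\|x_k\|\le\rx$) shows $\|\whD^{(k)}-h\|\le L\rx/\la$, and hence each subgradient $s_k$ is bounded in norm by $2L\rx$. The standard logarithmic strongly-convex regret bound, together with a last-iterate argument in the spirit of Shamir--Zhang, then produces $\EE[\epsilon]\le c_{n,\la}=2\rx^2 L^2(\log n + 1)/(\la n)$, which is \eqref{pp}; combining with $\|\Delta_{\Zn}(h)\|^2\le 2\la\epsilon$ delivers \eqref{ppp}.

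Finally, for the $\epsilon$-subgradient statement itself, the key algebraic identity (obtained by expanding $\|\whD^{(n+1)}(\Zn) - h'\|^2$ around $h$) is
$$\cR_{\Zn,h'}\bigl(\whD^{(n+1)}(\Zn)\bigr) \;=\; \cR_{\Zn,h}\bigl(\whD^{(n+1)}(\Zn)\bigr) \,+\, \langle \hat\nabla\LL_{\Zn}(h),\, h'-h\rangle \,+\, \tfrac{\la}{2}\|h'-h\|^2.$$
Together with $\LL_{\Zn}(h') \le \cR_{\Zn,h'}(\whD^{(n+1)}(\Zn))$ and $\cR_{\Zn,h}(\whD^{(n+1)}(\Zn)) = \LL_{\Zn}(h)+\epsilon$, this realises $\hat\nabla\LL_{\Zn}(h)$ as the gradient at $h$ of a quadratic upper model of $\LL_{\Zn}$ whose slack at $h$ is exactly $\epsilon$. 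The main obstacle I foresee is converting this identity into the strict $\epsilon$-subgradient inequality with $\epsilon$ independent of $h'$; this step relies on the Moreau-envelope structure of $\LL_{\Zn}$ and on the a-priori control $\|\hat\nabla\LL_{\Zn}(h)\|\le L\rx$ (already ensured by $\|\whD^{(k)}-h\|\le L\rx/\la$), which keeps $\hat\nabla\LL_{\Zn}(h)$ in the effective domain of the conjugate of $\LL_{\Zn}$ and hence guarantees a finite Fenchel--Young gap of size $\epsilon$.
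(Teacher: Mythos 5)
Your strong-convexity step is sound: with $\epsilon$ defined as the primal suboptimality $\cR_{\Zn,h}(\whD^{(n+1)})-\LL_{\Zn}(h)$, the bound $\|\Delta_{\Zn}(h)\|^2\le 2\la\epsilon$ does follow, and it mirrors the mechanism the paper uses via Eq.~\eqref{discrepancy_gradients} in \exref{epsilon_subgradient_Moreau_envelope}. The genuine gap is in the central claim, the $\epsilon$-subgradient property itself. Your ``key algebraic identity'' combined with $\LL_{\Zn}(h')\le\cR_{\Zn,h'}(\whD^{(n+1)})$ yields
\begin{equation*}
\LL_{\Zn}(h')\;\le\;\LL_{\Zn}(h)+\epsilon+\big\langle\hat\nabla\LL_{\Zn}(h),\,h'-h\big\rangle+\tfrac{\la}{2}\|h'-h\|^2,
\end{equation*}
an \emph{upper} bound on $\LL_{\Zn}(h')$, whereas the definition of an $\epsilon$-subgradient (\defref{def_epsilon_sub}) requires the \emph{lower} bound $\LL_{\Zn}(h')\ge\LL_{\Zn}(h)+\langle u,h'-h\rangle-\epsilon$. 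This is not a sign slip that can be patched: primal $\epsilon$-suboptimality of a point $\hat w$ is in general insufficient to certify that $-\la(\hat w-h)\in\partial_{\epsilon'}\LL_{\Zn}(h)$ for any finite $\epsilon'$. By \lemref{duality_epsilon_subgradient}, membership forces $\LL_{\Zn}^*(u)=(g\circ\Xn)^*(u)+\frac{1}{2\la}\|u\|^2<\infty$; for a Lipschitz loss $(g\circ\Xn)^*$ has bounded domain, and already for $f(w)=|w|$ in one dimension a primal-$\epsilon$-suboptimal $\hat w$ can have $-\la(\hat w-h)$ strictly outside that domain. Equivalently, in the characterization of \exref{epsilon_subgradient_Moreau_envelope}, strong convexity gives you the ball condition around $-\la(\whD(\Zn)-h)$ but not the requirement that the vector lie in $\partial_{\epsilon-\alpha}\cR_{\Zn}(\whD(\Zn))$.

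What the paper actually exploits, and what your plan misses, is the dual structure of the last iterate: by \lemref{identification}, $-\la\bigl(\whD^{(n+1)}-h\bigr)=\Xn\trans\tilde u_h^{(n+1)}/n$, where $\tilde u_h^{(n+1)}$ stacks the loss subgradients $u_k'\in\partial\ell_k(\langle x_k,\whD^{(k)}\rangle)$ and is therefore dual feasible. The dual-decrease bound (\propref{dual_decrease} and \corref{dual_accuracy}) shows this vector is an $\epsilon$-minimizer of the dual objective $\duhD$, and \propref{approx_subgradient_prop} converts dual $\epsilon$-optimality into the $\epsilon$-subgradient property via Fenchel duality; the relevant $\epsilon$ is a dual gap controlled by the online regret, not the primal suboptimality of $\whD^{(n+1)}$. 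A secondary problem: even your bound $\EE[\epsilon]\le c_{n,\la}$ for the last-iterate primal suboptimality does not follow from a Shamir--Zhang argument alone, because \shortautoref{Within-Task Algorithm Online_paper} is a single pass whose step-$k$ subgradient is conditionally unbiased for the \emph{population} regularized risk $\cR_{\task,h}$, not for $\cR_{\Zn,h}$; relating $\Exp\big[\cR_{\Zn,h}(\whD^{(n+1)})-\LL_{\Zn}(h)\big]$ to that analysis would require an additional stability/generalization step, which your outline does not supply.
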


The above result is a key tool in our analysis. The proof requires some preliminaries on the $\epsilon$-subdifferential of a 
function (see \shortautoref{epsilon_subgradients_app}) and introducing the dual formulation of both the within-task learning 
problem and \shortautoref{Within-Task Algorithm Online_paper} (see \shortautoref{primal_dual_formulation_app} 
and \shortautoref{primal_dual_formulation_SGD_app}, respectively). With these two ingredients, the proof of the statement is deduced in \shortautorefsubapp{proof_prop_epsilon_subgradient_app} by the application of a more general result reported in \shortautoref{meta_approx_subgradient_app}, describing how an $\epsilon$-minimizer of the dual of the within-task learning problem can be exploited in order to build an $\epsilon$-subgradient of the meta-objective function $\LL_{\Zn}$. We stress that 
this result could be applied to more general class of algorithms, going beyond  
\shortautoref{Within-Task Algorithm Online_paper} considered here.


\subsection{The Meta-Algorithm to Estimate the Bias $h$} 

In order to estimate the bias $h$ from the data, we apply SGD to the
stochastic function $\hat \E_n$ introduced in Eq. \eqref{proxy}.
More precisely, in our setting, the sampling of a ``meta-point'' corresponds to
the incremental sampling of a dataset from the environment\footnote{More precisely we 
first sample a distribution $\mu$ from $\rho$ and then a dataset $Z_n \sim \mu^n$.}.  
We refer to \shortautoref{OGDA2_paper} for more details. In particular,
we propose to take the estimator $\bar h_T$ obtained by averaging the
iterations returned by \shortautoref{OGDA2_paper}. An important feature to stress 
here is the fact that the meta-algorithm uses $\epsilon$-subgradients of the function 
$\LL_{\Zn}$ which are computed as described above. Specifically, for any 
$t \in [T]$, we define
\begin{equation} \label{approx_meta_gradient_t}
\hat \nabla \LL_{\Zn^{(t)}}(h^{(t)}) = - \la \bigl( w_{h^{(t)}}^{(n+1)}(\Zn^{(t)}) - h^{(t)} \bigr),
\end{equation}
where $w_{h^{(t)}}^{(n+1)}$ is the last iterate of \shortautoref{Within-Task Algorithm Online_paper}
applied with the current bias $h^{(t)}$ and the dataset $\Zn^{(t)}$. To simplify the presentation, 
throughout this work, we use the short-hand notation 
\begin{equation*}
\LL_ t (\cdot) = \LL_{\Zn^{(t)}}(\cdot), ~
\nabla^{(t)} = \nabla \LL_t(h^{(t)}), ~
\hat \nabla^{(t)} = \hat \nabla \LL_t(h^{(t)}).
\end{equation*}
Some technical observations follows. First, we stress that \shortautoref{OGDA2_paper} processes 
one single instance at the time, without the need to store previously encountered data points, neither 
across the tasks nor within them. Second, the implementation of \shortautoref{OGDA2_paper}
does not require computing the meta-objective $\LL_{\Zn}$, which would increase the computational
effort of the entire scheme. The rest of this section is devoted to the statistical analysis 
of \shortautoref{OGDA2_paper}.

\begin{algorithm}[t]
\caption{Meta-Algorithm, SGD on $\hat \E$ with $\epsilon$-Subgradients}\label{OGDA2_paper}
\begin{algorithmic}
\State ~
   \State {\bfseries Input} $\gamma > 0$ step size, $\la > 0$ inner regularization parameter, $\env$ meta-distribution
   \vspace{.2cm}
   \State {\bfseries Initialization} ~~ $h^{(1)} = 0 \in \H$
  \vspace{.2cm}
   \State {\bfseries For} ~~ $t=1$ to $T$
   \vspace{.1cm}
   \State \qquad ~~ Receive ~~ $\task_t \sim \env$, $\Zn^{(t)} \sim \task_t^n$
   \vspace{.2cm}
   \State \qquad ~~ Run the inner algorithm \shortautoref{Within-Task Algorithm Online_paper} and approximate
    \vspace{.05cm}
   \State \qquad ~~ the gradient 
   $\hat \nabla^{(t)} \approx \nabla^{(t)}$ by Eq. \eqref{approx_meta_gradient_t}
   \vspace{.2cm}
   \State \qquad ~~ Update ~~ $h^{(t+1)} = h^{(t)} - \gamma \hat \nabla^{(t)}$
   \vspace{.2cm}
 \State {{\bfseries Return} ~~ $( h^{(t)})_{t=1}^{T+1}$ and $\displaystyle \bar h_T = \frac{1}{T} \sum_{t=1}^T h^{(t)}$} 
\State ~
\end{algorithmic}
\end{algorithm}


\subsection{Statistical Analysis of the Meta-Algorithm}

\noindent In the following theorem we study the statistical performance of the bias
$\bar h_T$ returned by \shortautoref{OGDA2_paper}.
More precisely we bound the excess transfer risk of the inner SGD algorithm ran 
with this biased term learned by the meta-algorithm.

\begin{restatable}[Excess Transfer Risk Bound for the Bias $\bar h_T$ Estimated by \shortautoref{OGDA2_paper}]
{theoremshortref}{excessriskLTL} \label{excess_transfer_risk_LTL}
Let \shortautoref{bounded_inputs} and \shortautoref{lipschitz_loss} hold and
let $\bar h_T$ be the output of \shortautoref{OGDA2_paper} with step size
\begin{equation}
\gamma = \displaystyle \frac{\sqrt{2} \| \wbar \|}{L \rx}~\sqrt{
\Bigl(T \Bigl( 1 + \frac{4 \bigl( {\rm{log}}(n) + 1 \bigr)}{n} \Bigr) \Bigr)^{-1}}.
\end{equation}
Let $\bar w_{\bar h_T}$ be the output of \shortautoref{Within-Task Algorithm Online_paper} 
with bias $h = \bar h_T$ and regularization parameter 
\begin{equation}
\la = \frac{2 \rx L}{{\rm Var}_{\wbar}}~\sqrt{\frac{{\rm{log}}(n) + 1}{n}}.
\end{equation}
Then, the following bound holds
\begin{equation*}
\begin{split}
\Exp \left[ \een(\bar w_{\bar h_T}) \right] - \eeinf \le {\rm Var}_{\wbar} ~ 4 \rx L  
~\sqrt{\frac{{\rm{log}}(n) + 1}{n}}
+ \| \wbar \| ~ L \rx~\sqrt{2 \Bigl( 1 + \frac{4 \bigl( {\rm{log}}(n) + 1 \bigr)}{n} \Bigr) \frac{1}{T}},
\end{split}
\end{equation*}
where the expectation above is with respect to the sampling 
of the datasets $\Zn^{(1)}, \dots, \Zn^{(T)}$ from the environment $\env$.
\end{restatable}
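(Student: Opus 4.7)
The plan is to decompose the excess transfer risk into three pieces that can each be controlled separately: (i) an optimization/estimation gap within each task, (ii) the meta-optimization error of Alg.~\ref{OGDA2_paper} measured against the population mean $\wbar$, and (iii) an approximation gap expressing how close $\hat\E_n(\wbar)$ is to the oracle $\eeinf$. Concretely, by \shortautoref{estimation_error} applied conditionally on $\bar h_T$ (which depends only on the meta-training datasets and is independent of the fresh task $\task$ and its dataset $\Zn$), I would first obtain
\begin{equation*}
\Exp\bigl[\een(\bar w_{\bar h_T})\bigr] \le \Exp\bigl[\hat\E_n(\bar h_T)\bigr] + c_{n,\la}.
\end{equation*}
Then I would write $\Exp[\hat\E_n(\bar h_T)]-\eeinf = \bigl(\Exp[\hat\E_n(\bar h_T)]-\hat\E_n(\wbar)\bigr)+\bigl(\hat\E_n(\wbar)-\eeinf\bigr)$.

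For the \emph{approximation} term, I would exploit the definition $\LL_{\Zn}(\wbar)=\min_w \cR_{\Zn,\wbar}(w)\le \cR_{\Zn,\wbar}(w_\task)$, take expectations and use independence of $w_\task$ from $\Zn$ to get $\hat\E_n(\wbar)\le \eeinf + \la\,\mathrm{Var}_{\wbar}^2$. For the \emph{meta-optimization} term, I would run a standard SGD regret argument on the iterates $h^{(t)}$: from $h^{(t+1)}=h^{(t)}-\gamma\hat\nabla^{(t)}$ and $h^{(1)}=0$, a telescoping of $\|h^{(t+1)}-\wbar\|^2$ gives
\begin{equation*}
\sum_{t=1}^T \langle \hat\nabla^{(t)}, h^{(t)}-\wbar\rangle \le \frac{\|\wbar\|^2}{2\gamma}+\frac{\gamma}{2}\sum_{t=1}^T\|\hat\nabla^{(t)}\|^2.
\end{equation*}
Combining with the $\epsilon$-subgradient inequality $\LL_t(h^{(t)})-\LL_t(\wbar)\le \langle\hat\nabla^{(t)},h^{(t)}-\wbar\rangle+\epsilon_t$ from \shortautoref{epsilon-subgradient}, taking expectations, and using independence of $h^{(t)}$ from $\Zn^{(t)}$ together with convexity of $\hat\E_n$ (Jensen on $\bar h_T=\frac1T\sum_t h^{(t)}$), I get the online-to-batch bound
\begin{equation*}
\Exp[\hat\E_n(\bar h_T)]-\hat\E_n(\wbar)\le \frac{\|\wbar\|^2}{2\gamma T}+\frac{\gamma}{2T}\sum_{t}\Exp\|\hat\nabla^{(t)}\|^2+\frac{1}{T}\sum_t\Exp[\epsilon_t].
\end{equation*}

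To close the bound I would plug in the two estimates from \shortautoref{epsilon-subgradient}: $\Exp[\epsilon_t]\le c_{n,\la}$, and, writing $\hat\nabla^{(t)}=\nabla^{(t)}-\Delta^{(t)}$ with $\|\nabla^{(t)}\|\le L\rx$ (by Lipschitzness of $\LL_{\Zn}$ from \shortautoref{properties}) and $\Exp\|\Delta^{(t)}\|^2\le 4\rx^2 L^2(\log n+1)/n$, the bound $\Exp\|\hat\nabla^{(t)}\|^2\le 2L^2\rx^2\bigl(1+4(\log n+1)/n\bigr)$. Assembling everything gives
\begin{equation*}
\Exp[\een(\bar w_{\bar h_T})]-\eeinf \le \la\,\mathrm{Var}_{\wbar}^2+\frac{4\rx^2L^2(\log n+1)}{\la n}+\frac{\|\wbar\|^2}{2\gamma T}+\gamma L^2\rx^2\Bigl(1+\tfrac{4(\log n+1)}{n}\Bigr),
\end{equation*}
and optimizing the first two terms over $\la$ yields the prescribed $\la$ and the $\mathrm{Var}_{\wbar}\,4\rx L\sqrt{(\log n+1)/n}$ contribution, while balancing the last two terms over $\gamma$ (up to the factor-of-two choice stated in the theorem) yields the $\|\wbar\|$-dependent $O(1/\sqrt{T})$ term.

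The main obstacle I anticipate is not the individual computations but correctly orchestrating the expectations in the meta-SGD analysis: one must ensure that the $\epsilon$-subgradient property holds \emph{pointwise} in the randomness used to form $h^{(t)}$, that $h^{(t)}$ is measurable with respect to the first $t-1$ datasets only (so $\Exp_{\Zn^{(t)}}[\LL_t(h^{(t)})\mid h^{(t)}]=\hat\E_n(h^{(t)})$), and that the second-moment decomposition $\Exp\|\hat\nabla^{(t)}\|^2\le 2\|\nabla^{(t)}\|^2+2\Exp\|\Delta^{(t)}\|^2$ uses the in-expectation bound of \shortautoref{epsilon-subgradient} rather than a worst-case one. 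Once these measurability/independence hooks are in place, the rest is a careful but mechanical parameter optimization.
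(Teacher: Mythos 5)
Your proposal is correct and follows essentially the same route as the paper: the same three-term decomposition (within-task estimation gap via \shortautoref{estimation_error}, meta-SGD regret against $\wbar$ with $\epsilon$-subgradients as in \shortautoref{regret_OGDA_proj}, and the approximation gap $\hat\E_n(\wbar)-\eeinf\le\la\,{\rm Var}_{\wbar}^2$ from Eq.~\eqref{approx}), assembled and optimized over $\la$ in the same way. Your parenthetical about the factor-of-two mismatch between the stated step size and the exact minimizer of the $\gamma$-trade-off is a fair observation about the paper's own statement, not a gap in your argument.
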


\begin{proof} 
We consider the following decomposition
\begin{equation}
\Exp \left[ \een(\bar w_{\bar h_T}) \right]- \eeinf \le \text{A} + \text{B} + \text{C},
\end{equation}
where we have defined the terms
\begin{equation}
\begin{split}
\text{A} & = \een(\bar w_{\bar h_T}) - \hat \E_n(\bar h_T) \\
\text{B} & = \Exp ~ \hat \E_n(\bar h_T) -  \hat \E_n(\wbar) \\
\text{C} & = \hat \E_n(\wbar) - \eeinf.
\end{split}
\end{equation}
Now, in order to bound the term A, noting that
\begin{equation*}
\text{A} = \Exp_{\task \sim \env}~\Exp_{\Zn \sim \task^n}~\big[ \cR_\task \bigl( \bar{w}_{\bar h_T}(\Zn) \bigr)
- \cR_{\Zn, \bar h_T}(w_{\bar h_T}(\Zn)) \big],
\end{equation*}
we use \shortautoref{estimation_error} with $h = \bar h_T$ and, then,
we take the average on $\task \sim \env$. As regards the term C, we apply 
the inequality given in Eq. \eqref{approx} with $h = \wbar$ and we again
average with respect to $\task \sim \env$. Finally, the term B is the convergence 
rate of \shortautoref{OGDA2_paper} and its study requires analyzing the error that we 
introduce in the meta-gradients by \shortautoref{epsilon-subgradient}. The bound
we use for this term is the one described in \shortautoref{regret_OGDA_proj} (see \shortautorefsubapp{estimated_h_generalization_bound_proof_app}) with 
$\hat h = \wbar$. The result now follows by combining the bounds on the three terms and 
optimizing over $\la$.
\end{proof}

We remark that the bound in \shortautoref{excess_transfer_risk_LTL} is stated with respect 
to the mean $\wbar$ of the tasks' vector only for simplicity, and the same result holds for a generic bias vector $h \in \H$. Specializing this rate to ITL ($h = 0)$ recovers the rate in 
\shortautoref{excess_transfer_risk_ITL_Oracle} for ITL (up to a contant $2$). 
Consequently, even when the tasks are not “close to each other” (i.e. their 
variance ${\rm Var_{\bar w}}^2$ is high), our approach is not prone to negative-transfer, 
since, in the worst case, it recovers the ITL performance. 
Moreover, the above bound is coherent with the state-of-the-art LTL bounds given in 
other papers studying other variants of Ivanov or Tikhonov regularized empirical risk 
minimization algorithms, see e.g. \cite{maurer2005algorithmic, maurer2009transfer,maurer2013sparse,maurer2016benefit}. 
Specifically, in our case, the bound has the form
\begin{equation}
\mathcal{O}\Bigl( \frac{{\rm Var}_{\wbar}}{\sqrt{n}} \Bigr)+ \mathcal{O}\Bigl( \frac{1}{\sqrt{T}} \Bigr),
\end{equation} 
where ${\rm Var}_{\wbar}$ reflects the advantage in exploiting the relatedness among 
the tasks sampled from the environment $\env$. More precisely, in \shortautoref{advantage_bias_sec} 
we noticed that, if the variance of the weight vectors of the tasks sampled from our 
environment is significantly smaller than their second moment, running \shortautoref{Within-Task Algorithm Online_paper} 
with the ideal bias $h = \hn$ on a future task brings a significant improvement in comparison to the
unbiased case. One natural question arising at this point of the presentation is whether, 
under the same conditions on the environment, the same improvement is obtained by running 
\shortautoref{Within-Task Algorithm Online_paper} 
with the bias vector $h = {\bar h}_T$ returned by our online meta-algorithm in 
\shortautoref{OGDA2_paper}. Looking at the bound in 
\shortautoref{excess_transfer_risk_LTL}, we can say that,
when the number of training tasks $T$ used to estimate the 
bias $\bar h_T$ is sufficiently large, the above question has
a positive answer and our LTL approach is effective.

In order to have also a more precise benchmark for the biased setting
considered in this work, 
in \shortautorefsubapp{ERM_analysis} we have repeated the 
statistical study described in the paper also for the more expensive ERM
algorithm described in Eq. \eqref{RERM}. In this case, we assume to have 
an oracle providing us with this exact estimator, ignoring  
any computational costs. As before, we have performed the analysis 
both for a fixed bias and the one estimated from the data which is returned by 
running \shortautoref{OGDA2_paper}. We remark that, thanks to the
assumption on the oracle, in this case, \shortautoref{OGDA2_paper} is
assumed to run with exact meta-gradients. Looking at the results 
reported in \shortautorefsubapp{ERM_analysis}, we immediately see that, 
up to constants and logarithmic factors, the LTL bounds we have stated in the 
paper for the low-complexity SGD family are equivalent to the ones we have 
reported in \shortautorefsubapp{ERM_analysis} for the more 
expensive ERM family. 

All the above facts justify the informal statement given before 
\shortautoref{epsilon-subgradient} according to which the trick used to compute 
the approximation of the meta-gradient by using the last iterate  of the inner 
algorithm, not only, does not require additional effort, but it is also accurate enough 
from the statistical view point, matching a state-of-the-art bound 
for more expensive within-task algorithms based on ERM.

We conclude by observing that, exploiting the explicit form of the
error on the meta-gradients, it is possible to extend  the analysis presented 
in \shortautoref{excess_transfer_risk_LTL} above to the adversarial case,
where no assumption on the data generation process is made. The result
in our statistical setting can be derived from this more general adversarial 
setting by the application of two online-to-batch conversions, one within-task
and one outer-task.


\section{Experiments}
\label{exps_sec}

In this section, we test the effectiveness of the LTL approach proposed in this paper on synthetic and real data 
\footnote{The code used for the following experiments is available at \emph{https://github.com/prolearner/onlineLTL}}.
In all experiments, the regularization parameter $\la$ and the step-size $\gamma$ were tuned by validation, see \shortautoref{validation_app} for more details. 

{\bf Synthetic Data.} We considered two different settings, regression with the absolute loss and binary classification with 
the hinge loss. In both cases, we generated an environment of tasks in which SGD with the right bias is expected to bring 
a substantial benefit in comparison to the unbiased case. Motivated by our observations in \shortautoref{advantage_bias_sec},
we generated linear tasks with weight vectors characterized by a variance which is significantly smaller than their second moment.
Specifically, for each task $\task$, we created a weight vector $\wmu$ from a Gaussian distribution with mean $\wbar$ given by 
the vector in $\mathbb{R}^d$ with all components equal to $4$ and standard deviation ${\rm Var}_{\wbar} = 1$. Each task corresponds to a dataset $(x_i,y_i)_{i=1}^n$, $x_i\in\mathbb{R}^d$ with 
$n = 10$ and $d = 30$. In the regression case, the 
inputs were uniformly sampled on the unit sphere and the labels were generated as $y = \langle x,\wmu \rangle + \epsilon$, with $\epsilon$ sampled from a zero-mean Gaussian distribution, with standard deviation chosen to have signal-to-noise ratio equal to $10$ for each task. In the classification case, the inputs were uniformly sampled on the unit sphere, excluding those points with margin 
$|\langle x, \wmu \rangle|$ smaller than $0.5$ and the binary labels were generated as 
a logistic model, $\mathbb{P}(y = 1) = \bigl(1 + 10 \, {\rm{exp}}(- \langle x, \wmu \rangle) \bigr)^{-1}$. In Fig.~\ref{fig:synth-data}
 we report the performance of \shortautoref{Within-Task Algorithm Online_paper} with different choices of the bias: $h = \bar h_T$ (our LTL estimator resulting from  \shortautoref{OGDA2_paper}), $h = 0$ (ITL) and $h = \wbar$, 
a reasonable approximation of the oracle minimizing the transfer risk.  
The plots confirm our theoretical findings: estimating the bias with our LTL approach leads to a substantial benefits with respect to the unbiased case, as the number of the observed 
training tasks increases.

\begin{figure}[t] 
\begin{center}
    \includegraphics[width=0.48\textwidth]{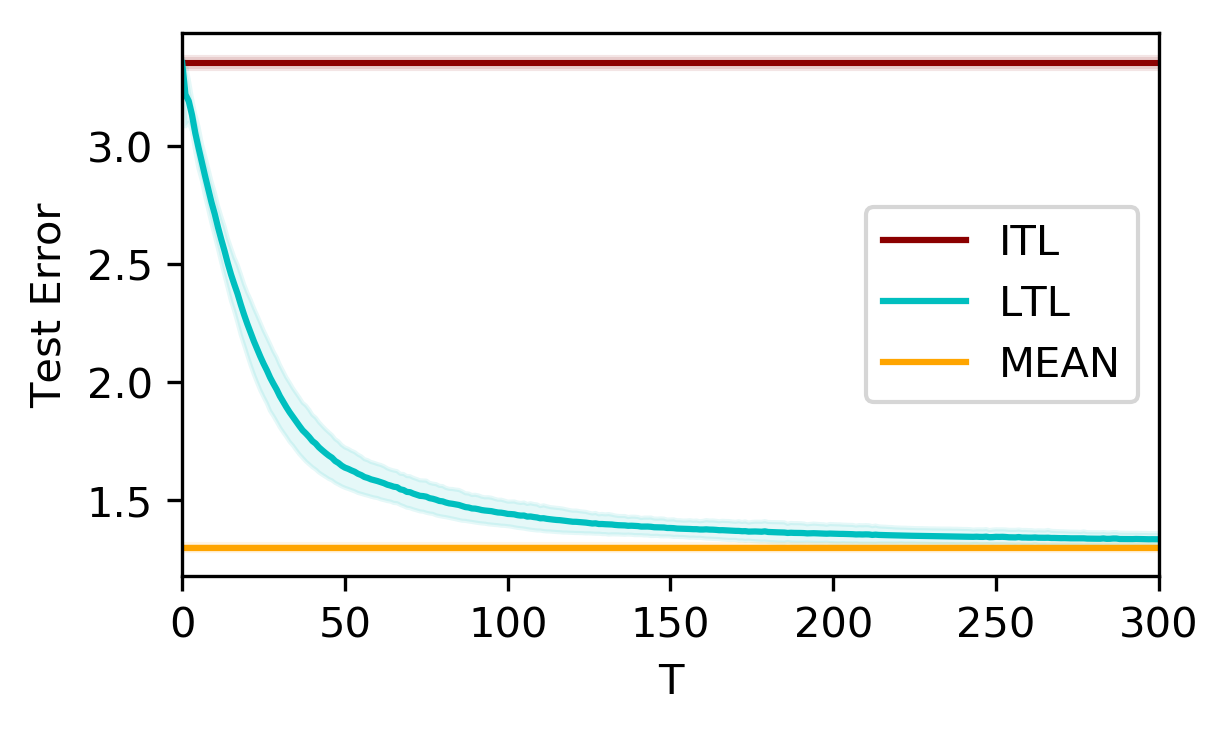} \quad 
    \includegraphics[width=0.48\textwidth]{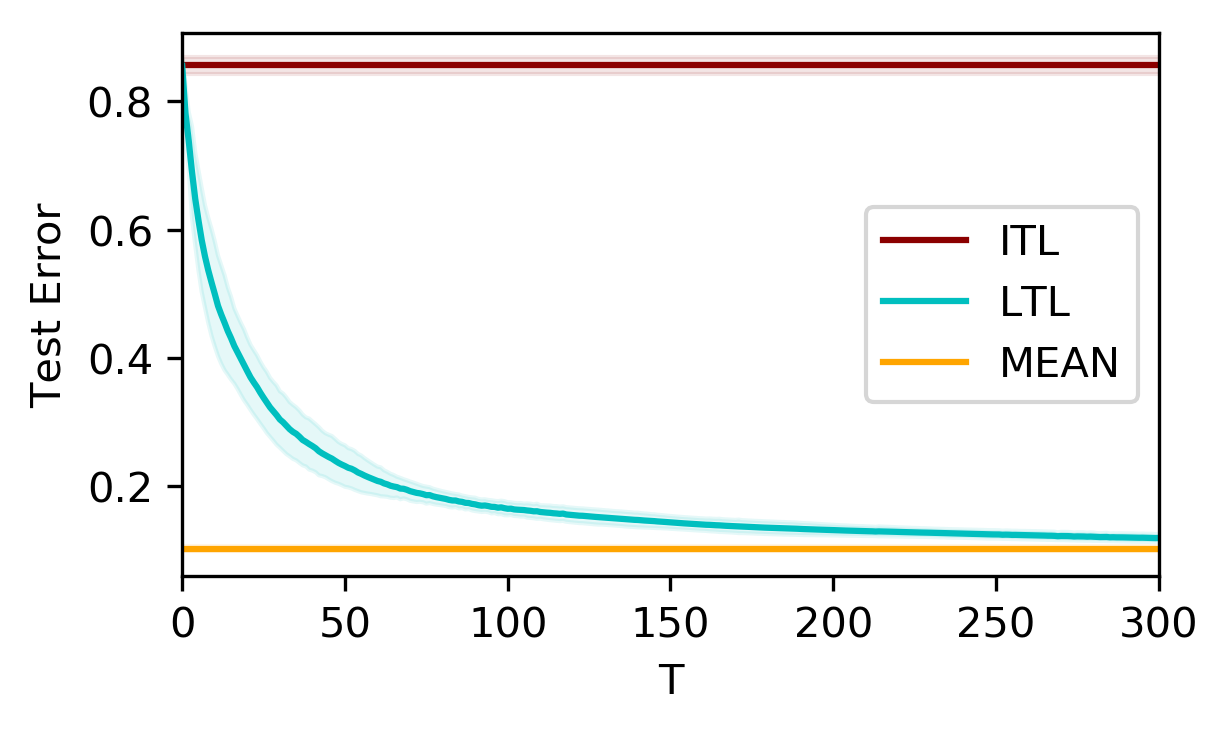}
    \caption{{\bf Synthetic Data.} Test performance of different bias with respect to an increasing number 
of tasks. (Top) Regression with absolute loss. (Bottom) Classification with hinge loss. The results are averaged 
over $10$ independent runs (datasets generations).\label{fig:synth-data}}
\end{center}
\end{figure}

\begin{figure}[t] 
\begin{center}
    \includegraphics[width=0.48\textwidth]{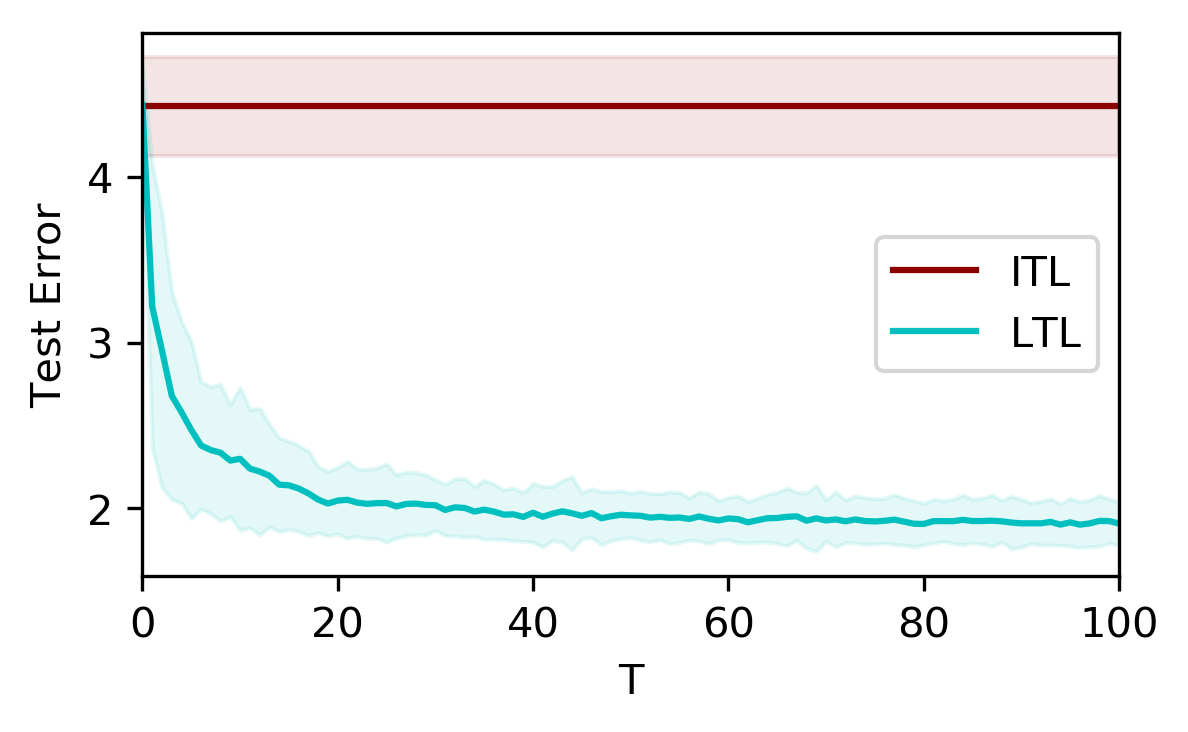} \quad 
    \includegraphics[width=0.48\textwidth]{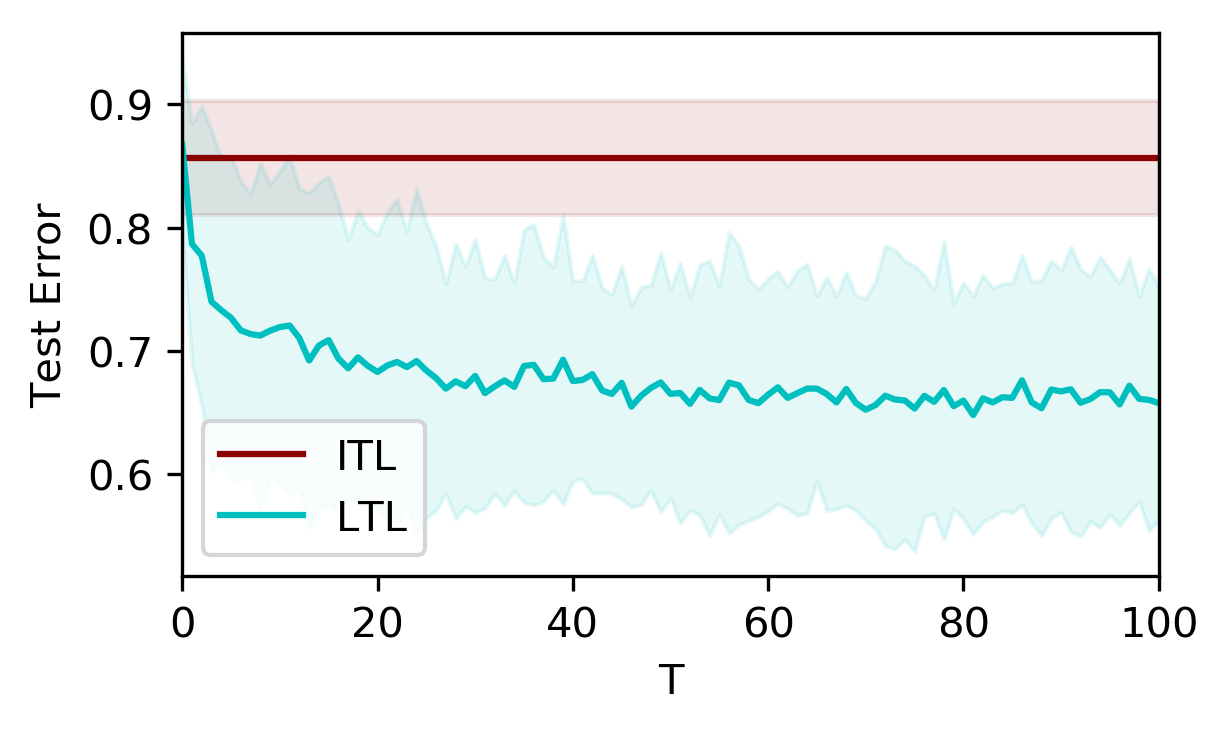}
    \caption{{\bf Real Data.} Test performance of different bias with respect to an increasing number of tasks. (Top) Lenk Dataset Regression. (Bottom) Lenk Dataset Classification. The results are averaged over $30$ independent runs (datasets generations).\label{fig:real-data}}
\end{center}
\end{figure}

{\bf Real Data.} We run experiments on the computer survey data from \cite{lenk1996hierarchical}, 
in which 180 people (tasks) rated the likelihood of purchasing one of 20 different
personal computers ($n=8$). The input represents 13 different computer characteristics (price, CPU, RAM, etc.) while the output
is an integer rating from $0$ to $10$. Similarly to the synthetic data experiments, we consider a regression setting with the absolute loss and a classification setting. In the latter case each task is to predict whether the rating is above $5$.
We compare the LTL bias with ITL. The results are reported in Fig.~\ref{fig:real-data}. 
The figures above are in line with the results obtained on synthetic experiments, indicating that the bias LTL framework proposed in this work is effective for this dataset. Moreover, the results for regression are also in line with what observed in the multitask setting with variance regularization \cite{Andrew}. The classification setting has not been used before and has been created ad-hoc for our purpose. In this case we have an increased variance probably due to the datasets being highly unbalanced.
In order to investigate the impact of passing through the data only once in the different steps in our method, we conducted additional experiments. The results, presented in  \shortautorefsubapp{additional_exps}, indicate that the single pass strategy 
is competitive with respect to the more expensive ERM.


\section{Conclusion and Future Work}
\label{conclusions_sec}
We have studied the performance of Stochastic Gradient Descent on the true risk regularized 
by the square euclidean distance to a bias vector, over a class of tasks. 
Drawing upon a learning-to-learn framework, we have shown that, when the variance of the
tasks is relatively small, the introduction of an appropriate bias vector could be beneficial in 
comparison to the standard unbiased version, corresponding to learning the tasks independently. 
Then, we have proposed an efficient online meta-learning algorithm to estimate this bias and we 
have theoretically shown that the bias returned by our method can bring a comparable benefit.
In the future, 
it would be interesting to investigate other kinds of relatedness among the tasks and to extend our analysis to other classes of loss functions, as well as to a Hilbert space setting. 
Finally, another valuable research direction is to derive fully dependent bounds, in which the hyperparameters are self-tuned during the learning process, see e.g. \cite{zhuang2019surrogate}.



\bibliography{giulia-2}
\bibliographystyle{abbrv}

\newpage

\begin{appendices}

\section*{Appendix}

The appendix is organized as follows. In \shortautorefsubapp{epsilon_subgradients_app} 
we report some basic facts regarding the $\epsilon$-subdifferential of a function which are used
in the subsequent analysis. In \shortautorefsubapp{primal_dual_formulation_app} we give the 
primal-dual formulation of the biased regularized empirical risk minimization problem for each single task
and, in \shortautorefsubapp{properties_meta_obj_app}, we recall some well-known properties
of our meta-objective function. In \shortautorefsubapp{meta_approx_subgradient_app}, 
we show how an $\epsilon$-minimizer of the dual problem can be exploited in order to build 
an $\epsilon$-subgradient of our meta-objective function. As described in 
\shortautorefsubapp{primal_dual_formulation_SGD_app}, interpreting our within-task algorithm 
as a coordinate descent algorithm on the dual problem, we can adapt this result to our setting 
and prove, in this way, \shortautoref{epsilon-subgradient}. 
In \shortautorefsubapp{statistical_preliminaries_app}, we report the proof of 
\shortautoref{estimation_error} and, in \shortautorefsubapp{estimated_h_generalization_bound_proof_app},
we give the convergence rate of \shortautoref{OGDA2_paper} which is used in the paper, 
during the proof of \shortautoref{excess_transfer_risk_LTL}.
In \shortautorefsubapp{ERM_analysis}, we repeat the statistical study described in the 
paper also for the family of ERM algorithms introduced in Eq. \eqref{RERM} and, in \shortautorefsubapp{validation_app},
we describe how to perform the validation procedure in our LTL setting. Finally, in \shortautorefsubapp{additional_exps} we report additional experiments comparing our
method to ERM variants.


\section{Basic Facts on $\epsilon$-Subgradients}
\label{epsilon_subgradients_app}

In this section, we report some basic concepts about the $\epsilon$-subdifferential
which are then used in the subsequent analysis. This material is based 
on \cite[Chap. XI]{jean2010convex}. Throughout this section we consider a convex 
closed and proper function $f: \Real^d \to \Real \cup \{+\infty\}$ with domain 
$\dom(f)$ and we always let $\epsilon \ge 0$. 

\begin{definition}[$\epsilon$-Subgradient, {\cite[Chap. XI, Def. $1.1.1$]{jean2010convex}}] \label{def_epsilon_sub}
Given $\hat h \in \dom(f)$, the vector $u \in \Real^d$ is called $\epsilon$-subgradient of $f$ 
at $\hat h$ when the following property holds for any $h \in \Real^d$
\begin{equation} \label{sub_gradient_def}
f(h) \ge f( \hat h) + \langle u, h-\hat h \rangle - \epsilon.
\end{equation}
The set of all $\epsilon$-subgradients of f at $\hat h$ is the $\epsilon$-subdifferential of $f$ at 
$\hat h$, denoted by $\partial_\epsilon f(\hat h)$.
\end{definition}


The standard subifferential $\partial f(\hat h)$ is retrieved with $\epsilon = 0$.
The following lemma, which is a direct consequence of \shortautoref{def_epsilon_sub},
points out the link between $\partial_\epsilon f$ and an $\epsilon$-minimizer of $f$.

\begin{lemma}[See {\cite[Chap. XI, Thm. $1.1.5$]{jean2010convex}}] \label{epsilon_minimum}
The following two properties are equivalent.
\begin{equation}
0 \in \partial_\epsilon f(\hat h) \quad \iff \quad f(\hat h) \le f(h) + \epsilon \quad \text{for any $h \in \Real^d$}.
\end{equation}
\end{lemma}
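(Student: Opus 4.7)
The plan is to prove this biconditional by a direct unwinding of Definition~\ref{def_epsilon_sub}: setting $u = 0$ in the defining inequality of an $\epsilon$-subgradient eliminates the linear term, so the $\epsilon$-subgradient condition degenerates exactly into the approximate-minimizer condition on the right-hand side. No convexity, closedness, or other structural property of $f$ beyond the standing assumption $\hat h \in \dom(f)$ is actually invoked in this reduction; the convexity assumptions from the beginning of the appendix are needed for other results but play no role here.

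Concretely, I would first handle the forward implication. Assuming $0 \in \partial_\epsilon f(\hat h)$, I would apply Eq.~\eqref{sub_gradient_def} with $u = 0$, obtaining
\begin{equation*}
f(h) \ge f(\hat h) + \langle 0, h - \hat h \rangle - \epsilon = f(\hat h) - \epsilon
\end{equation*}
for every $h \in \Real^d$, which rearranges to $f(\hat h) \le f(h) + \epsilon$. For the converse, I would assume $f(\hat h) \le f(h) + \epsilon$ for all $h$, and note that this is equivalent to $f(h) \ge f(\hat h) - \epsilon$, which is precisely the inequality $f(h) \ge f(\hat h) + \langle u, h - \hat h \rangle - \epsilon$ with $u = 0$; hence $0 \in \partial_\epsilon f(\hat h)$ by definition.

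There is essentially no obstacle: the result is tautological once the definition is unpacked, and the only mild care is to verify that each step is a literal rewriting rather than an inequality that loses information. In particular, neither direction requires taking infima over $h$ or invoking properness/closedness, so the proof is only a couple of lines.
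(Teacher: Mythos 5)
Your proof is correct and coincides with what the paper intends: the paper gives no explicit proof, stating only that the lemma ``is a direct consequence of'' Definition~\ref{def_epsilon_sub} (and citing the reference), and your substitution of $u=0$ into Eq.~\eqref{sub_gradient_def}, read in both directions, is exactly that argument. Your remark that no convexity, properness, or closedness is needed for this particular equivalence is also accurate.
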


The subsequent lemma describes the behavior of the $\epsilon$-subdifferential with respect to the duality.

\begin{lemma}[See {\cite[Chap. XI, Prop. $1.2.1$]{jean2010convex}}] \label{duality_epsilon_subgradient}
Let $f^*: \Real^d \to \Real \cup \{ + \infty \}$ be the Fenchel conjugate of $f$, namely, 
$f^*(\cdot) = \sup_{h \in \Real^d} \langle \cdot, h \rangle - f(h)$. Then, given $\hat h \in \dom(f)$, 
the vector $u \in \Real^d$ is an $\epsilon$-subgradient of $f$ at $\hat h$ iff
\begin{equation}
f^*(u) + f(\hat h) - \langle u, \hat h \rangle \le \epsilon.
\end{equation}
As a result,
\begin{equation}
u \in \partial_\epsilon f(\hat h) \quad \iff \quad \hat h \in \partial_\epsilon f^*(u).
\end{equation}
\end{lemma}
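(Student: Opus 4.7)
The plan is to unpack the defining inequality for an $\epsilon$-subgradient and rewrite it in the form of a Fenchel--Young-type inequality, then invoke biconjugacy to obtain the symmetric characterization. The key observation is that $u \in \partial_\epsilon f(\hat h)$ just says that the affine minorant $h \mapsto f(\hat h) + \langle u, h - \hat h \rangle - \epsilon$ lies below $f$ everywhere, which is exactly a statement about how close $u$ comes to attaining the supremum defining $f^*(u)$.

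More concretely, I would first expand \shortautoref{def_epsilon_sub} and rearrange: $u \in \partial_\epsilon f(\hat h)$ is equivalent to
\begin{equation*}
\langle u, h \rangle - f(h) \le \langle u, \hat h \rangle - f(\hat h) + \epsilon \qquad \text{for every } h \in \Real^d.
\end{equation*}
Taking the supremum over $h$ on the left-hand side turns this into $f^*(u) \le \langle u, \hat h \rangle - f(\hat h) + \epsilon$, which is the claimed inequality $f^*(u) + f(\hat h) - \langle u, \hat h \rangle \le \epsilon$. Conversely, if this inequality holds, then for every $h$ the definition of $f^*$ gives $\langle u, h \rangle - f(h) \le f^*(u) \le \langle u, \hat h \rangle - f(\hat h) + \epsilon$, and rearranging reproduces Eq.~\eqref{sub_gradient_def}. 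Note that the only nontrivial ingredient here is the Fenchel--Young inequality $f(h) + f^*(u) \ge \langle u, h \rangle$, which is built into the definition of $f^*$; no convexity of $f$ is needed for this first equivalence.

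For the symmetric consequence $u \in \partial_\epsilon f(\hat h) \iff \hat h \in \partial_\epsilon f^*(u)$, the plan is to apply the first characterization to $f^*$ in the role of the primal function: by the first equivalence, $\hat h \in \partial_\epsilon f^*(u)$ iff $f^{**}(\hat h) + f^*(u) - \langle u, \hat h \rangle \le \epsilon$. Since $f$ is assumed convex, closed and proper, the Fenchel--Moreau theorem gives $f^{**} = f$, so this condition collapses to $f(\hat h) + f^*(u) - \langle u, \hat h \rangle \le \epsilon$, which is precisely the condition for $u \in \partial_\epsilon f(\hat h)$ established in the previous paragraph.

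The only real subtlety will be justifying the biconjugation step $f^{**} = f$, which is why the convexity/closedness/properness hypothesis on $f$ stated at the beginning of \shortautorefsubapp{epsilon_subgradients_app} is needed for the ``as a result'' half; the first half of the lemma does not use it. I would simply cite the Fenchel--Moreau theorem (or the corresponding reference in \cite{jean2010convex}) rather than reprove it. Everything else is just algebraic rearrangement and the definition of the supremum.
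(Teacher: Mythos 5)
Your proof is correct. The paper itself gives no proof of this lemma---it is imported verbatim from \cite[Chap.~XI, Prop.~1.2.1]{jean2010convex}---and your argument is exactly the standard one from that reference: the first equivalence is the definition of $\partial_\epsilon f(\hat h)$ rearranged into $\sup_h \bigl( \langle u, h \rangle - f(h) \bigr) \le \langle u, \hat h \rangle - f(\hat h) + \epsilon$, and the symmetric statement follows by applying it to $f^*$ together with $f^{**}=f$ from Fenchel--Moreau. Your side remarks are also accurate: the first half needs only that $f(\hat h)$ is finite (guaranteed by $\hat h \in \dom(f)$ and properness), while convexity, closedness and properness of $f$ enter only through the biconjugation step; one might add that $u \in \dom(f^*)$, needed to apply the first half to $f^*$, is itself a consequence of the inequality $f^*(u) \le \langle u, \hat h \rangle - f(\hat h) + \epsilon$.
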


We now describe some properties of the $\epsilon$-subdifferential which are
used in the following analysis.

\begin{lemma}[ See {\cite[Chap. XI, Thm. $3.1.1$]{jean2010convex}}] \label{epsilon_subgradient_sum}
Let $f_1$ and $f_2$ be two convex closed and proper functions. Then, given $\hat h \in \dom(f_1+f_2) =
\dom(f_1) \cap \dom(f_2)$, we have that
\begin{equation}
\bigcup_{0 \le \epsilon_1 + \epsilon_2 \le \epsilon} \partial_{\epsilon_1} f_1( \hat h) + \partial_{\epsilon_2} f_2( \hat h)
\subset \partial_\epsilon \bigl( f_1 + f_2 \bigr) ( \hat h ).
\end{equation}
Moreover, denoting by ${\rm{ri}}(A)$ the relative interior of a set $A$, when 
${\rm{ri}}\bigl( \dom(f_1) \bigr) \cap {\rm{ri}}\bigl( \dom(f_2) \bigr) \ne \emptyset$, equality holds.
\end{lemma}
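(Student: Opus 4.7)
The plan is to handle the two directions separately, relying on the Fenchel-duality characterization of $\partial_\epsilon$ provided by \shortautoref{duality_epsilon_subgradient}.

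For the easy inclusion, I would pick $u_i \in \partial_{\epsilon_i} f_i(\hat h)$ with $\epsilon_1+\epsilon_2 \le \epsilon$ and simply sum the two defining inequalities from \shortautoref{def_epsilon_sub}. That is, for every $h \in \Real^d$,
\begin{align*}
f_1(h) &\ge f_1(\hat h) + \langle u_1, h - \hat h \rangle - \epsilon_1, \\
f_2(h) &\ge f_2(\hat h) + \langle u_2, h - \hat h \rangle - \epsilon_2,
\end{align*}
and adding gives the $\epsilon$-subgradient inequality for $f_1+f_2$ with slack $\epsilon_1+\epsilon_2 \le \epsilon$. No assumption on the domains is needed for this direction.

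For the reverse inclusion under the qualification condition, I would pass to conjugates. By \shortautoref{duality_epsilon_subgradient}, $u \in \partial_\epsilon(f_1+f_2)(\hat h)$ is equivalent to the Fenchel gap $(f_1+f_2)^*(u) + (f_1+f_2)(\hat h) - \langle u,\hat h\rangle \le \epsilon$. The key classical fact I would invoke is the Moreau--Rockafellar identity: when $\mathrm{ri}(\dom f_1) \cap \mathrm{ri}(\dom f_2) \neq \emptyset$, one has $(f_1+f_2)^* = f_1^* \,\square\, f_2^*$ (infimal convolution) with the infimum \emph{attained}. So there exists $u_1$ with $(f_1+f_2)^*(u) = f_1^*(u_1) + f_2^*(u-u_1)$. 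Set $u_2 = u - u_1$ and define
\begin{equation*}
\epsilon_i = f_i^*(u_i) + f_i(\hat h) - \langle u_i, \hat h \rangle, \qquad i=1,2.
\end{equation*}
Each $\epsilon_i \ge 0$ by the Fenchel--Young inequality, and substituting back into the gap identity shows $\epsilon_1+\epsilon_2 \le \epsilon$. Applying \shortautoref{duality_epsilon_subgradient} in the opposite direction then yields $u_i \in \partial_{\epsilon_i} f_i(\hat h)$, proving $u = u_1 + u_2$ lies in the union.

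The main obstacle is really not the bookkeeping but the Moreau--Rockafellar identity with attainment of the infimum: this is where the relative-interior qualification enters, and where one typically invokes a separation-theorem argument on $\mathrm{epi}(f_1^*)$ and a suitably translated $\mathrm{epi}(f_2^*)$ to guarantee that the infimal convolution is exact (not merely equal in value but realised by some decomposition $(u_1,u-u_1)$). Without this qualification, only the inclusion direction survives, which is consistent with the lemma's statement. Once attainment is in hand, the rest of the argument is mechanical algebra with the Fenchel identity.
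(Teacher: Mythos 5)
Your argument is correct. The paper does not prove this lemma at all---it is imported verbatim from Hiriart-Urruty and Lemar\'echal \cite[Chap.~XI, Thm.~3.1.1]{jean2010convex}---so there is no in-paper proof to compare against; what you give is essentially the standard textbook proof of that cited result. The forward inclusion by adding the two defining inequalities is immediate and needs no qualification condition, as you note. For the reverse inclusion, your reduction via \shortautoref{duality_epsilon_subgradient} to the identity $(f_1+f_2)^*(u)+(f_1+f_2)(\hat h)-\langle u,\hat h\rangle\le\epsilon$, followed by the exact decomposition $(f_1+f_2)^*(u)=f_1^*(u_1)+f_2^*(u_2)$ with $u_1+u_2=u$ and the definition $\epsilon_i=f_i^*(u_i)+f_i(\hat h)-\langle u_i,\hat h\rangle\ge 0$, is precisely the right mechanism: nonnegativity follows from Fenchel--Young since $\hat h\in\dom(f_1)\cap\dom(f_2)$ makes each $f_i(\hat h)$ finite, and summing recovers $\epsilon_1+\epsilon_2\le\epsilon$. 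You correctly isolate the only nontrivial ingredient, namely the Moreau--Rockafellar theorem with \emph{attainment} of the infimal convolution under $\mathrm{ri}(\dom f_1)\cap\mathrm{ri}(\dom f_2)\ne\emptyset$; you invoke it rather than prove it, which is entirely appropriate here, though a fully self-contained write-up would either cite it explicitly (e.g.\ Rockafellar, Thm.~16.4) or carry out the separation argument you sketch.
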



\begin{lemma}[ See {\cite[Chap. XI, Prop. $1.3.1$]{jean2010convex}}] \label{epsilon_subgradient_chain_rule_partialscalar}
Let $a \ne 0$ be a scalar. Then, for a given $\hat h \in \dom(f \circ a)$, we have that
\begin{equation}
\partial_\epsilon \bigl( f \circ a \bigr) \bigl( \hat h \bigr) = a ~ \partial_\epsilon f \bigl( a \hat h \bigr).
\end{equation}
\end{lemma}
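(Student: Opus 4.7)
The plan is to work directly from the definition of $\epsilon$-subgradient given in \shortautoref{def_epsilon_sub} and exploit the fact that, since $a \neq 0$, the map $h \mapsto a h$ is a bijection on $\Real^d$. This bijectivity is what upgrades the easy inclusion into a set equality.

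First, I would unfold the condition $u \in \partial_\epsilon(f \circ a)(\hat h)$. By \shortautoref{def_epsilon_sub}, this means that for every $h \in \Real^d$,
\begin{equation*}
f(a h) \ge f(a \hat h) + \langle u, h - \hat h \rangle - \epsilon.
\end{equation*}
Using that $a$ is a scalar, the inner product term can be rewritten as $\langle u, h - \hat h \rangle = \langle u/a,\, a h - a \hat h\rangle$, which is well-defined precisely because $a \neq 0$. Setting $v = u/a$, the inequality becomes
\begin{equation*}
f(a h) \ge f(a \hat h) + \langle v,\, a h - a \hat h\rangle - \epsilon.
\end{equation*}

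Next, I would perform the change of variable $h' = a h$. Since multiplication by $a$ is a bijection on $\Real^d$, letting $h$ range over $\Real^d$ is equivalent to letting $h'$ range over $\Real^d$. Therefore the displayed inequality above holds for all $h \in \Real^d$ if and only if
\begin{equation*}
f(h') \ge f(a \hat h) + \langle v,\, h' - a \hat h\rangle - \epsilon \qquad \text{for all } h' \in \Real^d,
\end{equation*}
which is precisely the statement $v \in \partial_\epsilon f(a \hat h)$. Hence $u \in \partial_\epsilon(f \circ a)(\hat h)$ iff $u/a \in \partial_\epsilon f(a \hat h)$, that is, iff $u \in a\, \partial_\epsilon f(a \hat h)$, giving the claimed equality of sets.

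There is really no substantive obstacle: the result is a one-step consequence of the definition, and the only care needed is notational. One should note that the hypothesis $\hat h \in \dom(f \circ a)$ is equivalent to $a \hat h \in \dom(f)$ (again by $a \neq 0$), so the $\epsilon$-subdifferentials on both sides of the identity are being evaluated at points in the appropriate domains; and the argument is insensitive to the sign of $a$, because $a$ appears symmetrically on both sides and $\epsilon \ge 0$ is never rescaled.
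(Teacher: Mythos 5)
Your argument is correct: unfolding \shortautoref{def_epsilon_sub}, rewriting $\langle u, h-\hat h\rangle = \langle u/a, ah - a\hat h\rangle$, and using that $h \mapsto ah$ is a bijection of $\Real^d$ (since $a \neq 0$) gives the equivalence $u \in \partial_\epsilon(f\circ a)(\hat h) \iff u/a \in \partial_\epsilon f(a\hat h)$, which is exactly the claimed set equality. The paper does not prove this lemma at all --- it is quoted from the cited reference --- so there is no in-paper argument to compare against, but your direct verification from the definition is the standard one and is complete, including the observation that $\hat h \in \dom(f\circ a)$ iff $a\hat h \in \dom(f)$.
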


\begin{lemma} \label{epsilon_subgradient_chain_rule_partial}
Let $X \in \Real^{n \times d}$ be a matrix. Then, for a given $\hat h \in \Real^d$ such that $X \hat h \in \dom(f)$,
we have that
\begin{equation}
X \trans \partial_\epsilon f \bigl( X \hat h \bigr) \subset \partial_\epsilon \bigl( f \circ X \bigr) (\hat h).
\end{equation}
\end{lemma}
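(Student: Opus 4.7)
The plan is to unfold the definition of $\epsilon$-subgradient at the image point $X\hat h$ and then restrict the inequality to inputs of the form $Xh$, letting the transpose of $X$ take care of the change of variables.

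Concretely, I would start by fixing an arbitrary element $u \in X\trans \partial_\epsilon f(X\hat h)$, so that $u = X\trans v$ for some $v \in \partial_\epsilon f(X\hat h)$. By \shortautoref{def_epsilon_sub} applied to $f$ at the point $X\hat h$ (which lies in $\dom(f)$ by hypothesis), for every $y \in \Real^n$ one has
\begin{equation*}
f(y) \ge f(X\hat h) + \langle v, y - X\hat h \rangle - \epsilon.
\end{equation*}
The natural next step is to specialize this inequality to $y = Xh$ for an arbitrary $h \in \Real^d$, which is legal regardless of whether $Xh$ lies in $\dom(f)$ (if it does not, the left-hand side is $+\infty$ and the inequality is trivial).

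Using the adjoint identity $\langle v, Xh - X\hat h \rangle = \langle X\trans v, h - \hat h \rangle = \langle u, h - \hat h \rangle$, the previous bound rewrites as
\begin{equation*}
(f \circ X)(h) \ge (f \circ X)(\hat h) + \langle u, h - \hat h \rangle - \epsilon,
\end{equation*}
which is exactly the defining inequality of $u \in \partial_\epsilon (f \circ X)(\hat h)$ in \shortautoref{def_epsilon_sub}. Since $u$ was arbitrary in $X\trans \partial_\epsilon f(X\hat h)$, this yields the claimed inclusion.

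I do not anticipate a real obstacle here: the argument is a direct transcription of the classical subgradient chain rule for linear maps, and the only care needed is to note that the inclusion is in general strict (equality would require a qualification condition, analogous to the relative interior condition that appears in \shortautoref{epsilon_subgradient_sum}), which is why the statement is phrased as a one-sided inclusion rather than an equality.
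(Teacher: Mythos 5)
Your proof is correct and follows essentially the same route as the paper's: write $u = X\trans v$ with $v \in \partial_\epsilon f(X\hat h)$, test the defining inequality at $Xh$, and transfer the inner product via the adjoint identity. The extra remark about $Xh \notin \dom(f)$ is a harmless refinement the paper leaves implicit.
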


\begin{proof}
Let be $u \in X \trans \partial_\epsilon f \bigl( X \hat h \bigr)$. Then, by definition, there exist $v \in 
\partial_\epsilon f \bigl( X \hat h \bigr)$ such that $u = X \trans v$. Consequenlty, for any $h \in \Real^d$,
we can write
\begin{equation}
\big \langle u, h - \hat h \big \rangle = \big \langle X \trans v, h - \hat h \big \rangle =
\big \langle v, X h - X \hat h \big \rangle \le f \bigl( X h \bigr) - f \bigl( X \hat h \bigr) + \epsilon
= \bigl( f \circ X \bigr) (h) - \bigl( f \circ X \bigr) (\hat h) + \epsilon,
\end{equation}
where, in the inequality we have used the fact that $v \in \partial_\epsilon f \bigl( X \hat h \bigr)$. 
This gives the desired statement.
\end{proof}

The next two results characterize the $\epsilon$-subdifferential of two  
functions, which are useful in our subsequent analysis. In the following we denote 
by $\psd$ the set of the $d \times d$ symmetric positive semi-definite matrices.

\begin{example}[Quadratic Functions, {\cite[Chap. XI, Ex. $1.2.2$ ]{jean2010convex}}] \label{epsilon_subgradient_quadratic}
For a given matrix $Q \in \psd$ and a given vector $b \in \Real^d$, consider the function 
\begin{equation}
f : h \in \Real^d \mapsto \frac{1}{2} \big \langle Q h, h\big \rangle + \langle b, h \rangle.
\end{equation}
Then, given $\hat h \in \dom(f) = \Real^d$, we can express the $\epsilon$-subdifferential of $f$ at $\hat h$
with respect to the gradient $\nabla f(\hat h) = Q \hat h + b$ as follows
\begin{equation}
\partial_\epsilon f(\hat h) = \Big \{ \nabla f(\hat h) + Q s : \frac{1}{2} \big \langle Q s, s \big \rangle \le \epsilon \Big \}.
\end{equation}
\end{example}

\begin{example}[Moreau Envelope {\cite[Chap. XI, Ex. $3.4.4$]{jean2010convex}}] \label{epsilon_subgradient_Moreau_envelope}
For $\la > 0$ and a fixed vector $h \in \Real^d$, consider the Moreau envelope of $f$ at the point 
$h$ with parameter $\la$, given by
\begin{equation}
\LL(h) = \min_{w \in \Real^d} f(w) + \frac{\la}{2} \big \| w - h \big \|^2.
\end{equation}
Denote by $\whD$ the unique minimizer of the above function, namely, the 
vector characterized by the optimality conditions
\begin{equation}
0 \in \partial f(\whD) + \la \bigl( \whD - h \bigr).
\end{equation}
Then, for any $\la > 0$ and $h \in \Real^d$, we have that
\begin{equation} \label{epsilon_subdifferential_Moreau_envelope}
\partial_\epsilon \LL (h) = \bigcup_{0 \le \alpha \le \epsilon}
\partial_{\epsilon-\alpha} f( \whD ) \cap \mathcal{B} \Bigl( - \la \bigl( \whD - h \bigr),
\sqrt{2 \la \alpha} \Bigr),
\end{equation}
where, for any center $c \in \Real^d$ and any radius $r \ge 0$, we recall the notation 
\begin{equation}
\mathcal{B} (c,r) = \big \{ u \in \Real^d: \| u - c \| \le r \big \}.
\end{equation}
For $\epsilon = 0$ we retrieve the well-known result according to which $\LL$ 
is differentiable, with $\la$-Lipschitz gradient given by
\begin{equation}
\nabla \LL( h) = - \la \bigl( \whD - h \bigr).
\end{equation}
Finally, from Eq. \eqref{epsilon_subdifferential_Moreau_envelope}, we can 
deduce that, if $u \in \partial_\epsilon \LL (h)$, then 
\begin{equation} \label{discrepancy_gradients}
\big \| \nabla \LL( h) - u \big \| \le \sqrt{2 \la \epsilon}.
\end{equation}
\end{example}


\section{Primal-Dual Formulation of the Within-Task Problem}
\label{primal_dual_formulation_app}

In this section, we give the primal-dual formulation of the biased regularized empirical risk 
minimization problem outlined in Eq, \eqref{RERM} for each single task. Specifically, rewriting 
for any $w \in \Real^d$ and $u \in \Real^n$, the empirical risk
\begin{equation}
\cR_{\Zn}(w) = \bigl( g \circ \Xn \bigr)(w) \quad \quad \quad 
g(u) = \frac{1}{n} \sum_{k = 1}^n \ell_k(u_k),
\end{equation}
for any $h \in \H$, we can express our meta-objective function in Eq. \eqref{primal_min_val_paper} as
\begin{equation} \label{primal_min_val}
\begin{split}
\LL_{\Zn}(h) = \min_{w \in \Real^d} ~ \bigl(g \circ \Xn \bigr)(w) ~ + ~ \frac{\la}{2} ~ \| w-h \|^2.
\end{split}
\end{equation}
We remark that, in the optimization community, this function coincides with the Moreau envelope of the 
empirical error at the point $h$, see also \shortautorefex{epsilon_subgradient_Moreau_envelope}. 
In this section, in order to simplify the presentation, we omit the dependence on the dataset $\Zn$ in 
the notation. The unique minimizer of the above function 
\begin{equation} \label{primal_sol}
\whD = \argmin_{w \in \Real^d} ~ \bigl(g \circ \Xn \bigr)(w) + ~ \frac{\la}{2} ~ \| w - h \|^2
\end{equation}
is known as the proximity operator of the empirical error at the point $h$ and it coincides
with the ERM algorithm introduced in Eq. \eqref{RERM} in the paper.
We interpret the vector $\whD$ in Eq. \eqref{RERM}--\eqref{primal_sol} as the solution of the primal problem
\begin{equation} \label{primal_problem}
\whD = \argmin_ {w \in \Real^d} \prhD(w) \quad \quad  \quad \quad 
\prhD(w) = \bigl(g \circ \Xn \bigr)(w) ~ + ~ \frac{\la}{2} ~ \| w - h \|^2.
\end{equation}
The next proposition is a standard result stating that, in this setting, strong
duality holds and the optimality conditions, also known as Karush--Kuhn--Tucker 
(KKT) conditions provide a unique way to determine the primal variables from the dual ones.

\begin{proposition}[Strong Duality, {\cite[Thm. $4.4.2$]{borwein2005techniques}, \cite[Prop. $15.18$]{bauschke2011convex}}] \label{strong_duality}
Consider the primal problem in Eq. \eqref{primal_problem}.
Then, its dual problem admits a solution 
\begin{equation} \label{dual_problem}
\uhD \in \argmin_ {u \in \Real^n} \duhD(u) \quad \quad \quad 
\duhD(u) = g^*(u) + \frac{1}{2 \la} \big \| \Xn \trans u \big \|^2 - 
\big \langle \Xn h, u \big \rangle,
\end{equation}
where, thanks to the separability of $g$, for any $u \in \Real^n$, we have that
\begin{equation}
g^*(u) = \frac{1}{n} \sum_{k =1}^n \ell_k^*(n u_k).
\end{equation}
Moreover, strong duality holds, namely, 
\begin{equation}
\LL(h) = \prhD(\whD) = \min_{w \in \Real^d} \prhD(w) = - \min_{u \in \Real^n} \duhD(u) = - \duhD(\uhD) 
\end{equation}
and the optimality (KKT) conditions read as follows
\begin{equation}
\begin{split}
\whD = - \frac{1}{\la} \Xn \trans \uhD + h & \iff \la ( \whD - h ) = - \Xn \trans \uh \\
\uhD \in \partial g(\Xn \whD) & \iff \Xn \whD \in \partial g^*(\uhD).
\end{split}
\end{equation}
\end{proposition}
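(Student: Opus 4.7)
The plan is to derive the statement as a direct application of the Fenchel--Rockafellar duality theorem, with some bookkeeping to match notation. I set $f(w) = \frac{\la}{2}\|w-h\|^2$ and view the primal objective as $\prhD(w) = (g\circ \Xn)(w) + f(w)$; the associated Fenchel--Rockafellar dual is $\sup_{u \in \Real^n}\{-f^*(-\Xn\trans u) - g^*(u)\}$. The main work is then: (i) computing $f^*$ and $g^*$ explicitly so that this dual matches the claimed $-\duhD$, (ii) verifying a constraint qualification to obtain strong duality with primal and dual attainment, and (iii) reading off the KKT relations between $\whD$ and $\uhD$.

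For step (i), since $f$ is a shifted quadratic, setting the gradient of $w \mapsto \ip{v}{w} - \frac{\la}{2}\|w-h\|^2$ to zero yields the closed form $f^*(v) = \ip{v}{h} + \frac{1}{2\la}\|v\|^2$. Substituting $v = -\Xn\trans u$ produces exactly the two $h$-dependent terms in $\duhD(u)$. The separability formula $g^*(u) = \frac{1}{n}\sum_k \ell_k^*(nu_k)$ follows from the elementary identity that the conjugate of a coordinatewise sum is the coordinatewise sum of conjugates, after rescaling the argument by $n$ to absorb the $1/n$ normalization in $g$.

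For step (ii), I would observe that $\dom(f) = \Real^d$ with $f$ continuous everywhere, so the standard constraint qualification (existence of some $w$ with $\Xn w \in \mathrm{ri}(\dom g)$ at which $f$ is continuous) is trivially satisfied; the Fenchel--Rockafellar theorem then gives both strong duality and attainment of the dual problem. Uniqueness of the primal minimizer $\whD$ is immediate from strong convexity of $\prhD$ induced by the $\la$-quadratic term.

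For step (iii), the KKT conditions fall out of first-order optimality. From $0 \in \partial\prhD(\whD)$, applying the subdifferential sum rule and the chain rule to $g \circ \Xn$ (both valid under the same constraint qualification invoked above) yields the existence of $\uhD \in \partial g(\Xn\whD)$ with $\la(\whD - h) = -\Xn\trans \uhD$, which is the first displayed identity. The equivalence $\uhD \in \partial g(\Xn\whD) \iff \Xn\whD \in \partial g^*(\uhD)$ is the standard Fenchel reciprocity for conjugate pairs. The only mildly delicate point in the whole argument is ensuring the chain rule applies to $g \circ \Xn$ without additional hypotheses on $\Xn$; since $g$ is separable with full effective domain under \shortautoref{lipschitz_loss} (each $\ell_k$ is real-valued and convex), there is no obstruction, so the rest of the proof is essentially bookkeeping.
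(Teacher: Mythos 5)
Your proposal is correct and follows essentially the same route as the paper, which offers no proof of its own but simply defers to the Fenchel--Rockafellar duality theorem in the cited references (\cite[Thm.~$4.4.2$]{borwein2005techniques}, \cite[Prop.~$15.18$]{bauschke2011convex}); your computation of $f^*(v) = \langle v,h\rangle + \tfrac{1}{2\la}\|v\|^2$, the rescaled separability identity for $g^*$, the trivially satisfied constraint qualification (both $f$ and $g$ are finite-valued and continuous everywhere), and the KKT relations via the sum/chain rules and Fenchel reciprocity are exactly the unpacking of those results. No gap.
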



\section{Properties of the Meta-Objective}
\label{properties_meta_obj_app}

In this section we recall some properties of the meta-objective function $\LL_{\Zn}$ already 
outlined in the text in \shortautoref{properties}.

\properties*

\begin{proof} 
The first part of the statement is a well-known fact, see \cite[Prop. $12.29$]{bauschke2011convex}
and also \shortautorefex{epsilon_subgradient_Moreau_envelope}.
In order to prove the second part of the statement, we exploit \shortautoref{bounded_inputs}
and \shortautorefex{lipschitz_loss} and we proceed as follows. According to the change of variables 
$v = w-h$, exploiting the fact that, for any two convex functions $f_1$ and $f_2$, we have
\begin{equation}
\Big | \min_{v \in \Real^d} f_1(v) - \min_{v \in \Real^d} f_2(v) \Big | \le 
\sup_{v \in \Real^d} \big | f_1(v) - f_2(v) \big |,
\end{equation}
for any $h_1, h_2 \in \H$, we can write the following
\begin{equation}
\begin{split}
\Big | & \LL_{\Zn}(h_1) - \LL_{\Zn}(h_2) \Big | \\
& = \Big | \min_{w \in \Real^d} ~ \Bigl( \frac{1}{n} \sum_{k=1}^n \ell_k\bigl( \langle x_k, w\rangle \bigr)
+ ~ \frac{\la}{2} ~ \| w - h_1 \|^2 \Bigr) - \min_{w \in \Real^d} ~ \Bigl( \frac{1}{n} \sum_{k=1}^n \ell_k\bigl( \langle x_k, w \rangle \bigr) + ~ \frac{\la}{2} ~ \| w - h_2 \|^2 \Bigr) \Big | \\
& = \Big | \min_{v \in \Real^d} ~ \Bigl( \frac{1}{n} \sum_{k=1}^n \ell_k\bigl( \langle x_k, v + h_1 \rangle \bigr)
+ ~ \frac{\la}{2} ~ \| v \|^2 \Bigr) - \min_{v \in \Real^d} ~ \Bigl( \frac{1}{n} \sum_{k=1}^n \ell_k\bigl( \langle x_k, v + h_2 
\rangle \bigr)
+ ~ \frac{\la}{2} ~ \| v \|^2 \Bigr) \Big | \\
& \le \sup_{v \in \Real^d} \Big | \frac{1}{n} \sum_{k=1}^n \ell_k\bigl( \langle x_k, v + h_1 \rangle \bigr)
+ ~ \frac{\la}{2} ~ \| v \|^2 - \frac{1}{n} \sum_{k=1}^n \ell_k\bigl( \langle x_k, v + h_2 \rangle \bigr)
- ~ \frac{\la}{2} ~ \| v \|^2 \Big | \\
& = \sup_{v \in \Real^d} \Big | \frac{1}{n} \sum_{k=1}^n \Bigl( \ell_k\bigl( \langle x_k, v + h_1 \rangle \bigr)
- \ell_k\bigl( \langle x_k, v + h_2 \rangle \bigr) \Bigr) \Big | \\
& \le \sup_{v \in \Real^d} \frac{1}{n} \sum_{k=1}^n \Big | \ell_k\bigl( \langle x_k, v + h_1 \rangle \bigr)
- \ell_k\bigl( \langle x_k, v + h_2 \rangle \bigr) \Big | \\
& \le \frac{L}{n} \sup_{v \in \Real^d} \sum_{k=1}^n \Big | \langle x_k, v +  h_1 \rangle - \langle x_k, v + h_2 \rangle \Big | \\
& = \frac{L}{n} \sum_{k=1}^n \Big | \langle x_k, h_1 - h_2 \rangle \Big | \\
& \le \frac{L}{n} \sum_{k=1}^n  \| x_k \| \| h_1 - h_2 \| \\
& \le L \rx \| h_1 - h_2 \|,
\end{split}
\end{equation}
where, in the third inequality we have used \shortautoref{lipschitz_loss}, in the fourth inequality we
have applied Cauchy-Schwartz inequality and in the last step we have used \shortautoref{bounded_inputs}. 
Consequently, we can state that $\LL_{\Zn}$ is $L \rx$-Lipschitz.
\end{proof}

To conclude this section, in the next proposition, we recall the closed form of the 
conjugate of the function $\LL_{\Zn}$.

\begin{lemma}[Fenchel Conjugate of $\LL_{\Zn}$] \label{conjugate_Moreau}
For any $\alpha \in \Real^d$, the Fenchel conjugate function of $\LL_{\Zn}$ is
\begin{equation}
\LL_{\Zn}^*(\alpha) = \bigl( g \circ \Xn \bigr)^*(\alpha) + \frac{1}{2 \la} \big \| \alpha \big \|^2.
\end{equation}
\end{lemma}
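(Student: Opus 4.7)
The plan is to recognize $\LL_{\Zn}$ as the Moreau envelope of $f := g \circ \Xn$ with parameter $\la$ and then compute its Fenchel conjugate directly from the definition, swapping the order of suprema. This swap is the only nontrivial step; everything else is a one-line change of variables.

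First, rewrite the definition as
\begin{equation*}
\LL_{\Zn}(h) = \min_{w \in \Real^d} \Bigl\{ f(w) + \frac{\la}{2} \| w - h \|^2 \Bigr\},
\end{equation*}
so that $\LL_{\Zn}$ is exactly the infimal convolution $f \,\square\, q_\la$, where $q_\la(v) = \frac{\la}{2}\|v\|^2$. Starting from the definition of the Fenchel conjugate, I would write
\begin{equation*}
\LL_{\Zn}^*(\alpha) = \sup_{h \in \Real^d} \Bigl\{ \langle \alpha, h \rangle - \min_{w \in \Real^d} \bigl( f(w) + \tfrac{\la}{2} \| w - h \|^2 \bigr) \Bigr\} = \sup_{h, w} \Bigl\{ \langle \alpha, h \rangle - f(w) - \tfrac{\la}{2} \| w - h \|^2 \Bigr\}.
\end{equation*}
Next I would swap the two suprema and perform the change of variable $v = h - w$, which decouples the dependence on $w$ and $v$:
\begin{equation*}
\LL_{\Zn}^*(\alpha) = \sup_{w \in \Real^d} \bigl\{ \langle \alpha, w \rangle - f(w) \bigr\} + \sup_{v \in \Real^d} \bigl\{ \langle \alpha, v \rangle - \tfrac{\la}{2} \| v \|^2 \bigr\}.
\end{equation*}
The first term is, by definition, $f^*(\alpha) = (g \circ \Xn)^*(\alpha)$, and the second is the conjugate of $q_\la$, which is the elementary computation $q_\la^*(\alpha) = \frac{1}{2\la}\|\alpha\|^2$ (attained at $v = \alpha/\la$). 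Putting the two pieces together yields the claimed identity.

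The only potential obstacle is justifying the swap of the two suprema and the fact that the minimum in the definition of $\LL_{\Zn}$ is indeed attained; both are immediate here because $f = g \circ \Xn$ is convex, proper and closed (as each $\ell_k$ is) and $\frac{\la}{2}\|w-h\|^2$ is coercive in $w$ with $\la > 0$, so the inner problem has a (unique) minimizer and Fubini-type exchange of suprema holds without any condition beyond joint measurability. Alternatively, one can cite directly the general rule $(f_1 \,\square\, f_2)^* = f_1^* + f_2^*$ valid for proper convex lower-semicontinuous functions (see e.g.\ \cite[Prop.~13.24]{bauschke2011convex}) and apply it to $f_1 = f$ and $f_2 = q_\la$; this gives the result in one line, with all measurability/exchange issues absorbed into the cited statement.
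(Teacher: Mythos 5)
Your proposal is correct and follows essentially the same route as the paper: the paper's proof simply identifies $\LL_{\Zn}$ as the infimal convolution $\bigl(g\circ\Xn\bigr)\,\square\,\frac{\la}{2}\|\cdot\|^2$ and invokes the rule $(f_1\,\square\,f_2)^*=f_1^*+f_2^*$ together with $\bigl(\frac{\la}{2}\|\cdot\|^2\bigr)^*=\frac{1}{2\la}\|\cdot\|^2$, which is exactly the one-line alternative you mention at the end, while your explicit sup-swap computation is just the standard proof of that rule unpacked (and note the exchange of two suprema is always valid, so no Fubini-type justification is even needed).
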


\begin{proof}
We recall that the infimal convolution of two proper closed convex functions $f_1$ 
and $f_2$ is defined as $\big( f_1~\square~f_2 \big) (\cdot) = \inf_{w} f_1(w) + f_2(\cdot - w)$
and its Fenchel conjugate is give by $\big( f_1~\square~f_2 \big)^* = f_1^* + f_2^*$, 
see \cite[Chap. XII]{bauschke2011convex}.
Hence, the statement follows from observing that, for any $h \in \H$ and any $\alpha \in \Real^d$,
$\displaystyle \LL_{\Zn}(h) = \bigl( g \circ \Xn \bigr)~\square~\frac{\la}{2} \| \cdot \|^2 (h)$ 
and $\displaystyle \Bigl( \frac{\la}{2} \| \cdot \|^2 \Bigr)^*(\alpha) = \frac{1}{2 \la} \| \alpha \|^2$.
\end{proof}


\section{From the Dual an $\epsilon$-Subgradient for the Meta-Objective}
\label{meta_approx_subgradient_app}

In this section, we show how to exploit an $\epsilon$-minimizer $\hat u_h$ of the dual problem in Eq. 
\eqref{dual_problem} in order to get an $\epsilon$-subgradient of the function $\LL_{\Zn}$ in Eq. 
\eqref{primal_min_val_paper}--\eqref{primal_min_val} at the point $h$. This is described in the following 
proposition, which will play a fundamental role in our analysis.

\begin{proposition}[$\epsilon$-Subgradient for the Meta-Objective $\LL_{\Zn}$] \label{approx_subgradient_prop}
In the setting described above, for a fixed value $h \in \H$ and a fixed parameter $\la > 0$, consider an 
$\epsilon$-minimizer $\hat u_h \in \Real^n$ of the dual objective $\duhD$ in Eq. \eqref{dual_problem}, for some 
value $\epsilon \ge 0$. Then, the vector $\Xn \trans \hat u_h \in \Real^d$ is an $\epsilon$-subgradient of $\LL_{\Zn}$ 
at the point $h$.
\end{proposition}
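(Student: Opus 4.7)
My plan is to work directly from the definition of an $\epsilon$-subgradient. I want to show that for every $h' \in \H$,
\begin{equation*}
\LL_{\Zn}(h') \;\ge\; \LL_{\Zn}(h) + \big\langle X_n\trans \hat u_h,\; h' - h \big\rangle - \epsilon .
\end{equation*}
The key structural observation I will exploit is that, in the dual objective of \shortautorefsubapp{primal_dual_formulation_app}, the bias $h$ enters only through the linear term $-\langle X_n h, u\rangle$. Hence, for any fixed $u \in \Real^n$, the difference $\duhD(u) - \Psi_{h'}(u)$ collapses to $\langle X_n(h'-h), u \rangle = \langle h'-h, X_n\trans u\rangle$, independently of $g^*(u)$ and $\|X_n\trans u\|^2/(2\la)$.

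The proof will then assemble from two one-line inputs, both stated in \shortautoref{strong_duality}. First, by weak duality applied at the bias $h'$, any feasible dual point gives a lower bound on the primal value, in particular
$\LL_{\Zn}(h') \ge -\Psi_{h'}(\hat u_h).$
Second, by strong duality at $h$ together with the $\epsilon$-minimizing property of $\hat u_h$, we have
$\duhD(\hat u_h) \le \min_u \duhD(u) + \epsilon = -\LL_{\Zn}(h) + \epsilon,$
that is $-\LL_{\Zn}(h) \ge \duhD(\hat u_h) - \epsilon$. Adding these two inequalities and using the structural observation above yields precisely the $\epsilon$-subgradient inequality.

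I do not expect any real obstacle: the statement is essentially a Fenchel-duality bookkeeping lemma. The only thing one has to be a little careful with is the sign convention in the definition of $\duhD$ in \eqref{dual_problem} (note that $\LL_{\Zn}(h) = -\min_u \duhD(u)$, not $+\min_u \duhD(u)$), so that the two bounds combine in the right direction. An alternative route would be to invoke \shortautoref{duality_epsilon_subgradient} together with the expression for $\LL_{\Zn}^*$ given in \shortautoref{conjugate_Moreau}, reducing the claim to the inequality $\LL_{\Zn}^*(X_n\trans \hat u_h) + \LL_{\Zn}(h) - \langle X_n\trans \hat u_h, h\rangle \le \epsilon$; but the direct weak-duality argument above is shorter and avoids recomputing $(g\circ X_n)^*$ at $X_n\trans \hat u_h$.
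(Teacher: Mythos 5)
Your proof is correct, but it follows a genuinely different route from the paper's. The paper proves this proposition by a chain of $\epsilon$-subdifferential calculus: it starts from $0 \in \partial_\epsilon \duhD(\hat u_h)$, applies the sum rule (\shortautoref{epsilon_subgradient_sum}) and the explicit form of the $\epsilon$-subdifferential of the quadratic part (\shortautorefex{epsilon_subgradient_quadratic}), then passes back and forth through conjugacy (\shortautoref{duality_epsilon_subgradient}) and the linear chain rule (\shortautoref{epsilon_subgradient_chain_rule_partial}), and finally identifies the resulting inclusion with $h \in \partial_\epsilon \LL_{\Zn}^*(\Xn\trans \hat u_h)$ via \shortautoref{conjugate_Moreau}. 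Your argument bypasses all of this: since $\duhD$ depends on $h$ only through the affine term $-\langle \Xn h, u\rangle$, the two inequalities
\begin{equation*}
\LL_{\Zn}(h') \ge -\Psi_{h'}(\hat u_h) = -\duhD(\hat u_h) + \langle h'-h, \Xn\trans \hat u_h\rangle
\qquad\text{and}\qquad
-\duhD(\hat u_h) \ge \LL_{\Zn}(h) - \epsilon
\end{equation*}
(the first from weak duality at $h'$, the second from strong duality at $h$ plus the $\epsilon$-minimality of $\hat u_h$) combine directly into the defining inequality of an $\epsilon$-subgradient. Both ingredients are available from \shortautoref{strong_duality}, and your sign bookkeeping ($\LL_{\Zn}(h) = -\min_u \duhD(u)$) is right. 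What your approach buys is brevity and transparency: it needs none of the calculus rules of the appendix for this particular statement. What the paper's approach buys is that it exercises the $\epsilon$-subdifferential machinery that is in any case needed elsewhere (notably Eq.~\eqref{discrepancy_gradients} of \shortautorefex{epsilon_subgradient_Moreau_envelope}, which is used in the proof of \shortautoref{epsilon-subgradient} to control $\|\nabla \LL_{\Zn}(h) - \hat\nabla\LL_{\Zn}(h)\|$); that second-moment bound does not follow from your argument alone, but it is not part of the present proposition.
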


\begin{proof}
By \shortautoref{epsilon_minimum}, the assumption that $\hat u_h$ is an $\epsilon$-minimizer of $\duhD$ is 
equivalent to the condition $0 \in \partial_\epsilon \duhD(\hat u_h)$. Now recall that, 
for any $u \in \Real^n$, the expression of the dual objective is given by
\begin{equation}
\duhD(u) = g^*(u) + \frac{1}{2 \la} \big \| \Xn \trans u \big \|^2 - \big \langle \Xn h, u \big \rangle.
\end{equation}
Consequently, thanks to \shortautoref{epsilon_subgradient_sum}, for any 
$u \in \dom(\duhD) = \dom(g^*)$, we have that
\begin{equation}
\partial_\epsilon \duhD(u) = \bigcup_{0 \le \epsilon_1 + \epsilon_2 \le \epsilon}
\partial_{\epsilon_1} g^*(u) + \partial_{\epsilon_2} \Big \{ \frac{1}{2 \la} \big \| \Xn \trans \cdot \big \|^2 
- \big \langle \Xn h, \cdot \big \rangle \Big \} (u).
\end{equation}
Thanks to \shortautorefex{epsilon_subgradient_quadratic}, for any $u \in \Real^n$, 
we can write
\begin{equation}
\partial_{\epsilon_2} \Big \{ \frac{1}{2 \la} \big \| \Xn \trans \cdot \big \|^2 
- \big \langle \Xn h, \cdot \big \rangle \Big \}(u) = 
\Big \{ \Xn \Bigl( \frac{\Xn \trans u}{\la} - h + \frac{\Xn \trans s}{\la} \Bigl)~:~\frac{1}{2} 
\Big \langle \frac{\Xn \Xn \trans s}{\la}, s \Big \rangle \le \epsilon_2 \Big \}.
\end{equation}
Hence, we know that $0 \in \partial_\epsilon \duhD(\hat u_h)$ iff
\begin{equation}
\exists~\epsilon_1, \epsilon_2, s \in \Real^n~:~0 \le \epsilon_1 + \epsilon_2 \le \epsilon,~
\frac{1}{2} \Big \langle \frac{\Xn \Xn \trans s}{\la}, s \Big \rangle \le \epsilon_2
\end{equation}
such that the following relations hold true
\begin{equation} \label{gigia}
\begin{split}
0 \in \partial_{\epsilon_1} g^*(\hat u_h) + \Xn \Bigl( \frac{\Xn \trans \hat u_h}{\la} - h + \frac{\Xn \trans s}{\la} \Bigl) 
& \iff \Xn \Bigl( h - \frac{\Xn \trans (\hat u_h + s)}{\la} \Bigl)  \in \partial_{\epsilon_1} g^*(\hat u_h) \\
\text{\shortautoref{duality_epsilon_subgradient}} & \iff \hat u_h \in \partial_{\epsilon_1} g \Bigl( \Xn \Bigl( h - 
\frac{\Xn \trans (\hat u_h + s)}{\la} \Bigl) \Bigr) \\
& \implies \Xn \trans \hat u_h \in 
\Xn \trans \partial_{\epsilon_1} g \Bigl( \Xn \Bigl( h - \frac{\Xn \trans (\hat u_h + s)}{\la} \Bigl) \Bigr) \\
\text{\shortautoref{epsilon_subgradient_chain_rule_partial}} & \implies \Xn \trans \hat u_h \in \partial_{\epsilon_1} 
\bigl( g \circ \Xn \bigr) \Bigl( h - \frac{\Xn \trans (\hat u_h + s)}{\la} \Bigr) \\
\text{\shortautoref{duality_epsilon_subgradient}} & \iff  h - \frac{\Xn \trans (\hat u_h + s)}{\la} \in 
\partial_{\epsilon_1} \bigl( g \circ \Xn \bigr)^* \bigl( \Xn \trans \hat u_h \bigr) \\
& \iff h \in \partial_{\epsilon_1} \bigl( g \circ \Xn \bigr)^* 
\bigl( \Xn \trans \hat u_h \bigr) + \frac{\Xn \trans (\hat u_h + s)}{\la}.
\end{split}
\end{equation}
Now, thanks to \shortautoref{conjugate_Moreau}, we have that, for any $\alpha \in \Real^d$, 
the Fenchel conjugate function of $\LL_{\Zn}$ is
\begin{equation}
\LL_{\Zn}^*(\alpha) = \bigl( g \circ \Xn \bigr)^*(\alpha) + \frac{1}{2 \la} \big \| \alpha \big \|^2.
\end{equation}
Hence, thanks to \shortautoref{epsilon_subgradient_sum}, for any 
$\alpha \in \dom(\LL_{\Zn}^*) = \dom \bigl( \bigl( g \circ \Xn \bigr)^* \bigr) \supset \Xn \trans \dom(g^*)$, 
we have that
\begin{equation}
\partial_\epsilon \LL_{\Zn}^*(\alpha) = \bigcup_{0 \le \epsilon_1 + \epsilon_2 \le \epsilon}
\partial_{\epsilon_1} \bigl( g \circ \Xn \bigr)^*(\alpha) + \partial_{\epsilon_2} \Big \{ 
\frac{1}{2 \la} \big \| \cdot \big \|^2 \Big \} (\alpha).
\end{equation}
Moreover, thanks to \shortautorefex{epsilon_subgradient_quadratic}, we observe that
\begin{equation}
\partial_{\epsilon_2} \Big \{ \frac{1}{2 \la} \big \| \cdot \big \|^2 \Big \}(\alpha) = 
\Big \{ \frac{\alpha + \tilde s}{\la} ~:~\frac{1}{2 \la} \big \| \tilde s \big \|^2 \le \epsilon_2 \Big \}.
\end{equation}
Therefore, making the identification $\tilde s = \Xn \trans s$, the last relation in Eq. \eqref{gigia} tells us
\begin{equation}
\begin{split}
0 \in \partial_\epsilon \duhD(\hat u_h) & \implies h \in \partial_{\epsilon_1} \bigl( g \circ \Xn \bigr)^* 
\bigl( \Xn \trans \hat u_h \bigr) + \frac{\Xn \trans (\hat u_h + s)}{\la} \implies h \in \partial_\epsilon \LL_{\Zn}^*
(\Xn \trans \hat u_h) \\ & \iff \Xn \trans \hat u_h \in \partial_\epsilon \LL_{\Zn}(h),
\end{split}
\end{equation}
where, in the last equivalence, we have used again \shortautoref{duality_epsilon_subgradient}.
This proves the desired statement.
\end{proof}


\section{SGD on the Primal: Coordinate Descent on the Dual}
\label{primal_dual_formulation_SGD_app}

In this section, we focus on the within-task algorithm we adopt in the paper,
namely \shortautoref{Within-Task Algorithm Online_paper}.
More precisely, we start from describing how the iterations generated by 
\shortautoref{Within-Task Algorithm Online_paper} can be considered as 
the primal iterations of a primal-dual algorithm in which the dual scheme
consists of a coordinate descent algorithm on the dual problem. After this,
we report in \shortautorefsubapp{main_inequality_sub} a key inequality for 
the dual decrease of this approach. From this result, 
a regret bound for \shortautoref{Within-Task Algorithm Online_paper}
and the proof of \shortautoref{epsilon-subgradient}, the key result describing 
the $\epsilon$-subgradients of our meta-algorithm, can be deduced as 
corollaries. This is done in \shortautorefsubapp{regret_bound_app} and 
\shortautorefsubapp{proof_prop_epsilon_subgradient_app}, respectively.

What follows is an adaptation of the theory developed in \cite{shalev2009mind},
where the authors do not emphasize the presence of the linear operator $\Xn$
and consider a slightly different dual problem. Specifically, proceeding as in 
\cite{shalev2009mind}, the primal-dual setting we need to consider is the following. 
At each iteration $k \in [n]$, we define the instantaneous primal problem 
\begin{equation} 
w_{h,k+1} = \argmin_{w \in \Real^d} ~ \Phi_{h,k+1}(w) 
\quad \quad \Phi_{h,k+1}(w) = \sum_{i=1}^{k} \ell_i \bigl( \langle x_i, w \rangle \bigr)
+ ~ \frac{k \la}{2} ~ \| w - h \|^2,
\end{equation}
where, $X_k \in \Real^{k \times d}$ is the matrix with rows only the 
first $k$ input vectors. The associated dual problem reads as follows
\begin{equation}
\tilde u_{h,k+1} = \argmin_ {\tilde u \in \Real^k} \Psi_{h,k+1} (\tilde u) \quad \quad 
\Psi_{h,k+1} (\tilde u) = \sum_{i =1}^k \ell_i^*(\tilde u_i) - \big \langle h, X_k \trans \tilde u \big \rangle
+ \frac{1}{2 k \la} \big \| X_k \trans \tilde u \big \|^2.
\end{equation}
In the following we will adopt the convention $\Phi_{h,1} \equiv \Psi_{h,1} \equiv 0$.

\begin{remark}[Strong Duality] \label{instantaneous_strong_duality}
Similarly to what observed in \shortautoref{approx_subgradient_prop}, 
also in this case, strong duality holds for each instantaneous 
couple of primal-dual problems above, namely, for any $k \in [n]$
\begin{equation} \label{strong_duality_k}
\Phi_{h, k+1} \bigl( w_{h, k+1} \bigr) = \min_{w \in \Real^d} \Phi_{h, k+1}(w) =
- \min_{\tilde u \in \Real^k} \Psi_{h, k+1}(\tilde u) =
- \Psi_{h, k+1} \bigl( \tilde u_{h, k+1} \bigr).
\end{equation}
Moreover, by the KKT conditions, we can express the primal solution by the 
dual one as follows
\begin{equation} \label{KKT_k}
w_{h, k+1} = - \frac{1}{k \la} X_k \trans \tilde u_{h, k+1} + h. 
\end{equation}
\end{remark}

\begin{remark}[Link Between the Instantaneous Problems and the Original Ones] \label{original_problems}
We observe that the original primal objective $\prhD$ in Eq. \eqref{primal_problem} and the 
corresponding dual objective $\duhD$ in Eq. \eqref{dual_problem} are respectively linked with 
the above instantaneous primal and dual objective functions in the following way
\begin{equation}
\frac{1}{n} \Phi_{h,n+1}(w) = \prhD (w), \quad \quad \quad 
\frac{1}{n} \Psi_{h,n+1} (\tilde u) = \duhD \Bigl( \frac{\tilde u}{n} \Bigr),
\end{equation}
for any $w \in \Real^d$ and any $\tilde u \in \Real^n$.
\end{remark}

\begin{algorithm}[H]
\caption{Within-Task Algorithm, Primal-Dual Version}\label{Within-Task Algorithm Online_new}
\begin{algorithmic}
\State ~
   \State {\bfseries Input} ~~ $\lambda > 0$ regularization parameter, $h \in \Real^d$ bias
   \vspace{.2cm}
   \State {\bfseries Initialization} ~~ $\tilde u_h^{(1)} = 0 \in \Real$, $w_h^{(1)} = h \in \Real^d$
  \vspace{.2cm}
   \State {\bfseries For} ~~ $k=1$ to $n$
   \vspace{.1cm}
    \State \qquad Receive ~~ $\ell_{k,h}(\cdot) = \ell_k\bigl( \langle x_k, \cdot \rangle \bigr) 
    + \displaystyle \frac{\la}{2} \| \cdot - h \|^2$ 
    \vspace{.1cm}
    \State \qquad Pay ~~ $\ell_{k,h} \bigl( w_h^{(k)} \bigr)$
   \vspace{.1cm}
   \State \qquad Update ~~ $\tilde u_h^{(k+1)}$ according to Eq. \eqref{update_constraint_new}
 \vspace{.1cm}
   \State \qquad Define ~~ $w_h^{(k+1)} = - \displaystyle \frac{1}{k \la} X_{k} \trans \tilde u_h^{(k+1)} + h$
   \vspace{.2cm}
 \State {{\bfseries Return} ~~ $\bigl( \tilde u_h^{(k)} \bigr)_{k =1}^{n+1}$, $\bigl( w_h^{(k)} \bigr)_{k =1}^{n+1}$,
 $\bwhD = \displaystyle \frac{1}{n} \sum_{k = 1}^n w_h^{(k)}$} 
\State ~
\end{algorithmic}
\end{algorithm}

As described in \cite{shalev2009mind}, we apply the coordinate descent algorithm on
the instantaneous dual problem outlined in \shortautoref{Within-Task Algorithm Online_new}.
More specifically, at the iteration $k$, the algorithm adds a coordinate at the last $k$-th position 
of the dual variable $\tilde u_h^{(k)}$ in the following way 
\begin{equation} \label{update_constraint_new}
\tilde u_{h,i}^{(k+1)} = 
\begin{cases}
u'_{k} & \text{if $i = k$} \\
\tilde u_{h,i}^{(k)} & \text{if $i \in [k-1]$},
\end{cases}
\end{equation}
where, $u'_{k} \in \displaystyle \partial \ell_k \bigl( \langle x_k, w_h^{(k)} \rangle \bigr)$.
We stress again that $\tilde u_h^{(k+1)} \in \Real^k$ and $\tilde u_h^{(k)} \in \Real^{k-1}$.
The primal variable is then updated by the KKT conditions outlined in Eq. \eqref{KKT_k}
in \shortautorefrem{instantaneous_strong_duality}. In the next lemma, we show that,
in this way, we exactly retrieve the iterations $\bigl( w_h^{(k)} \bigr)_k$ generated by 
\shortautoref{Within-Task Algorithm Online_paper}, and, consequently, the notation
does not conflict with the one used in the main body.

\begin{lemma}\label{identification}
Let $w_h^{(k+1)}$ be the update of the primal variable in \shortautoref{Within-Task Algorithm Online_new}.
Then, introducing the subgradient 
\begin{equation}
s_k = x_k u'_{k} + \la \bigl( w_h^{(k)} - h \bigr) \in \partial \ell_{k,h} \bigl( w_h^{(k)} \bigr),
\end{equation}
we can rewrite
\begin{equation}
w_h^{(k+1)}  =  w_h^{(k)} - \frac{1}{k \la} s_k.
\end{equation}
Consequently, the primal iterations generated by \shortautoref{Within-Task Algorithm Online_new}
coincides with the iterations generated by \shortautoref{Within-Task Algorithm Online_paper} in the
paper.
\end{lemma}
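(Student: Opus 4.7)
My plan is to prove the lemma by a direct calculation, relying on two facts already established in the excerpt: the instantaneous KKT relation from \shortautorefrem{instantaneous_strong_duality}, and the coordinate descent update in Eq. \eqref{update_constraint_new}. The whole point is to show that eliminating the dual variables via the KKT formula leaves a recursion in the primals that coincides with a subgradient step of stepsize $\gamma_k = 1/(k\la)$.

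First I would verify that $s_k$ is indeed an element of $\partial \ell_{k,h}(w_h^{(k)})$. Since $\ell_{k,h}(w) = \ell_k(\langle x_k, w\rangle) + (\la/2)\| w - h\|^2$ is the sum of a convex function composed with a linear map and a smooth quadratic, the subdifferential sum rule gives $\partial \ell_{k,h}(w) = x_k \, \partial \ell_k(\langle x_k,w\rangle) + \la(w - h)$, and picking $u'_k \in \partial \ell_k(\langle x_k, w_h^{(k)}\rangle)$ (as prescribed in both algorithms) yields precisely the claimed $s_k$.

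Next, for $k \ge 2$, I would use the coordinate descent rule \eqref{update_constraint_new} to decompose
\begin{equation*}
X_k^{\top}\tilde u_h^{(k+1)} \;=\; \sum_{i=1}^{k-1} x_i\, \tilde u_{h,i}^{(k)} + x_k\, u'_k \;=\; X_{k-1}^{\top}\tilde u_h^{(k)} + x_k u'_k.
\end{equation*}
Then I would invoke the KKT relation at the previous iteration, namely $w_h^{(k)} = -\frac{1}{(k-1)\la} X_{k-1}^{\top}\tilde u_h^{(k)} + h$, to rewrite $X_{k-1}^{\top}\tilde u_h^{(k)} = (k-1)\la\,(h - w_h^{(k)})$. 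Substituting both identities into the defining formula $w_h^{(k+1)} = -\frac{1}{k\la} X_k^{\top} \tilde u_h^{(k+1)} + h$ and simplifying gives
\begin{equation*}
w_h^{(k+1)} \;=\; \frac{k-1}{k}\, w_h^{(k)} + \frac{1}{k}\, h - \frac{1}{k\la}\, x_k u'_k.
\end{equation*}
On the other side, expanding $w_h^{(k)} - \frac{1}{k\la} s_k$ with $s_k = x_k u'_k + \la (w_h^{(k)} - h)$ yields exactly the same expression, proving the first claim.

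For the base case $k=1$, the initialization $\tilde u_h^{(1)} = 0$ and $w_h^{(1)} = h$ makes $X_0^{\top}\tilde u_h^{(1)} = 0$, so that $w_h^{(2)} = -\frac{1}{\la} x_1 u'_1 + h$, which again agrees with $w_h^{(1)} - \frac{1}{\la} s_1$ since $s_1 = x_1 u'_1$. Once the recursion is identified, the second claim is immediate: both \shortautoref{Within-Task Algorithm Online_paper} and \shortautoref{Within-Task Algorithm Online_new} start at $w_h^{(1)} = h$ and apply the same update $w_h^{(k+1)} = w_h^{(k)} - \gamma_k s_k$ with $\gamma_k = 1/(k\la)$, so by induction the iterates are identical. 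I do not anticipate any real obstacle here; the only subtlety is careful bookkeeping of the $k\la$ versus $(k-1)\la$ factors between successive KKT relations, which is what makes the coefficient of the subgradient step come out to $1/(k\la)$ rather than some other value.
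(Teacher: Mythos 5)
Your proposal is correct and follows essentially the same route as the paper's proof: decompose $X_k^{\top}\tilde u_h^{(k+1)} = X_{k-1}^{\top}\tilde u_h^{(k)} + x_k u'_k$, eliminate the dual variable via the KKT relation at step $k$, and verify the resulting recursion matches $w_h^{(k)} - \frac{1}{k\la}s_k$, with the base case $k=1$ handled separately. The only cosmetic difference is that you make the subdifferential sum-rule justification of $s_k \in \partial\ell_{k,h}(w_h^{(k)})$ explicit, which the paper leaves implicit.
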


\begin{proof} 
We start from observing that, for any $k \in [n]$, by definition, we have 
\begin{equation}
w_h^{(k+1)} = - \frac{1}{k \la} X_{k} \trans \tilde u_h^{(k+1)} + h.
\end{equation}
For $k = 1$ the statement holds, as a matter of fact, introducing the subgradient
$s_1 = x_1 \tilde u'_1 + \la \bigl( w_h^{(1)} - h \bigr) \in \partial \ell_{1,h} \bigl( w_h^{(1)} \bigr)$,
we can write
\begin{equation}
w_h^{(2)} = - \frac{1}{\la} x_1 \tilde u_1' + h= - \frac{1}{\la} \bigl( s_1 - \la \bigl( w_h^{(1)}-h \bigr) \bigr) + h
= w_h^{(1)} - \frac{1}{\la} s_1.
\end{equation}
Now, we show that the statement holds also for $k = 2, \dots, n$. 
Since $X_k \trans \tilde u_h^{(k+1)} = X_{k-1} \trans \tilde u_h^{(k)} + x_k u'_k$, 
recalling again the subgradient $s_k = x_k u'_{k} + \la \bigl( w_h^{(k)}-h \bigr) \in 
\partial \ell_{k,h} \bigl( w_h^{(k)} \bigr)$ of the regularized loss, we can write the following
\begin{equation}
\begin{split}
w_h^{(k+1)} & = - \frac{1}{k \la} X_{k} \trans \tilde u_h^{(k+1)} + h
= - \frac{1}{k \la} \Bigl( X_{k-1} \trans \tilde u_h^{(k)} + x_k u'_k  \Bigr) + h\\
& = \frac{(k-1) \la}{k \la} \Bigl( - \frac{1}{(k-1) \la} X_{k-1} \trans \tilde u_h^{(k)} \Bigr)
- \frac{x_k u'_k}{k \la} + h\\
& = \frac{(k-1) \la (w_h^{(k)} - h) - s_k + \la ( w_h^{(k)} - h)}{k \la} + h \\
& = \frac{k \la w_h^{(k)} - s_k}{k \la} = w_h^{(k)} - \frac{1}{k \la} s_k.
\end{split}
\end{equation}
where, in the fourth equality, we have exploited the definition of the primal iterates in 
\shortautoref{Within-Task Algorithm Online_new}.
\end{proof}


\subsection{Main Inequality on the Dual Decrease}
\label{main_inequality_sub}

The next proposition is a key tool in our analysis. It coincides with a combination 
of slightly different versions of Lemma $2$ and Thm. $1$ in \cite{shalev2009mind}.

\begin{proposition}[Dual Decrease of \shortautoref{Within-Task Algorithm Online_new}, 
{\cite[Lemma $2$ and Thm. $1$]{shalev2009mind}}] \label{dual_decrease}
Let $\bigl( \tilde u_h^{(k)} \bigr)_k$, $\bigl( w_h^{(k)} \bigr)_k$ be generated 
according to \shortautoref{Within-Task Algorithm Online_new} for a fixed bias of $h \in \H$
and a regularization parameter $\la > 0$. Then, under \shortautoref{bounded_inputs} 
and \shortautoref{lipschitz_loss}, we have that
\begin{equation*} \label{rate_online_Lipsc}
\Psi_{h, n+1} \bigl( \tilde u_h^{(n+1)} \bigr) - \Psi_{h, n+1} \bigl( \tilde u_{h, n+1} \bigr) 
\le - \Bigl(\sum_{k =1}^n \ell_{k,h} \bigl( w_h^{(k)} \bigr) - \Phi_{h, n+1}\bigl( w_{h, n+1} \bigr) \Bigr) 
+ \frac{2 \rx^2 L^2 \bigl( {\rm{log}}(n) + 1 \bigr)}{\la}.
\end{equation*}
\end{proposition}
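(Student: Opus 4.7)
My approach is a primal--dual potential argument: define $D_k := \Psi_{h,k+1}(\tilde u_h^{(k+1)})$ and show that each coordinate--descent step increases $D_k$ by at least the instantaneous loss $\ell_{k,h}(w_h^{(k)})$ up to an $\mathcal O(1/(\lambda k))$ remainder. Summing over $k$ gives a bound on $\sum_{k=1}^n \ell_{k,h}(w_h^{(k)}) + \Psi_{h,n+1}(\tilde u_h^{(n+1)})$, which I then convert into the stated inequality via strong duality (\shortautorefrem{instantaneous_strong_duality}), namely $-\Psi_{h,n+1}(\tilde u_{h,n+1}) = \Phi_{h,n+1}(w_{h,n+1})$.

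The crux is a per-step identity. Expanding $D_k - D_{k-1}$ using the update rule~\eqref{update_constraint_new}, which only alters the $k$-th coordinate, produces three contributions: (i) a new conjugate term $\ell_k^*(u'_k)$, (ii) a linear change $-u'_k \langle h, x_k\rangle$, and (iii) the quadratic change
\[
\frac{1}{2k\lambda}\bigl\|X_{k-1}\trans \tilde u_h^{(k)} + x_k u'_k\bigr\|^2 - \frac{1}{2(k-1)\lambda}\bigl\|X_{k-1}\trans \tilde u_h^{(k)}\bigr\|^2.
\]
I would then substitute the primal--dual identity $X_{k-1}\trans \tilde u_h^{(k)} = (k-1)\lambda(h - w_h^{(k)})$ coming from the definition of $w_h^{(k)}$ in \shortautoref{Within-Task Algorithm Online_new} and invoke the Fenchel--Young equality $\ell_k^*(u'_k) = u'_k \langle x_k, w_h^{(k)}\rangle - \ell_k(\langle x_k, w_h^{(k)}\rangle)$, which holds since $u'_k \in \partial \ell_k(\langle x_k, w_h^{(k)}\rangle)$. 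After cancellations the surviving terms complete a square and I expect to obtain
\[
\ell_{k,h}(w_h^{(k)}) + D_k - D_{k-1} \;=\; \frac{\|s_k\|^2}{2 k \lambda}, \qquad s_k = u'_k x_k + \lambda(w_h^{(k)} - h),
\]
where $s_k$ is precisely the subgradient computed by \shortautoref{Within-Task Algorithm Online_paper}.

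To close the argument I would bound $\|s_k\| \le 2 L R$ uniformly. From the same primal--dual link, $\lambda(w_h^{(k)} - h) = -\tfrac{1}{k-1}\sum_{i=1}^{k-1} x_i u'_i$ (with the base case $w_h^{(1)} = h$), so combining $|u'_i|\le L$ from \shortautoref{lipschitz_loss} and $\|x_i\|\le R$ from \shortautoref{bounded_inputs} with the triangle inequality yields $\|\lambda(w_h^{(k)} - h)\| \le L R$, hence $\|s_k\|\le 2LR$. Summing the per-step identity from $k=1$ to $n$, with the convention $D_0 = \Psi_{h,1}(\tilde u_h^{(1)}) = 0$ and the harmonic-sum bound $\sum_{k=1}^n 1/k \le \log(n)+1$, yields $\sum_{k=1}^n \ell_{k,h}(w_h^{(k)}) + \Psi_{h,n+1}(\tilde u_h^{(n+1)}) \le 2 L^2 R^2 (\log(n)+1)/\lambda$. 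Subtracting $\Psi_{h,n+1}(\tilde u_{h,n+1}) = -\Phi_{h,n+1}(w_{h,n+1})$ from both sides produces the statement of the proposition.

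The main obstacle I anticipate is the algebra behind the per-step identity: one must simultaneously reconcile the denominator change from $2(k-1)\lambda$ to $2k\lambda$ with the appearance of the new term $x_k u'_k$, eliminate $X_{k-1}\trans \tilde u_h^{(k)}$ via the primal--dual relation, and then verify that the cross and regularization contributions exactly complete the square $\|s_k\|^2/(2k\lambda)$ with no residue. Once this telescoping structure is uncovered, the rest reduces to the triangle-inequality bound on $\|s_k\|$ and a harmonic-sum estimate, and the constants $L$, $R$, $\lambda$ emerge in the stated form.
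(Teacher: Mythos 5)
Your proposal is correct and follows essentially the same route as the paper's proof: the per-step dual-decrease identity $\ell_{k,h}(w_h^{(k)})+\Delta_k=\|s_k\|^2/(2k\lambda)$ via Fenchel--Young and the KKT link, the uniform bound $\|s_k\|^2\le 4L^2\rx^2$, the harmonic-sum estimate, and strong duality to convert $-\Psi_{h,n+1}(\tilde u_{h,n+1})$ into $\Phi_{h,n+1}(w_{h,n+1})$. The only cosmetic difference is that you bound $\|s_k\|$ by the triangle inequality where the paper uses $\|a+b\|^2\le 2\|a\|^2+2\|b\|^2$, which yields the same constant.
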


\begin{proof}
For any $k \in [n]$, using the convention $\Psi_{h,1} \equiv 0$, define the dual decrease
\begin{equation}
\Delta_k = \Psi_{h,k+1} \bigl( \tilde u_h^{(k+1)} \bigr) - \Psi_{h,k} \bigl( \tilde u_h^{(k)} \bigr).
\end{equation}
Hence, thanks to the telescopic sum and again the assumption $\Psi_{h,1} \equiv 0$, 
we can write
\begin{equation} \label{j_new}
\Psi_{h,n+1} \bigl( \tilde u_h^{(n+1)} \bigr) = \sum_{k=1}^n \Delta_k + 
\Psi_{h,1} \bigl( \tilde u_h^{(1)} \bigr) = \sum_{k=1}^n \Delta_k.
\end{equation}
We now show that, for any $k \in [n]$, the following relation holds
\begin{equation} \label{relation}
\Delta_k = - \ell_{k,h} \bigl( w_h^{(k)} \bigr) + \frac{1}{2 k \la} \big \| x_k u'_{k} + \la \bigl( w_h^{(k)} - h \bigr) \big \|^2.
\end{equation}
We start from considering the case $k = 2, \dots, n$. In this case, thanks to the updating rule in 
Eq. \eqref{update_constraint_new}, the fact $X_k \trans \tilde u_h^{(k+1)} = X_{k-1} 
\trans \tilde u_h^{(k)} + x_k u'_k$ and the closed form of the dual objective, we have that
\begin{equation*} \label{jj}
\begin{split}
\Delta_k & = \Psi_{h,k+1} \bigl( \tilde u_h^{(k+1)} \bigr) - \Psi_{h,k} \bigl( \tilde u_h^{(k)} \bigr) \\
& = \sum_{i =1}^{k-1} \ell_i^*\bigl( \tilde u^{(k)}_{h,i} \bigr) + \ell_k^*\bigl( u'_{k} \bigr)
+ \frac{1}{2 k \la} \big \| X_{k-1} \trans \tilde u_h^{(k)} + x_k u'_{k} \big \|^2 
- \big \langle h, X_{k-1} \trans \tilde u_h^{(k)} + x_k u'_{k} \big \rangle \\
& \quad \quad \quad \quad \quad \quad \quad \quad \quad \quad \quad 
\quad \quad \quad \quad \quad \quad \quad \quad \quad \quad \quad  
- \sum_{i =1}^{k-1} \ell_i^*\bigl( \tilde u^{(k)}_{h,i} \bigr) 
- \frac{1}{2 (k-1) \la} \big \| X_{k-1} \trans \tilde u_h^{(k)} \big \|^2 
+ \big \langle h, X_{k-1} \trans \tilde u_h^{(k)} \big \rangle \\
& = \ell_k^*\bigl( u'_{k} \bigr) + \frac{1}{2 k \la} \big \| X_{k-1} \trans \tilde u_h^{(k)} + x_k u'_{k} \big \|^2 
- \big \langle h, x_k u'_{k} \big \rangle - \frac{1}{2 (k-1) \la} \big \| X_{k-1} \trans \tilde u_h^{(k)} \big \|^2 \\
& = \ell_k^*\bigl( u'_{k} \bigr) + \frac{1}{2 \la} \Bigl( \frac{1}{k} -  \frac{1}{k-1}\Bigr) \big \| X_{k-1} 
\trans \tilde u_h^{(k)} \big \|^2 + \frac{1}{2 k \la} \big \| x_k u'_{k} \big \|^2 + \Big \langle \frac{X_{k-1} \trans 
\tilde u_h^{(k)}}{k \la} - h, x_k u'_{k} \Big \rangle \\
& = \ell_k^*\bigl( u'_{k} \bigr) + \frac{1}{2 \la} \Bigl( \frac{1}{k} -  \frac{1}{k-1}\Bigr) \la^2 (k-1)^2 
\big \| w_h^{(k)} - h \big \|^2 + \frac{1}{2 k \la} \big \| x_k u'_{k} \big \|^2 - \Big \langle \frac{(k-1) \la}
{k \la} \bigl( w_h^{(k)}-h \bigr) + h, x_k u'_{k} \Big \rangle \\
& = \ell_k^*\bigl( u'_{k} \bigr) + \frac{\la}{2} \Bigl( \frac{1}{k} - 1 \Bigr) 
\big \| w_h^{(k)}- h\big \|^2 + \frac{1}{2 k \la} \big \| x_k u'_{k} \big \|^2 - \Big \langle 
\Bigl( 1 - \frac{1}{k}\Bigr) \bigl( w_h^{(k)} - h \bigr) + h, x_k u'_{k} \Big \rangle \\
& = \Bigl( \ell_k^*\bigl( u'_{k} \bigr) - \big \langle w_h^{(k)}, x_k u'_{k} \big \rangle - 
\frac{\la}{2} \big \| w_h^{(k)} - h \big \|^2 \Bigr) + \frac{1}{2 k \la} \Bigl( 
\la^2 \big \| w_h^{(k)}-h \big \|^2 + \big \| x_k u'_{k} \big \|^2 + 2 \big 
\langle \la \bigl(w_h^{(k)}-h \bigr), x_k u'_{k} \big \rangle \Bigr) \\
& = - \Bigl(\ell_k \bigl( \langle x_k, w_h^{(k)} \rangle \bigr) + \frac{\la}{2} \big \| w_h^{(k)}-h \big \|^2 \Bigr) 
+ \frac{1}{2 k \la} \big \| x_k u'_{k} + \la \bigl(w_h^{(k)}-h\bigr) \big \|^2 \\
&  = - \ell_{k,h} \bigl( w_h^{(k)} \bigr) + \frac{1}{2 k \la} \big \| x_k u'_{k} + \la \bigl( w_h^{(k)}-h \bigr) \big \|^2,
\end{split}
\end{equation*}
where, in the fifth equality we have used the definition of the primal variable 
$w_h^{(k)} = - \displaystyle \frac{1}{(k-1) \la} X_{k-1} \trans \tilde u_h^{(k)}+h$,
in the sixth equality we have used the relation 
\begin{equation}
\frac{1}{\la}\Bigl( \frac{1}{k} - \frac{1}{k-1}\Bigr) \la^2 (k-1)^2 = \la \Bigl( \frac{1}{k} - 1 \Bigr), 
\end{equation}
and, finally, in the eighth equality we have exploited the assumption $u_k' \in \partial \ell_k \bigl( \langle x_k, w_h^{(k)} \rangle \bigr)$, implying by Fenchel--Young equality
\begin{equation}
\ell_k^*\bigl( u'_{k} \bigr) - \big \langle w_h^{(k)}, x_k u'_{k} \big \rangle = - \ell_k \bigl( \langle x_k, w_h^{(k)} \rangle \bigr).
\end{equation}
We now observe that the above relation in Eq. \eqref{relation} holds also in the 
case $k =1$, as a matter of fact, by definition, since $\Psi_{h,1} \equiv 0$, we have 
\begin{equation}
\begin{split}
\Delta_1 & = \Psi_{h,2} \bigl( \tilde u_h^{(2)} \bigr) - \Psi_{h,1} \bigl( \tilde u_h^{(1)} \bigr) 
= \Psi_{h,2} \bigl( \tilde u_h^{(2)} \bigr) \\
& = \ell_1^*(u'_{1}) - \big \langle h, x_1 u'_{1} \big \rangle + \frac{1}{2 \la} \big \| x_1 u'_{1} \big \|^2 \\
& = \Bigl( \ell_1^*(u'_{1}) - \big \langle w_h^{(1)}, x_1 u'_{1} \big \rangle - \frac{\la \big \| w_h^{(1)} - h \big \|^2}{2} \Bigr)
+ \frac{1}{2 \la} \big \| x_1 u'_{1} + \la \bigl(w_h^{(1)}-h \bigr) \big \|^2 \\
& = - \Bigl(\ell_1 \bigl( \langle x_1, w_h^{(1)} \rangle \bigr) + \frac{\la}{2} \big \| w_h^{(1)}-h \big \|^2 \Bigr)  + \frac{1}{2 \la} \big \| x_1 u'_{1} + \la \bigl( w_h^{(1)}-h \bigr) \big \|^2 \\
& = - \ell_{1,h} \bigl(w_h^{(1)} \bigr) + \frac{1}{2 \la} \big \| x_1 u'_{1} + \la \bigl( w_h^{(1)}-h \bigr) \big \|^2,
\end{split}
\end{equation} 
where, in the fourth equality we have rewritten
\begin{equation}
\frac{1}{2 \la} \big \| x_1 u'_{1} \big \|^2 
= \frac{1}{2 \la} \big \| x_1 u'_{1} + \la \bigl( w_h^{(1)}-h \bigr) \big \|^2
- \big \langle w_h^{(1)}-h, x_1 u'_{1} \big \rangle - 
\frac{\la \big \| w_h^{(1)} - h \big \|^2}{2},
\end{equation}
and, in the fifth equality, we have used again the assumption 
$u_1' \in \partial \ell_1 \bigl( \langle x_1, w_h^{(1)} \rangle \bigr)$, implying 
by Fenchel--Young equality
\begin{equation}
\ell_1^*\bigl( u'_{1} \bigr) - \big \langle w_h^{(1)}, x_1 u'_{1} \big \rangle =
- \ell_1 \bigl( \langle x_1, w_h^{(1)} \rangle \bigr).
\end{equation}
Therefore, using Eq. \eqref{j_new} and summing over $k \in [n]$, we get the following
\begin{equation} \label{ggg}
\begin{split}
\Psi_{h,n+1} \bigl( \tilde u_h^{(n+1)} \bigr) & = \sum_{k =1}^n \Delta_k = - \sum_{k =1}^n 
\ell_{k,h} \bigl( w_h^{(k)} \bigr) + \sum_{k =1}^n \frac{1}{2 k \la} \big \| x_k u'_{k} + 
\la \bigl( w_h^{(k)}-h \bigr) \big \|^2 \\
& = - \sum_{k =1}^n \ell_{k,h} \bigl( w_h^{(k)} \bigr) + \frac{1}{2 \la} \sum_{k =1}^n \frac{1}{k} 
\big \| x_k u'_{k} + \la \bigl( w_h^{(k)} - h \bigr) \big \|^2.
\end{split}
\end{equation}
Now, for $k = 2, \dots, n$, thanks to the definition of $w_h^{(k)}$, we can write
\begin{equation}
\la \bigl(w_h^{(k)}-h \bigr) = - \frac{1}{k-1} X_{k-1} \trans \tilde u_h^{(k)} 
= - \frac{1}{k-1} \sum_{i = 1}^{k-1} x_i u'_{i}.
\end{equation}
Hence, under \shortautoref{bounded_inputs} and \shortautoref{lipschitz_loss},
since $ | u'_i | \le L $, for any $i$, for $k = 2, \dots, n$, we get
\begin{equation}
\big \| \la \bigl( w_h^{(k)}-h \bigr) \big \| \le L \rx.
\end{equation}
Moreover, we observe that the above majorization holds also for the case $k = 1$, 
as a matter of fact, thanks to the definition $w_h^{(1)} = h$, we have that
\begin{equation}
\big \| \la \bigl( w_h^{(1)}-h \bigr) \big \| = 0.
\end{equation}
Hence, using the inequality $\| a + b \|^2 \le 2 \| a \|^2 + 2 \| b \|^2$ for any $a, b \in \Real^d$, 
for $k = 1, \dots, n$, we get
\begin{equation}
\big \| x_k u'_{k} + \la \bigl( w_h^{(k)}-h \bigr) \|^2 \le 
2 \big \| x_k u'_{k} \big \|^2 + 2 \big \| \la \bigl( w_h^{(k)}-h \bigr) \big \|^2 \le 4 \rx^2 L^2.
\end{equation}
Finally, coming back to Eq. \eqref{ggg}, using the inequality $\sum_{k =1}^n 1/k \le 
{\rm{log}}(n) + 1$, we get
\begin{equation}
\begin{split}
\Psi_{h,n+1} \bigl( \tilde u_h^{(n+1)} \bigr) & \le - \sum_{k =1}^n \ell_{k,h} \bigl( w_h^{(k)} \bigr) 
+ \frac{1}{2 \la} \sum_{k =1}^n \frac{1}{k} \big \| x_k u'_{k} + \la \bigl( w_h^{(k)}-h \bigr) \big \|^2 \\
& \le - \sum_{k =1}^n \ell_{k,h} \bigl( w_h^{(k)} \bigr) + \frac{2 \rx^2 L^2}{\la} \sum_{k =1}^n \frac{1}{k} \\
& \le - \sum_{k =1}^n \ell_{k,h} \bigl( w_h^{(k)} \bigr) + \frac{2 \rx^2 L^2 \bigl( {\rm{log}}(n) + 1 \bigr)}{\la}.
\end{split}
\end{equation}
The desired statement follows by adding to both sides $- \Psi_{h, n+1} \bigl( \tilde u_{h, n+1} \bigr)$ 
and observing that, by strong duality, as already observed in Eq. \eqref{strong_duality_k} in 
\shortautorefrem{instantaneous_strong_duality}, we have that
\begin{equation}
\Phi_{h, k+1} \bigl( w_{h, k+1} \bigr) = \min_{w \in \Real^d} \Phi_{h, k+1}(w) =
- \min_{\tilde u \in \Real^k} \Psi_{h, k+1}(\tilde u) =
- \Psi_{h, k+1} \bigl( \tilde u_{h, k+1} \bigr).
\end{equation}
\end{proof}


\subsection{Regret Bound for \shortautoref{Within-Task Algorithm Online_new}}
\label{regret_bound_app}

The following result is a direct corollary of \shortautoref{dual_decrease}. 
It is a well-known fact and it coincides with a regret bound for the iterations in 
\shortautoref{Within-Task Algorithm Online_paper}. This result will be then
used in the following \shortautorefsubapp{statistical_preliminaries_app} 
in order to prove \shortautoref{estimation_error}.

\begin{corollary} \label{inner regret bound}
Let $\bigl( w_h^{(k)} \bigr)_k$ be the iterations generated by \shortautoref{Within-Task Algorithm Online_paper}.
Then, under the same assumptions of \shortautoref{dual_decrease}, for any $w \in \Real^d$, 
the following regret bound holds
\begin{equation}
\frac{1}{n} \sum_{k =1}^n \ell_{k,h} \bigl( w_h^{(k)} \bigr) - \prhD( w ) \le 
\frac{1}{n} \sum_{k =1}^n \ell_{k,h} \bigl( w_h^{(k)} \bigr) - \prhD( \whD )
\le \frac{2 \rx^2 L^2 \bigl( {\rm{log}}(n) + 1 \bigr)}{\la n}.
\end{equation}
\end{corollary}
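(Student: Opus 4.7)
The plan is to derive the corollary directly from Proposition \ref{dual_decrease}, combined with the weak-duality observation that any dual value is no smaller than the optimal one, plus the trivial scaling between the instantaneous objective $\Phi_{h,n+1}$ and the per-sample primal $\prhD$ recorded in Remark \ref{original_problems}.

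First, I would dispose of the leftmost inequality, which is immediate: since $\whD$ is by construction the unique minimizer of $\prhD$ over $\Real^d$, we have $\prhD(\whD)\le\prhD(w)$ for every $w\in\Real^d$; hence $-\prhD(w)\le-\prhD(\whD)$, and adding the common quantity $\frac{1}{n}\sum_{k=1}^n\ell_{k,h}\bigl(w_h^{(k)}\bigr)$ to both sides gives the claim.

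For the right-hand inequality, which is the substantive part, I would start from the conclusion of Proposition \ref{dual_decrease} and simply rearrange it as
\begin{equation*}
\sum_{k=1}^n \ell_{k,h}\bigl(w_h^{(k)}\bigr) - \Phi_{h,n+1}\bigl(w_{h,n+1}\bigr) \le \bigl(\Psi_{h,n+1}\bigl(\tilde u_{h,n+1}\bigr) - \Psi_{h,n+1}\bigl(\tilde u_h^{(n+1)}\bigr)\bigr) + \frac{2\rx^2 L^2\bigl({\rm log}(n)+1\bigr)}{\la}.
\end{equation*}
Since $\tilde u_{h,n+1}$ is by definition a minimizer of $\Psi_{h,n+1}$, the parenthesized term on the right is non-positive and can be dropped. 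Then I would invoke Remark \ref{original_problems}, which says $\Phi_{h,n+1} = n\,\prhD$ pointwise; in particular the two functions share the same minimizer, so $w_{h,n+1}=\whD$ and $\Phi_{h,n+1}\bigl(w_{h,n+1}\bigr)=n\,\prhD(\whD)$. Dividing the resulting inequality by $n$ yields exactly the stated bound.

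I do not foresee any genuine obstacle: all of the real work has already been absorbed into Proposition \ref{dual_decrease}. The corollary is pure bookkeeping, packaging together three ingredients, optimality of $\whD$ for the leftmost inequality, weak duality (non-negativity of the inner duality gap) to discard the dual-suboptimality term, and the linear rescaling between the sum-form $\Phi_{h,n+1}$ and the average-form $n\,\prhD$.
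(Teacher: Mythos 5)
Your proposal is correct and follows essentially the same route as the paper: rearrange Proposition \ref{dual_decrease}, discard the dual-suboptimality term $\Psi_{h,n+1}\bigl(\tilde u_{h,n+1}\bigr)-\Psi_{h,n+1}\bigl(\tilde u_h^{(n+1)}\bigr)\le 0$ using that $\tilde u_{h,n+1}$ minimizes $\Psi_{h,n+1}$, identify $\Phi_{h,n+1}=n\,\prhD$ via Remark \ref{original_problems} so that $w_{h,n+1}=\whD$, divide by $n$, and obtain the leftmost inequality from the optimality of $\whD$. The only cosmetic difference is that you label the dropped term as a weak-duality gap, whereas the paper simply invokes the definition of $\tilde u_{h,n+1}$ as the dual minimizer; the underlying fact is identical.
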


\begin{proof} 
We start from observing that, as already pointed out in \shortautorefrem{original_problems}, 
for any $w \in \Real^d$, we have $\Phi_{h,n+1}(w) / n = \prhD (w)$. 
Consequently, $\whD = \argmin_{w \in \Real^d} \Phi(w) = \argmin_{w 
\in \Real^d} \Phi_{h, n+1} (w) = w_{h, n+1}$. This implies $\Phi_{h, n+1}
\bigl( w_{h, n+1} \bigr)/n = \prhD( \whD)$. Hence, thanks to this last observation, the 
definition of $\tilde u_{h, n+1}$ and \shortautoref{dual_decrease}, we can write
\begin{equation*}
0 \le \Psi_{h, n+1} \bigl( \tilde u_h^{(n+1)} \bigr) - \Psi_{h, n+1} \bigl( \tilde u_{h, n+1} \bigr) 
\le - \Bigl(\sum_{k =1}^n \ell_{k,h} \bigl( w_h^{(k)} \bigr) - \Phi_{h, n+1}\bigl( w_{h, n+1} \bigr) \Bigr) 
+ \frac{2 \rx^2 L^2 \bigl( {\rm{log}}(n) + 1 \bigr)}{\la}.
\end{equation*}
The statement derives from dividing by $n$. The first inequality simply derives from the definition of $\whD$.
\end{proof}


\subsection{Proof of \shortautoref{epsilon-subgradient}}
\label{proof_prop_epsilon_subgradient_app}

The second corollary deriving from \shortautoref{dual_decrease} is the main tool used to prove
\shortautoref{epsilon-subgradient}. It essentially states that the last dual iteration of 
\shortautoref{Within-Task Algorithm Online_new} is an $\epsilon$-minimizer of our original
dual objective $\duhD$ in Eq. \eqref{dual_problem}, for an appropriate value of $\epsilon$. 
This observation, combined with an expectation argument and \shortautoref{approx_subgradient_prop}, 
allows us to build an $\epsilon$-subgradient for the meta-objective function, as described in 
\shortautoref{epsilon-subgradient}. 

\begin{corollary} \label{dual_accuracy}
Let $\tilde u_h^{(n+1)}$ be the last dual iteration of \shortautoref{Within-Task Algorithm Online_new}.
Then, under the same assumptions of \shortautoref{dual_decrease}, for any $w \in \Real^d$, 
the vector $\hat u_h = \tilde u_h^{(n+1)}/n$ is an $\epsilon$-minimizer of the dual objective
$\duhD$ in Eq. \eqref{dual_problem}, with
\begin{equation} \label{epsilon_our_v}
\epsilon = - \Bigl(\frac{1}{n} \sum_{k =1}^n \ell_{k,h} \bigl( w_h^{(k)} \bigr) - \prhD \bigl( w \bigr) \Bigr) 
+ \frac{2 \rx^2 L^2 \bigl( {\rm{log}}(n) + 1 \bigr)}{\la n},
\end{equation}
where $\bigl( w_h^{(k)} \bigr)_k$ is the iteration generated by \shortautoref{Within-Task Algorithm Online_paper}.
\end{corollary}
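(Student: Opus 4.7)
The plan is to derive the corollary directly from Proposition~\ref{dual_decrease} by converting the instantaneous objectives $\Psi_{h,n+1}, \Phi_{h,n+1}$ back to the original problems $\duhD, \prhD$ through the scaling identities recorded in Remark~\ref{original_problems}.

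First, I would rewrite the four quantities appearing in Proposition~\ref{dual_decrease} in terms of the original primal-dual objectives. Using $\Psi_{h,n+1}(\tilde u) = n\, \duhD(\tilde u/n)$, the left-hand side becomes $n\bigl(\duhD(\hat u_h) - \duhD(\tilde u_{h,n+1}/n)\bigr)$, and since $\tilde u_{h,n+1}$ minimizes $\Psi_{h,n+1}$ the rescaled vector $\tilde u_{h,n+1}/n$ minimizes $\duhD$ and coincides with $\uhD$, so this equals $n\bigl(\duhD(\hat u_h) - \duhD(\uhD)\bigr)$. Analogously, using $\Phi_{h,n+1}(w) = n\, \prhD(w)$ together with $w_{h,n+1} = \whD$, one has $\Phi_{h,n+1}(w_{h,n+1}) = n\, \prhD(\whD)$. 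Dividing the dual-decrease inequality through by $n$ then yields
\begin{equation*}
\duhD(\hat u_h) - \duhD(\uhD) \le -\Bigl(\frac{1}{n}\sum_{k=1}^n \ell_{k,h}(w_h^{(k)}) - \prhD(\whD)\Bigr) + \frac{2 \rx^2 L^2 ({\rm log}(n)+1)}{\la n}.
\end{equation*}

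Next, for any fixed $w \in \Real^d$, optimality of $\whD$ gives $\prhD(\whD) \le \prhD(w)$, so replacing $\prhD(\whD)$ by $\prhD(w)$ only enlarges the right-hand side. The resulting bound is exactly $\duhD(\hat u_h) - \duhD(\uhD) \le \epsilon$ with $\epsilon$ as in \eqref{epsilon_our_v}. Since $\uhD$ is a global minimizer of $\duhD$, this is equivalent to $\duhD(\hat u_h) \le \duhD(u) + \epsilon$ for every $u \in \Real^n$, which by Lemma~\ref{epsilon_minimum} is precisely $0 \in \partial_\epsilon \duhD(\hat u_h)$, i.e.\ $\hat u_h$ is an $\epsilon$-minimizer of $\duhD$.

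I do not expect any serious obstacle here: the argument is essentially a rescaling coupled with a direct appeal to Lemma~\ref{epsilon_minimum}, with all the analytic work already carried out in Proposition~\ref{dual_decrease}. The only point requiring a brief justification is the correspondence between minimizers of the instantaneous and the original problems under the $1/n$ scaling, which is immediate from Remark~\ref{original_problems} and the strict positivity of $n$.
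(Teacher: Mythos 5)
Your proposal is correct and follows essentially the same route as the paper: rescale the dual-decrease inequality of \shortautoref{dual_decrease} by $1/n$ via \shortautorefrem{original_problems}, upper bound $\prhD(\whD)$ by $\prhD(w)$ using optimality of $\whD$, and conclude via \shortautoref{epsilon_minimum}. The only cosmetic difference is that you argue directly on function values, whereas the paper phrases the final step through $0 \in \partial_\epsilon\bigl(\tfrac{1}{n}\Psi_{h,n+1}\bigr)\bigl(\tilde u_h^{(n+1)}\bigr)$ and the scaling rule for $\epsilon$-subdifferentials (\shortautoref{epsilon_subgradient_chain_rule_partialscalar}); the two are equivalent.
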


\begin{proof}
We start from recalling that, as already observed in \shortautoref{dual_decrease},
the primal iterations generated by \shortautoref{Within-Task Algorithm Online_new} 
coincide with the iterations generated by \shortautoref{Within-Task Algorithm Online_paper}.
Now, thanks to \shortautoref{dual_decrease}, dividing by $n$, we have that
\begin{equation*}
\frac{1}{n} \Psi_{h, n+1} \bigl( \tilde u_h^{(n+1)} \bigr) - \frac{1}{n} \Psi_{h, n+1} \bigl( \tilde u_{h, n+1} \bigr) 
\le \tilde \epsilon,
\end{equation*}
with 
\begin{equation}
\tilde \epsilon = - \Bigl(\frac{1}{n} \sum_{k =1}^n \ell_{k,h} \bigl( w_h^{(k)} \bigr) - \frac{1}{n} \Phi_{h, n+1}
\bigl( w_{h, n+1} \bigr) \Bigr) + \frac{2 \rx^2 L^2 \bigl( {\rm{log}}(n) + 1 \bigr)}{\la n}.
\end{equation}
As already pointed out, we now observe that, $\Phi_{h, n+1} \bigl( w_{h, n+1} \bigr)/n = 
\prhD( \whD)$, hence, for any $w \in \Real^d$, we can rewrite 
\begin{equation} \label{epsilon_value}
\begin{split}
\tilde \epsilon & = - \Bigl(\frac{1}{n} \sum_{k =1}^n \ell_{k,h} \bigl( w_h^{(k)} \bigr) - \prhD( \whD) \Bigr) 
+ \frac{2 \rx^2 L^2 \bigl( {\rm{log}}(n) + 1 \bigr)}{\la n} \\ & \le 
- \Bigl(\frac{1}{n} \sum_{k =1}^n \ell_{k,h} \bigl( w_h^{(k)} \bigr) - \prhD( w ) \Bigr) 
+ \frac{2 \rx^2 L^2 \bigl( {\rm{log}}(n) + 1 \bigr)}{\la n} = \epsilon.
\end{split}
\end{equation}
Summarizing, we have obtained that 
\begin{equation} \label{rrr}
0 \in \partial_\epsilon \Bigl( \frac{1}{n}\Psi_{h, n+1} \Bigr)(\tilde u_h^{(n+1)}), 
\end{equation}
where the value of $\epsilon$ is the one in Eq. \eqref{epsilon_value}. Now, we observe that, 
thanks to the relation $\displaystyle \frac{1}{n} \Psi_{h,n+1} (\tilde u) = \duhD \Bigl( \frac{\tilde u}{n} \Bigr)
= \Bigl( \duhD \circ \frac{1}{n} \Bigr) (\tilde u)$ (see \shortautorefrem{original_problems}), exploiting \shortautoref{epsilon_subgradient_chain_rule_partialscalar}, for any $\tilde u \in \Real^n$, we have that, 
\begin{equation}
\partial_\epsilon \Bigl( \frac{1}{n}\Psi_{h, n+1} \Bigr)(\tilde u)
= \partial_\epsilon \Bigl( \duhD \circ \frac{1}{n} \Bigr) (\tilde u)
= \frac{1}{n} \partial_\epsilon \duhD \Bigl( \frac{\tilde u}{n} \Bigr).
\end{equation}
Consequently, Eq. \eqref{rrr}, implies $\displaystyle 0 \in \partial_\epsilon \duhD \Bigl( \frac{\tilde u_h^{(n+1)}}{n} \Bigr)$,
which is equivalent, as already observed in \shortautoref{epsilon_minimum}, to the desired statement.
\end{proof}

The last ingredient we need to prove \shortautoref{epsilon-subgradient} is the following
expectation argument.

\begin{corollary} \label{epsilon_accuracy_in_expectation}
Let $\bigl( w_h^{(k)} \bigr)_k$ be the iterations generated by \shortautoref{Within-Task Algorithm Online_paper},
Let $\epsilon$ be the value in \shortautoref{dual_accuracy} with $w = w_{\task, h}$, where $\displaystyle w_{\task, h} = 
\argmin_{w \in \Real^d} \cR_\task \bigl( w \bigr) + \frac{\la}{2} \big \| w - h \big \|^2$. Then, 
under the same assumptions of \shortautoref{dual_decrease}, we have that
\begin{equation}
\Exp_{\Zn \sim \task^n} \big[ \epsilon \big] \le \frac{2 \rx^2 L^2 \bigl( {\rm{log}}(n) + 1 \bigr)}{\la n}.
\end{equation}
\end{corollary}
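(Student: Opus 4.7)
The plan is to substitute the specific choice $w = w_{\task, h}$ into the expression for $\epsilon$ provided by \shortautoref{dual_accuracy} and then take expectation with respect to $\Zn \sim \task^n$, showing that the data-dependent contributions are non-positive so that only the deterministic $O((\log n)/(\la n))$ term survives. Concretely, I would start by writing
$$
\epsilon = \cR_{\Zn, h}(w_{\task,h}) - \frac{1}{n} \sum_{k=1}^n \ell_{k,h}(w_h^{(k)}) + \frac{2 \rx^2 L^2 (\log(n) + 1)}{\la n}.
$$

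Next, I would handle the two random terms separately using the i.i.d. structure of $\Zn$. Since $w_{\task, h}$ depends only on the task distribution $\task$, not on the sample, linearity of expectation gives $\Exp_{\Zn \sim \task^n}\cR_{\Zn, h}(w_{\task, h}) = \cR_{\task, h}(w_{\task, h})$. For the empirical average, the key observation is that, by inspection of \shortautoref{Within-Task Algorithm Online_paper}, the iterate $w_h^{(k)}$ is measurable with respect to $z_1, \dots, z_{k-1}$ only, while $z_k$ is independent of this history. Conditioning on $z_1,\dots,z_{k-1}$ and integrating out $z_k$ (the regularizer $\frac{\la}{2}\|w_h^{(k)}-h\|^2$ is unaffected since it does not depend on $z_k$) yields
$$
\Exp[\ell_{k,h}(w_h^{(k)})] = \Exp[\cR_{\task, h}(w_h^{(k)})].
$$

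Finally, I would invoke the definition of $w_{\task, h}$ as the minimizer of the regularized true risk $\cR_{\task, h}$, which gives $\cR_{\task, h}(w_{\task, h}) \le \cR_{\task, h}(w_h^{(k)})$ pointwise for every $k$. Averaging these inequalities over $k \in [n]$ and taking expectation shows that
$$
\cR_{\task, h}(w_{\task, h}) - \frac{1}{n}\sum_{k=1}^n \Exp[\cR_{\task, h}(w_h^{(k)})] \le 0,
$$
so that $\Exp[\epsilon]$ is bounded above by the residual $2 \rx^2 L^2 (\log(n)+1)/(\la n)$, as claimed. I expect no genuine obstacle: the argument is an online-to-batch conditioning manoeuvre once \shortautoref{dual_accuracy} is in hand, and the only point that deserves a sentence of justification is the filtration structure of the SGD iterates, which is what lets us replace each $\ell_{k,h}(w_h^{(k)})$ by $\cR_{\task, h}(w_h^{(k)})$ in expectation.
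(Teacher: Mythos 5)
Your proposal is correct and follows essentially the same route as the paper's proof: both rest on (i) the online-to-batch observation that $w_h^{(k)}$ is independent of $z_k$, so $\Exp\bigl[\ell_{k,h}(w_h^{(k)})\bigr]=\Exp\bigl[\cR_{\task,h}(w_h^{(k)})\bigr]$, (ii) the fact that $w_{\task,h}$ is deterministic, so $\Exp\bigl[\cR_{\Zn,h}(w_{\task,h})\bigr]=\cR_{\task,h}(w_{\task,h})$, and (iii) the optimality of $w_{\task,h}$ for the regularized true risk. The only cosmetic difference is that the paper routes the optimality step through the averaged iterate $\bwhD$ and Jensen's inequality, whereas you apply $\cR_{\task,h}(w_{\task,h})\le\cR_{\task,h}(w_h^{(k)})$ to each iterate directly, which is a marginally leaner way to reach the same inequality.
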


\begin{proof} 
We recall that the value of $\epsilon$ in \shortautoref{dual_accuracy} with $w = w_{\task, h}$ is
explicitly given by
\begin{equation}
\epsilon = - \Bigl( \frac{1}{n} \sum_{k =1}^n \ell_k \bigl( \langle x_k, w_h^{(k)} \rangle \bigr) 
+ \frac{\la}{2} \big \| w_h^{(k)}-h \big \|^2 - \prhD(w_{\task, h}) \Bigr) 
+ \frac{2 \rx^2 L^2 \bigl( {\rm{log}}(n) + 1 \bigr)}{\la n}.
\end{equation}
Hence, to prove the statement we just need to show that
\begin{equation}
0 \le \Exp_{\Zn \sim \task^n} \Big[ \frac{1}{n} \sum_{k =1}^n \ell_k \bigl( \langle x_k, w_h^{(k)} \rangle \bigr) 
+ \frac{\la}{2} \big \| w_h^{(k)}-h \big \|^2 - \prhD(w_{\task, h}) \Big].
\end{equation}
In order to do this, we recall that $\bwhD$ denotes the average of the first $n$ iterations 
$\bigl( w_h^{(k)} \bigr)_k$ and we observe the following
\begin{equation}
\begin{split}
0 & \le \Exp_{\Zn \sim \task^n}~\Big[ \cR_\task \bigl( \bwhD(\Zn) \bigr) + \frac{\la}{2} \big \| \bwhD(\Zn)-h \big \|^2 \Big]
- \Exp_{\Zn \sim \task^n}~\Big[  \cR_\task \bigl( w_{\task,h} \bigr) + \frac{\la}{2} \big \| w_{\task,h}-h \big \|^2 \Big] \\
& \le \Exp_{\Zn \sim \task^n}~\Big[ \frac{1}{n} \sum_{i =1}^n \cR_\task \bigl( \whD^{(i)} \bigr) 
+ \frac{\la}{2} \big \| \whD^{(i)}-h \big \|^2 \Big] - \Exp_{\Zn \sim \task^n}~\Big[  \cR_\task \bigl( w_{\task,h} \bigr) 
+ \frac{\la}{2} \big \| w_{\task,h}-h \big \|^2 \Big] \\
& = \Exp_{\Zn \sim \task^n}~\Big[ \frac{1}{n} \sum_{i =1}^n \cR_\task \bigl( \whD^{(i)} \bigr) 
+ \frac{\la}{2} \big \| \whD^{(i)}-h \big \|^2 \Big] - \Exp_{\Zn \sim \task^n}~\Big[  \frac{1}{n} \sum_{i =1}^n 
\ell_i \bigl( \langle x_i, w_{\task, h} \rangle \bigr) + \frac{\la}{2} \big \| w_{\task,h}-h \big \|^2 \Big] \\
& = \Exp_{\Zn \sim \task^n}~\Big[ \frac{1}{n} \sum_{i =1}^n \ell_i \bigl( \langle x_i, \whD^{(i)} \rangle \bigr)
+ \frac{\la}{2} \big \| \whD^{(i)}-h \big \|^2 \Big] - \Exp_{\Zn \sim \task^n}~\Big[  \frac{1}{n} \sum_{i =1}^n 
\ell_i \bigl( \langle x_i, w_{\task, h} \rangle \bigr) + \frac{\la}{2} \big \| w_{\task,h}-h \big \|^2 \Big] \\
& = \Exp_{\Zn \sim \task^n} \Big[ \frac{1}{n} \sum_{k =1}^n \ell_k \bigl( \langle x_k, w_h^{(k)} \rangle \bigr) 
+ \frac{\la}{2} \big \| w_h^{(k)} - h \big \|^2 - \prhD(w_{\task,h}) \Big],
\end{split}
\end{equation}
where, the first inequality is a consequence of the definition of $w_{\task,h}$, 
the second inequality derives from Jensen's inequality, the first equality holds
since $w_{\task, h}$ does not depend on the data and, finally, the second equality
holds by standard online-to-batch arguments, more precisely, since $\whD^{(i)}$ 
does not depend on the point $z_i$, we have that, $\Exp_{\Zn \sim \task^n} 
\ell_i \bigl( \langle x_i, \whD^{(i)} \rangle \bigr) = \Exp_{\Zn \sim \task^n} 
\cR_\task \bigl( \whD^{(i)} \bigr)$.
\end{proof}

We now are ready to prove \shortautoref{epsilon-subgradient}.

\epsilonsubgradient*

\begin{proof}
We start from observing that, thanks to \shortautoref{identification},
we have that 
\begin{equation} \label{hhhh}
-\la \bigl( \whD^{(n+1)} - h \bigr) = \Xn \trans \frac{\tilde u_h^{(n+1)}}{n}.
\end{equation}
Hence, thanks to \shortautoref{approx_subgradient_prop} and 
\shortautoref{dual_accuracy} applied to the vector $\hat u_h = \tilde 
u_h^{(n+1)}/n$, we can state that $-\la \bigl( \whD^{(n+1)} - h \bigr) \in 
\partial_\epsilon \LL_{\Zn}(h)$, with $\epsilon$ given as in Eq. 
\eqref{epsilon_our_v}, choosing $w = w_{\task,h}$. Hence, the statement 
in Eq. \eqref{pp} is a consequence of these observations and 
\shortautoref{epsilon_accuracy_in_expectation}.
Finally, we observe that, thanks to the fact $-\la \bigl( \whD^{(n+1)} - h \bigr) \in 
\partial_\epsilon \LL_{\Zn}(h)$ and Eq. \eqref{discrepancy_gradients}
in \shortautorefex{epsilon_subgradient_Moreau_envelope}, we know that
\begin{equation}
\big \| \nabla \LL_{\Zn}(h) - \hat \nabla \LL_{\Zn}(h) \big \|^2 \le 2 \la \epsilon,
\end{equation}
where $\epsilon$ is the same value as before. The statement in Eq. \eqref{ppp} 
derives from taking the expectation with respect to the dataset $\Zn$ and applying again
the result in \shortautoref{epsilon_accuracy_in_expectation}.
\end{proof}


\section{Proof of \shortautoref{estimation_error}}
\label{statistical_preliminaries_app}

In this section, we report the proof of \shortautoref{estimation_error} which is often 
used in the main body of this work. The proof exploits the regret bound for \shortautoref{Within-Task Algorithm Online_paper}
given in \shortautoref{inner regret bound} in \shortautorefsubapp{regret_bound_app} and it essentially relies on 
online-to-batch conversion arguments.

\estimation*

\begin{proof}
The proof is similar to the one of \shortautoref{epsilon_accuracy_in_expectation}. 
More precisely, we can write
\begin{equation*}
\begin{split}
\Exp&_{\Zn \sim \task^n}~\Big[ \cR_\task \bigl( \bwhD(\Zn) \bigr) \Big]
- \Exp_{\Zn \sim \task^n}~\Big[ \cR_{\Zn} \bigl( \whD (\Zn) \bigr) + \frac{\la}{2} \big \| \whD(\Zn) - h \big \|^2 \Big] \\
& \le \Exp_{\Zn \sim \task^n}~\Big[ \cR_\task \bigl( \bwhD(\Zn) \bigr) + \frac{\la}{2} \big \| \bwhD(\Zn) - h \big \|^2 \Big]
- \Exp_{\Zn \sim \task^n}~\Big[  \cR_{\Zn} \bigl( \whD (\Zn) \bigr) + \frac{\la}{2} \big \| \whD(\Zn) - h \big \|^2 \Big] \\
& \le \Exp_{\Zn \sim \task^n}~\Big[ \frac{1}{n} \sum_{i =1}^n \cR_\task \bigl( \whD^{(i)} \bigr) 
+ \frac{\la}{2} \big \| \whD^{(i)} - h \big \|^2 \Big] - \Exp_{\Zn \sim \task^n}~\Big[  \cR_{\Zn} \bigl( \whD (\Zn) \bigr) 
+ \frac{\la}{2} \big \| \whD(\Zn) - h \big \|^2 \Big] \\
& = \Exp_{\Zn \sim \task^n}~\Big[ \frac{1}{n} \sum_{i =1}^n \ell_i \bigl( \langle x_i, \whD^{(i)} \rangle \bigr) 
+ \frac{\la}{2} \big \| \whD^{(i)} - h \big \|^2 \Big] - \Exp_{\Zn \sim \task^n}~\Big[  \cR_{\Zn} \bigl( \whD (\Zn) \bigr) 
+ \frac{\la}{2} \big \| \whD(\Zn) - h \big \|^2 \Big] \\
& = \Exp_{\Zn \sim \task^n}~\Big[ \frac{1}{n} \sum_{i =1}^n \ell_i \bigl( \langle x_i, \whD^{(i)} \rangle \bigr) 
+ \frac{\la}{2} \big \| \whD^{(i)}- h \big \|^2 - \prhD(\whD(\Zn)) \Big] \\
& \le \frac{2 \rx^2 L^2 \bigl( {\rm{log}}(n) + 1 \bigr)}{\la n},
\end{split}
\end{equation*}
where, in the first inequality we have exploited the non-negativity of the regularizer 
and in the second inequality we have applied Jensen's inequality. The first equality above holds
by standard online-to-batch arguments, more precisely, since $\whD^{(i)}$ does not depend on 
the point $z_i$, we have that, $\Exp_{\Zn \sim \task^n} \ell_i \bigl( \langle x_i, \whD^{(i)} 
\rangle \bigr) = \Exp_{\Zn \sim \task^n} \cR_\task \bigl( \whD^{(i)} \bigr)$. Finally, the last inequality 
is due to the application of the regret bound given in \shortautoref{inner regret bound}.
\end{proof}


\section{Convergence Rate of \shortautoref{OGDA2_paper}}
\label{estimated_h_generalization_bound_proof_app}

In this section, we give the convergence rate bound of \shortautoref{OGDA2_paper} which is used 
in the paper for the proof of the excess transfer risk bound given in \shortautoref{excess_transfer_risk_LTL}.

We recall that the meta-algorithm we adopt to estimate the bias $h$ is SGD applied to the 
function $\hat \E_n(\cdot) = \Exp_{\task \sim \env}~\Exp_{\Zn \sim \task^n}~ \LL_{\Zn}(\cdot)$. 
We recall also that, at each iteration $t$, the meta-algorithm approximate the gradient of the function $\LL_t$
at the point $h^{(t)}$ by the vector $\hat \nabla_t$ which is computed as described in Eq. \eqref{approx_meta_gradient_t}.
In the subsequent analysis we use the notation
\begin{equation}
\Exp_{\Zn^{(t)}} ~ \big[ \cdot \big] = \Exp ~ \Big[ \cdot \big | \Zn^{(1)}, \dots, \Zn^{(t-1)} \Big],
\end{equation}
where, the expectation must be intended with respect to the sampling of the dataset from the distribution 
induced by the sampling of the task $\task \sim \env$ and then the sampling of the dataset from that task.
We observe that, thanks to \shortautoref{epsilon-subgradient} and the independence of $h^{(t)}$ on
$\Zn^{(t)}$, we can state that this vector $\hat \nabla_t$ is an $\epsilon_t$-subgradient of $\LL_t$ at 
the point $h^{(t)}$, where, $\epsilon_t$ is such that
\begin{equation} \label{uuuu}
\Exp_{\Zn^{(t)}} ~ \big[ \epsilon_t \big] \le \frac{2 \rx^2 L^2 \bigl( {\rm{log}}(n) + 1 \bigr)}{\la n}
\end{equation}
\begin{equation} \label{jjjjj}
\Exp_{\Zn^{(t)}} ~ \big \| \nabla^{(t)} - \hat \nabla^{(t)} \big \|^2
\le \frac{4 \rx^2 L^2 \bigl( {\rm{log}}(n) + 1 \bigr)}{n}.
\end{equation}
Before proceeding with the proof of the convergence rate of 
\shortautoref{OGDA2_paper}, we need to introduce the 
following result contained in \cite{shalev2014understanding}. 

\begin{lemma}[See {\cite[Lemma $14.1$]{shalev2014understanding}}] \label{basic_lemma_1_proj}
Let $h^{(t)}$ be update of \shortautoref{OGDA2_paper}. Then, for any $\hat h \in \H$, we have 
\begin{equation*}
\sum_{t=1}^T \big \langle h^{(t)} - \hat h, \hat \nabla^{(t)} \big \rangle \le \frac{1}{2} \Bigl( \frac{1}{\gamma} 
\big \| h^{(1)} - \hat h \big \|^2 + \gamma \sum_{t =1}^T \big \| \hat \nabla^{(t)} \big \|^2 \Bigr).
\end{equation*}
\end{lemma}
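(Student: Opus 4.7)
The plan is to mimic the classical textbook proof of the (unprojected) online gradient descent regret bound. Since \shortautoref{OGDA2_paper} performs the plain Euclidean update $h^{(t+1)} = h^{(t)} - \gamma \hat \nabla^{(t)}$ without any projection, the statement turns out to be a purely algebraic telescoping identity that holds for arbitrary vectors $\hat \nabla^{(t)}$, with no appeal to convexity, Lipschitzness, or to the fact that $\hat \nabla^{(t)}$ is an $\epsilon$-subgradient of $\LL_t$. This is precisely why the same lemma can later be combined with the inexact meta-gradients produced by \shortautoref{epsilon-subgradient}.

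The key step is to expand $\|h^{(t+1)} - \hat h\|^2$ using the update rule and the bilinearity of the inner product, which yields at each round $t$ the per-step identity $2\gamma \langle h^{(t)} - \hat h, \hat \nabla^{(t)} \rangle = \|h^{(t)} - \hat h\|^2 - \|h^{(t+1)} - \hat h\|^2 + \gamma^2 \|\hat \nabla^{(t)}\|^2$. Summing over $t = 1, \dots, T$ makes the squared-distance terms telescope to $\|h^{(1)} - \hat h\|^2 - \|h^{(T+1)} - \hat h\|^2$. After dividing by $2\gamma$ and dropping the non-positive trailing term $-\tfrac{1}{2\gamma}\|h^{(T+1)} - \hat h\|^2$, one obtains exactly the claimed inequality.

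There is essentially no obstacle: the lemma is a generic geometric fact about the gradient-descent recursion in Euclidean space, and the proof is a one-line algebraic manipulation followed by a telescoping sum. The only subtlety worth flagging is that \shortautoref{OGDA2_paper} is unconstrained (the iterates live in $\H = \Real^d$); had a projection onto a closed convex constraint set been present, the minor addition would have been to invoke the non-expansiveness of the Euclidean projection to preserve the identity before telescoping. Since that is not the case here, the argument goes through verbatim.
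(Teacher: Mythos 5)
Your proof is correct and is essentially identical to the paper's: expand $\|h^{(t+1)}-\hat h\|^2$ via the update rule, rearrange to isolate $\langle h^{(t)}-\hat h,\hat\nabla^{(t)}\rangle$, telescope over $t$, and drop the non-positive term $-\|h^{(T+1)}-\hat h\|^2$. Your remark that the argument is purely algebraic and independent of convexity or of the $\epsilon$-subgradient property is accurate, and your aside about projections even explains the inequality (rather than equality) that the paper writes in its first display.
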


\begin{proof}
Thanks to the definition of the update, for any $\hat h \in \H$, we have that
\begin{equation*}
\begin{split}
\big \| h^{(t+1)} - \hat h \big \|^2 \le \big \| h^{(t)} - \gamma \hat \nabla^{(t)} - \hat h \big \|^2 
= \big \| h^{(t)} - \hat h \big \|^2 - 2 \gamma \big \langle h^{(t)} - 
\hat h, \hat \nabla^{(t)} \big \rangle + \gamma^2 \big \| \hat \nabla^{(t)} \big \|^2.
\end{split}
\end{equation*}
Hence, rearranging the terms, we get the following
\begin{equation*}
\begin{split}
\big \langle h^{(t)} - \hat h, \hat \nabla^{(t)} \big \rangle = \frac{1}{2 \gamma} \Bigl( \big \| h^{(t)} - \hat h \big \|^2 -
\big \| h^{(t+1)} - \hat h \big \|^2 \Bigr) + \frac{\gamma}{2} \big \| \hat \nabla^{(t)} \big \|^2.
\end{split}
\end{equation*}
Summing over $t \in [T]$, exploiting the telescopic sum and the fact 
$- \big \| h^{(T+1)} - \hat h \big \|^2 \le 0$, the statement follows.  
\end{proof}

We now are ready to study the convergence rate of \shortautoref{OGDA2_paper}.

\begin{restatable}[Convergence Rate of \shortautoref{OGDA2_paper}]{proposition}{regret} \label{regret_OGDA_proj}
Let \shortautoref{bounded_inputs} and \shortautoref{lipschitz_loss} hold and
let $\bar h_T$ be the output of \shortautoref{OGDA2_paper} run with step size
\begin{equation}
\gamma = \frac{\sqrt{2} ~ \big \| \hat h \big \|}{L \rx} ~ \sqrt{
\Bigl(T \Bigl( 1 + \frac{4 \bigl( {\rm{log}}(n) + 1 \bigr)}{n} \Bigr) \Bigr)^{-1}}. 
\end{equation}
Then, for any $\hat h \in \H$, we have that
\begin{equation*}
\Exp ~ \hat \E_n(\bar h_T) - \hat \E_n(\hat h)
\le \big \| \hat h \big \| ~ L \rx ~ \sqrt{2 \Bigl( 1 + \frac{4 \bigl( {\rm{log}}(n) + 1 \bigr)}{n} \Bigr) \frac{1}{T}}
+ \frac{2 \rx^2 L^2 \bigl( {\rm{log}}(n) + 1 \bigr)}{\la n},
\end{equation*}
where, the expectation above is with respect to the sampling 
of the datasets $\Zn^{(1)}, \dots, \Zn^{(T)}$ from the environment $\env$.
\end{restatable}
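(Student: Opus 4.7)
The plan is to carry out the standard $\epsilon$-subgradient regret analysis for projected/unconstrained online gradient descent, then convert the regret into an excess-risk bound via an online-to-batch argument.

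First, I would use the defining property of an $\epsilon$-subgradient. By \shortautoref{epsilon-subgradient} together with the independence of $h^{(t)}$ from $\Zn^{(t)}$, the vector $\hat\nabla^{(t)}$ lies in $\partial_{\epsilon_t} \LL_t(h^{(t)})$ for some nonnegative $\epsilon_t$ satisfying Eq.~\eqref{uuuu} in conditional expectation. Hence, for every fixed $\hat h \in \H$,
\begin{equation*}
\LL_t(h^{(t)}) - \LL_t(\hat h) \le \big\langle \hat\nabla^{(t)}, h^{(t)} - \hat h \big\rangle + \epsilon_t.
\end{equation*}
Summing over $t \in [T]$ and applying \shortautoref{basic_lemma_1_proj} with $h^{(1)} = 0$, I would obtain
\begin{equation*}
\sum_{t=1}^T \big[\LL_t(h^{(t)}) - \LL_t(\hat h)\big] \;\le\; \frac{\|\hat h\|^2}{2\gamma} + \frac{\gamma}{2}\sum_{t=1}^T \big\|\hat\nabla^{(t)}\big\|^2 + \sum_{t=1}^T \epsilon_t.
\end{equation*}

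Next I would control $\Exp \|\hat\nabla^{(t)}\|^2$ by splitting $\hat\nabla^{(t)} = \nabla^{(t)} + (\hat\nabla^{(t)} - \nabla^{(t)})$ and using $\|a+b\|^2 \le 2\|a\|^2 + 2\|b\|^2$. By \shortautoref{properties} the function $\LL_t$ is $L\rx$-Lipschitz, so $\|\nabla^{(t)}\| \le L\rx$; and the approximation-error estimate Eq.~\eqref{ppp} gives $\Exp_{\Zn^{(t)}} \|\hat\nabla^{(t)} - \nabla^{(t)}\|^2 \le 4\rx^2 L^2 (\log(n)+1)/n$. Combining,
\begin{equation*}
\Exp\big\|\hat\nabla^{(t)}\big\|^2 \;\le\; 2 L^2 \rx^2 \Bigl(1 + \frac{4\bigl(\log(n)+1\bigr)}{n}\Bigr).
\end{equation*}
Similarly, Eq.~\eqref{uuuu} gives $\Exp[\epsilon_t] \le 2\rx^2 L^2(\log(n)+1)/(\la n)$.

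Finally, to pass from regret to excess risk I would use the tower property and convexity. Since $\Zn^{(t)}$ is drawn freshly from the environment, conditioning on $h^{(t)}$ yields $\Exp[\LL_t(h^{(t)}) \mid h^{(t)}] = \hat\E_n(h^{(t)})$, and of course $\Exp[\LL_t(\hat h)] = \hat\E_n(\hat h)$ since $\hat h$ is deterministic. Convexity of $\hat\E_n$ (inherited from $\LL_{\Zn}$ via \shortautoref{properties}) plus Jensen's inequality give $\hat\E_n(\bar h_T) \le \frac{1}{T}\sum_{t=1}^T \hat\E_n(h^{(t)})$. Dividing the summed bound by $T$, taking expectations, and substituting the two estimates above, I obtain
\begin{equation*}
\Exp \hat\E_n(\bar h_T) - \hat\E_n(\hat h) \;\le\; \frac{\|\hat h\|^2}{2\gamma T} + \gamma L^2 \rx^2 \Bigl(1 + \frac{4\bigl(\log(n)+1\bigr)}{n}\Bigr) + \frac{2\rx^2 L^2\bigl(\log(n)+1\bigr)}{\la n}.
\end{equation*}
Plugging in the prescribed $\gamma$ (which is exactly the minimizer of the first two terms in $\gamma$) produces the stated bound.

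The only non-routine point is making sure the $\epsilon$-subgradient inequality is applied to $\LL_t$ (not to $\hat\E_n$): the inequality holds pointwise in the realization of $\Zn^{(t)}$ because $\hat\nabla^{(t)} \in \partial_{\epsilon_t}\LL_t(h^{(t)})$ by \shortautoref{epsilon-subgradient}, and only \emph{after} summing and taking expectations does one invoke the tower property and Jensen to reconstruct $\hat\E_n(\bar h_T) - \hat\E_n(\hat h)$. Everything else is bookkeeping and the choice of $\gamma$.
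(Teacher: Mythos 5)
Your proposal is correct and follows essentially the same route as the paper's proof: the $\epsilon$-subgradient inequality from \shortautoref{epsilon-subgradient}, the regret bound of \shortautoref{basic_lemma_1_proj}, the split $\|\hat\nabla^{(t)}\|^2 \le 2\|\nabla^{(t)}\|^2 + 2\|\nabla^{(t)}-\hat\nabla^{(t)}\|^2$ combined with the Lipschitz bound and Eq.~\eqref{ppp}, and finally Jensen plus the independence of $h^{(t)}$ from $\Zn^{(t)}$ to convert regret into the excess-risk statement. The only reorderings (summing before taking expectations) are immaterial, so there is nothing to add.
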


\begin{proof}
We start from observing that, by convexity of $\LL_{\Zn^{(t)}}$, thanks to
the fact that $\hat \nabla_t$ is an $\epsilon_t$-subgradient of $\LL_{\Zn^{(t)}}$ 
at the point $h^{(t)}$, for any $\hat h \in \H$, we can write
\begin{equation}
\LL_{\Zn^{(t)}} \bigl( h^{(t)} \bigr) - \LL_{\Zn^{(t)}} \bigl( \hat{h} \bigr)
\le \big \langle \hat \nabla^{(t)}, h^{(t)} - \hat{h} \big \rangle + \epsilon_t.
\end{equation}
Now, taking the expectation with respect to the sampling of $\Zn^{(t)}$, 
thanks to what observed in Eq. \eqref{uuuu}, we have 
\begin{equation} 
\begin{split} 
\Exp_{\Zn^{(t)}} \Big[ \LL_{\Zn^{(t)}} \bigl( h^{(t)} \bigr) - \LL_{\Zn^{(t)}} \bigl( \hat{h} \bigr) \Big]
& \le \Exp_{\Zn^{(t)}} \big \langle \hat \nabla^{(t)}, h^{(t)} - \hat{h} \big \rangle + 
\underbrace{\Exp_{\Zn^{(t)}} \big[ \epsilon_t \big]}_{} \\
& \le \Exp_{\Zn^{(t)}} \big \langle \hat \nabla^{(t)}, h^{(t)} - \hat{h} \big \rangle + 
\underbrace{\frac{2 \rx^2 L^2 \bigl( {\rm{log}}(n) + 1 \bigr)}{\la n}}_{\displaystyle \epsilon_{\la,n}}.
\end{split}
\end{equation}
Hence, taking the global expectation, we get
\begin{equation}  
\Exp ~ \Big[ \LL_{\Zn^{(t)}} \bigl( h^{(t)} \bigr) - \LL_{\Zn^{(t)}} \bigl( \hat{h} \bigr) \Big]
\le \Exp ~ \big \langle \hat \nabla^{(t)}, h^{(t)} - \hat{h} \big \rangle + \epsilon_{\la,n}.
\end{equation}
Summing over $t \in [T]$ and dividing by $T$, we get
\begin{equation} \label{iii}
\frac{1}{T} \sum_{t = 1}^T \Exp ~ \Big[ \LL_{\Zn^{(t)}} \bigl( h^{(t)} \bigr) - 
\LL_{\Zn^{(t)}} \bigl( \hat{h} \bigr) \Big] \le \frac{1}{T} \sum_{t = 1}^T \Exp ~
\big \langle \hat \nabla^{(t)}, h^{(t)} - \hat{h} \big \rangle + \epsilon_{\la,n}.
\end{equation}
Now, applying \shortautoref{basic_lemma_1_proj}, as regards the first term 
of the RHS in the bound above, we can write
\begin{equation} \label{kkoo_proj}
\begin{split}
\frac{1}{T} \sum_{t = 1}^T \Exp ~ \big \langle \hat \nabla^{(t)}, h^{(t)} - \hat{h} \big \rangle
 & \le \frac{1}{2} \Bigl( \frac{1}{\gamma T} \big \| h^{(1)} - \hat h \big \|^2 + \frac{\gamma}{T} 
 \sum_{t =1}^T \Exp ~ \big \| \hat \nabla^{(t)} \big \|^2 \Bigr).
\end{split}
\end{equation}
Now we observe that, thanks to \shortautoref{bounded_inputs}, \shortautoref{lipschitz_loss} 
and \shortautoref{properties}, $\big \| \nabla^{(t)} \big \|
\le \rx L$ for any $t \in [T]$. Consequently, using the inequality $\| a + b \|^2 \le 
2 \| a \|^2 + 2 \| b \|^2$ for any two vectors $a, b \in \Real^d$ and applying Eq. \eqref{jjjjj},
we can write the following
\begin{equation}
\begin{split}
\Exp_{\Zn^{(t)}} ~ \big \| \hat \nabla^{(t)} \big \|^2 & = 
\Exp_{\Zn^{(t)}} ~ \big \| \hat \nabla^{(t)} \pm \nabla^{(t)} \big \|^2 
\le 2 ~ \Exp_{\Zn^{(t)}} ~ \big \| \nabla^{(t)} \big \|^2 + 
2 ~ \Exp_{\Zn^{(t)}} ~ \big \| \nabla^{(t)} - \hat \nabla^{(t)} \big \|^2 \\
& \le 2 L^2 \rx^2 + \frac{8 \rx^2 L^2 \bigl( {\rm{log}}(n) + 1 \bigr)}{n}
= 2 L^2 \rx^2 \Bigl( 1 + \frac{4 \bigl( {\rm{log}}(n) + 1 \bigr)}{n} \Bigr).
\end{split}
\end{equation}
Hence, taking the global expectation of the above relation and 
combining with Eq. \eqref{kkoo_proj}, we get
\begin{equation} \label{oo1}
\frac{1}{T} \sum_{t = 1}^T \Exp ~ \big \langle \hat \nabla^{(t)}, h^{(t)} - \hat{h} \big \rangle
\le \frac{1}{2} \Bigl( \frac{1}{\gamma T} \big \| h^{(1)} - \hat h \big \|^2 + 
2 L^2 \rx^2 \Bigl( 1 + \frac{4 \bigl( {\rm{log}}(n) + 1 \bigr)}{n} \Bigr) \gamma \Bigr).
\end{equation}
We now observe that, as regards the LHS member in Eq. \eqref{iii}, by Jensen's 
inequality and the independence of $h^{(t)}$ on $\Zn^{(t)}$, we have that
\begin{equation} \label{oo2}
\Exp ~ \hat \E_n(\bar h_T) - \hat \E_n(\hat h) =
\Exp ~ \Big[ \LL_{\Zn}(\bar h_T) - \LL_{\Zn}(\hat h) \Big]
\le \frac{1}{T} \sum_{t = 1}^T \Exp ~ \Big[ \LL_{\Zn^{(t)}} \bigl( h^{(t)} \bigr) - 
\LL_{\Zn^{(t)}} \bigl( \hat{h} \bigr) \Big].
\end{equation}
Hence, substituting Eq. \eqref{oo1} and Eq. \eqref{oo2} into Eq. \eqref{iii}, since $h^{(1)} =  0$, we get
\begin{equation*}
\Exp ~ \hat \E_n(\bar h_T) - \hat \E_n(\hat h)
\le \frac{1}{2} \Bigl( \frac{1}{\gamma T} \big \| \hat h \big \|^2 + 
2 L^2 \rx^2 \Bigl( 1 + \frac{4 \bigl( {\rm{log}}(n) + 1 \bigr)}{n} \Bigr) \gamma \Bigr)
+ \frac{2 \rx^2 L^2 \bigl( {\rm{log}}(n) + 1 \bigr)}{\la n}.
\end{equation*}
The desired statement follows from optimizing the above bound with respect to $\gamma$.
\end{proof}

\section{Analysis for ERM Algorithm}
\label{ERM_analysis}

In this section, we repeat the statistical study described in the paper for
the family of ERM algorithms introduced in Eq. \eqref{RERM}. We obtain excess transfer risk 
bounds which are equivalent, up to constants and logarithmic factors, to those given in the paper 
for the SGD family.

We start from reminding the definition of the biased ERM algorithm in Eq. \eqref{RERM}
\begin{equation} \label{RERM_app}
\whD(\Zn) = \argmin_{w \in \Real^d}~ \cR_{\Zn,h}(w),
\end{equation}
where, for any $w, h \in \Real^d$, we recall the notation used for the empirical 
error and its biased regularized version
\begin{equation}
\begin{split}
\cR_{\Zn}(w) & = \frac{1}{n} \sum_{k = 1}^n \ell_k \bigl( \langle x_k, w \rangle \bigr) \\
\cR_{\Zn,h}(w) & = \cR_{\Zn}(w) + \frac{\la}{2} \| w - h \|^2.
\end{split}
\end{equation}

In this case, we assume to have an oracle providing us with this exact 
estimator and we ignore how much it costs. The study proceeds as
in the paper, for the SGD family. The main difference relies on using
in the decompositions, instead of \shortautoref{estimation_error},  
the following standard result on the generalization error of ERM
algorithm.

\begin{proposition} \label{generalization_error}
Let \shortautoref{bounded_inputs} and \shortautoref{lipschitz_loss} hold. Let $\whD$ the ERM
algorithm in Eq. \eqref{RERM}. Then, for any $h \in \H$, we have that 
\begin{equation}
\begin{split}
\Exp_{\Zn \sim \task^n} \big[ \cR_\task \bigl( \wh(\Zn) \bigr) - 
\cR_{\Zn} \bigl( \wh(\Zn) \bigr) \big] \le \frac{L^2 \rx^2}{\lambda n}.
\end{split}
\end{equation}
\end{proposition}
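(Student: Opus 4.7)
The claim is a standard generalization bound for ERM with a $\lambda$-strongly convex regularizer. The plan is to prove it via \emph{algorithmic stability} in the sense of Bousquet--Elisseeff, combined with the usual ``replace-one'' symmetrization identity.

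First, I would set up the symmetrization. Let $\Zn = (z_1, \dots, z_n)$ be sampled i.i.d. from $\task$ and, for each $i \in [n]$, let $\Zn^{(i)}$ denote the dataset obtained from $\Zn$ by replacing $z_i$ with an independent fresh sample $z_i' \sim \task$. Writing $\wh = \wh(\Zn)$ and $\wh^{(i)} = \wh(\Zn^{(i)})$, the standard exchange-of-variables argument (using that $z_i$ and $z_i'$ are i.i.d.) yields
\begin{equation*}
\Exp_{\Zn \sim \task^n}\bigl[\cR_\task(\wh(\Zn)) - \cR_{\Zn}(\wh(\Zn))\bigr] = \frac{1}{n}\sum_{i=1}^n \Exp \bigl[\ell_i(\langle x_i, \wh^{(i)} \rangle) - \ell_i(\langle x_i, \wh \rangle)\bigr].
\end{equation*}
By \shortautoref{bounded_inputs} and \shortautoref{lipschitz_loss}, each term on the right is bounded by $L \rx \cdot \Exp \|\wh^{(i)} - \wh\|$, so it remains to control the \emph{uniform stability} $\|\wh^{(i)} - \wh\|$.

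Second, I would derive the stability bound from the $\lambda$-strong convexity of $\cR_{\Zn, h}$ and $\cR_{\Zn^{(i)}, h}$ in $w$. Since $\wh$ and $\wh^{(i)}$ are their respective minimizers, strong convexity gives
\begin{equation*}
\cR_{\Zn, h}(\wh^{(i)}) - \cR_{\Zn, h}(\wh) \ge \frac{\lambda}{2}\|\wh^{(i)} - \wh\|^2, \qquad \cR_{\Zn^{(i)}, h}(\wh) - \cR_{\Zn^{(i)}, h}(\wh^{(i)}) \ge \frac{\lambda}{2}\|\wh^{(i)} - \wh\|^2.
\end{equation*}
Summing these two inequalities, the regularizer cancels and only the single differing sample remains, so the left-hand side equals
\begin{equation*}
\tfrac{1}{n}\bigl[\ell_i(\langle x_i, \wh^{(i)}\rangle) - \ell_i(\langle x_i, \wh\rangle) + \ell_i'(\langle x_i', \wh\rangle) - \ell_i'(\langle x_i', \wh^{(i)}\rangle)\bigr],
\end{equation*}
which, by Cauchy--Schwarz together with \shortautoref{bounded_inputs} and \shortautoref{lipschitz_loss}, is at most $\tfrac{2 L \rx}{n}\|\wh^{(i)} - \wh\|$. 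Cancelling one factor of $\|\wh^{(i)} - \wh\|$ yields the stability estimate $\|\wh^{(i)} - \wh\| \le 2 L \rx/(\lambda n)$ (possibly a sharper constant if one exploits that a single term in the RHS of the first inequality can be matched with the minimizer directly).

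Finally, plugging the stability bound into the symmetrization identity produces the desired inequality, up to a universal constant. The main technical point — and the one where care is required to match the constant stated in the proposition rather than a slightly looser one — is the stability step: it may be sharpened by applying strong convexity only to the pair $(\wh, \cR_{\Zn,h})$ and using the first-order optimality condition $0 \in \partial \cR_{\Zn,h}(\wh)$ to bound $\cR_{\Zn,h}(\wh^{(i)}) - \cR_{\Zn,h}(\wh)$ against the gradient discrepancy induced by the single replaced datum. Modulo this constant-tightening, the argument above is the route I would take.
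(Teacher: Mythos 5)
Your proposal is correct and follows essentially the same route as the paper: replace-one stability of the ERM rule derived from the $\la$-strong convexity of $\cR_{\Zn,h}$, combined with the Bousquet--Elisseeff symmetrization identity (which the paper invokes as a cited lemma rather than re-deriving). The only discrepancy is the constant: your honest accounting of the \emph{two} per-sample loss differences gives $\|\wh(\Zn^i)-\wh(\Zn)\|\le 2L\rx/(\la n)$ and hence a final bound of $2L^2\rx^2/(\la n)$, whereas the paper bounds that same sum by a \emph{single} supremum term to reach $L^2\rx^2/(\la n)$ --- a step that, as written, silently drops a factor of $2$; this constant is immaterial to the downstream results, so your version (with the factor $2$ acknowledged) is, if anything, the more careful one.
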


In order to prove \shortautoref{generalization_error}, we recall the following 
standard result linking the generalization error of the algorithm with its stability. 
We refer to \cite{bousquet2002stability} for more details. 

\begin{lemma}[See {\cite[Lemma $7$]{bousquet2002stability}}] \label{stability_and_generalization}
Let $A(\Zn)$ be a (replace-one) uniformly stable algorithm with parameter $\beta_n$. Then,
we have that
\begin{equation}
\Exp_{\Zn \sim \task} \big[ \cR_\task(A(\Zn)) - \cR_{\Zn} (A(\Zn)) \big] \le \beta_n.
\end{equation}
\end{lemma}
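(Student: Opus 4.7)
The plan is to use the classical ghost-sample symmetrization argument. I would introduce an auxiliary sample $z' \sim \task$, independent of $\Zn = (z_1, \dots, z_n) \sim \task^n$, and, for each $i \in [n]$, define the perturbed dataset $\Zn^{(i)}$ obtained from $\Zn$ by replacing $z_i$ with $z'$. The entire argument then reduces to combining Fubini with two ``renaming'' identities and invoking the stability hypothesis pointwise in the data.

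The key observation is that the joint distribution of $(\Zn, z')$ is invariant under exchanging $z_i$ with $z'$, since all $n+1$ random variables are i.i.d.\ from $\task$. This immediately gives the identity
\begin{equation*}
\EE_{\Zn, z'}\, \ell(A(\Zn), z') \;=\; \EE_{\Zn, z'}\, \ell(A(\Zn^{(i)}), z_i),
\end{equation*}
whose left-hand side coincides with $\EE_{\Zn}\, \cR_\task(A(\Zn))$ because $z'$ is independent of $\Zn$. Symmetry across the training indices also yields
\begin{equation*}
\EE_{\Zn}\, \cR_{\Zn}(A(\Zn)) \;=\; \frac{1}{n}\sum_{i=1}^n \EE_{\Zn}\, \ell(A(\Zn), z_i).
\end{equation*}

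Subtracting the two expressions, I would rewrite the generalization gap as
\begin{equation*}
\EE_{\Zn}\big[\cR_\task(A(\Zn)) - \cR_{\Zn}(A(\Zn))\big] \;=\; \frac{1}{n}\sum_{i=1}^n \EE_{\Zn, z'}\big[\ell(A(\Zn^{(i)}), z_i) - \ell(A(\Zn), z_i)\big].
\end{equation*}
At this point the replace-one uniform stability hypothesis kicks in: each summand is bounded in absolute value by $\beta_n$ almost surely, uniformly in the evaluation point $z_i$, so averaging over $i$ and taking expectations produces the claimed bound $\beta_n$.

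The hard part here is essentially bookkeeping rather than technique. One must be careful to state stability in a form that applies pointwise in $(\Zn, z')$ rather than merely in expectation, and to align the ``replace-one'' convention with the direction of the swap between $z_i$ and $z'$ used in the renaming step (an off-by-one in this convention would flip a sign or introduce a stray factor). With these conventions fixed, no machinery beyond Fubini and the i.i.d.\ structure of the sample is required, which is why the result is traditionally stated as a one-line consequence of the definition.
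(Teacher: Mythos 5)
Your argument is correct and is precisely the classical ghost-sample/renaming proof that the paper relies on by citation (the paper does not prove this lemma itself but defers to Lemma 7 of Bousquet--Elisseeff, whose proof is exactly this exchangeability argument followed by the pointwise replace-one stability bound). The two identities you isolate and the Fubini step are all that is needed, so nothing is missing.
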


We now are ready to present the proof of \shortautoref{generalization_error}.

\begin{proof}{\bf of \shortautoref{generalization_error}}
The proof of the statement proceeds by stability arguments. Specifically, we show that,
for any $h$, $w_h(\Zn)$ is (replace-one) uniformly stable with parameter $\beta_n$ 
satisfying $\displaystyle \beta_n \le L^2 \rx^2/(\la n)$. 
Denote by $\Zn^i$ the dataset $\Zn$ in which we change the point $z_i$ with another 
independent point sample fro the same task $\task$. Thanks to 
\shortautoref{bounded_inputs} and \shortautoref{lipschitz_loss}, we have that
\begin{equation} \label{minni}
\sup_{i} \Big | \ell_i(  \langle x_i , w_h(\Zn)\rangle ) - \ell_i( \langle x_i , w_h(\Zn^i)\rangle ) \Big |
\le L \rx \big \| w_h(\Zn) - w_h(\Zn^i) \big \|.
\end{equation}
Now thanks to the $\la$-strong convexity of $\cR_{\Zn,h}$, and the definition of the algorithm, we have that
\begin{equation}
\begin{split}
& \frac{\la}{2} \| w_h(\Zn^i) - w_h(\Zn) \big \|^2 \le \cR_{\Zn,h}(w_h(\Zn^i)) - \cR_{\Zn,h}(w_h(\Zn)) \\
& \frac{\la}{2} \| w_h(\Zn) - w_h(\Zn^i) \big \|^2 \le \cR_{\Zn^i,h}(w_h(\Zn)) - \cR_{\Zn^i,h}(w_h(\Zn^i)).
\end{split}
\end{equation}
Hence, summing these two inequalities, observing that
\begin{equation*}
\hspace{-1cm}
\cR_{\Zn,h}(w_h(\Zn^i)) - \cR_{\Zn,h}(w_h(\Zn)) + \cR_{\Zn^i,h}(w_h(\Zn)) - \cR_{\Zn^i,h}(w_h(\Zn^i))
\le \frac{1}{n} \sup_{i} \Big | \ell_i( \langle x_i , w_h(\Zn)\rangle) - \ell_i(\langle x_i , w_h(\Zn^i)\rangle) \Big |,
\end{equation*}
and using again \shortautoref{bounded_inputs} and \shortautoref{lipschitz_loss}, 
we can write
\begin{equation}
\lambda \| w_h(\Zn^i) - w_h(\Zn) \big \|^2 \le \frac{1}{n} \sup_{i} 
\Big | \ell_i( \langle x_i , w_h(\Zn)\rangle) - \ell_i(\langle x_i , w_h(\Zn^i)\rangle) \Big | \\
\le \frac{L \rx}{n} \big \| w_h(\Zn) - w_h(\Zn^i) \big \|.
\end{equation}
Hence, we get
\begin{equation}
\big \| w_h(\Zn^i) - w_h(\Zn) \big \| \le \frac{L \rx}{\la n}.
\end{equation}
Therefore, continuing with Eq. \eqref{minni}, we get
\begin{equation}
\sup_{i} \Big | \ell_i(  \langle x_i , w_h(\Zn) \rangle ) - \ell_i( \langle x_i, w_h(\Zn^i)\rangle ) \Big |
\le \frac{L^2 \rx^2}{\la n}.
\end{equation}
The statement follows by applying \shortautoref{stability_and_generalization}.
\end{proof}

We now are ready to proceed with the statistical analysis of the biased ERM algorithm.
In the following \shortautorefsubapp{fixed_bias_ERM} we report the analysis for a 
fixed bias, while in \shortautorefsubapp{estimated_bias_ERM} we focus on the bias 
returned by running \shortautoref{OGDA2_paper}. 


\subsection{Analysis for a Fixed Bias}
\label{fixed_bias_ERM}

Here we study the performance of a fixed bias $h$. The following theorem should
be compared with \shortautoref{excess_transfer_risk_fixed_h} in the paper.

\begin{theorem}[Excess Transfer Risk Bound for a Fixed Bias $h$, ERM] \label{excess_transfer_risk_fixed_h_REM}
Let \shortautoref{bounded_inputs} and \shortautoref{lipschitz_loss} hold.
Let $\whD$ the biased ERM algorithm in Eq. \eqref{RERM} with regularization parameter
\begin{equation}
\la = \frac{\rx L}{{\rm Var}_h}~\sqrt{\frac{1}{n}}. 
\end{equation}
Then, the following bound holds
\begin{equation}
\een(\whD) - \eeinf \le {\rm Var}_h~2 \rx L~\sqrt{\frac{1}{n}}.
\end{equation}
\end{theorem}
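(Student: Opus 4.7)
The plan is to mirror the proof of Theorem \ref{excess_transfer_risk_fixed_h} for SGD, but replacing the online-to-batch estimate (Proposition \ref{estimation_error}) with the stability-based generalization bound for ERM given in Proposition \ref{generalization_error}. For each fixed task $\task\sim\env$ with minimum-norm minimizer $\wmu$, I would start from the decomposition
\begin{equation*}
\Exp_{\Zn\sim\task^n}\bigl[\cR_\task(\whD(\Zn)) - \cR_\task(\wmu)\bigr] \le \mathrm{A} + \mathrm{B},
\end{equation*}
where $\mathrm{A} = \Exp_{\Zn}[\cR_\task(\whD(\Zn)) - \cR_{\Zn}(\whD(\Zn))]$ is the generalization gap and $\mathrm{B} = \Exp_{\Zn}[\cR_{\Zn}(\whD(\Zn)) - \cR_\task(\wmu)]$ is an approximation/estimation term.

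For term $\mathrm{A}$, Proposition \ref{generalization_error} directly gives $\mathrm{A} \le L^2\rx^2/(\la n)$. For term $\mathrm{B}$, I would use the fact that the regularizer is non-negative, so $\cR_{\Zn}(\whD(\Zn)) \le \cR_{\Zn,h}(\whD(\Zn))$, and then use optimality of $\whD(\Zn)$ for $\cR_{\Zn,h}$ to write $\cR_{\Zn,h}(\whD(\Zn)) \le \cR_{\Zn,h}(\wmu) = \cR_{\Zn}(\wmu) + \frac{\la}{2}\|\wmu - h\|^2$. Since $\wmu$ is deterministic in $\Zn$, taking the expectation yields $\Exp_{\Zn}\cR_{\Zn}(\wmu) = \cR_\task(\wmu)$, so $\mathrm{B} \le \frac{\la}{2}\|\wmu - h\|^2$. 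This is exactly the analogue of the bound in Eq.~\eqref{approx}.

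Combining the two bounds and averaging over $\task\sim\env$ produces
\begin{equation*}
\een(\whD) - \eeinf \le \frac{L^2\rx^2}{\la n} + \la\,{\rm Var}_h^2,
\end{equation*}
after recognizing $\frac{1}{2}\Exp_{\task\sim\env}\|\wmu - h\|^2 = {\rm Var}_h^2$ from Eq.~\eqref{var_h}. Optimizing the right-hand side in $\la$ gives the stated choice $\la = \rx L/({\rm Var}_h\sqrt{n})$, at which the two terms balance and sum to $2\rx L\,{\rm Var}_h/\sqrt{n}$, matching the claim. There is no real obstacle: the only ingredients are the generalization bound of Proposition \ref{generalization_error}, the defining optimality of ERM, non-negativity of the squared-distance regularizer, and a one-dimensional optimization in $\la$. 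The main contrast with the SGD version is that here the generalization rate is $1/(\la n)$ rather than $\log(n)/(\la n)$, which is why the logarithmic factor disappears from the final rate.
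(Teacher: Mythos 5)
Your proposal is correct and follows essentially the same route as the paper: the same A/B decomposition, term A bounded by the stability result of \shortautoref{generalization_error}, term B bounded via non-negativity of the regularizer, ERM optimality, and $\Exp_{\Zn}\cR_{\Zn,h}(\wmu)=\cR_{\task,h}(\wmu)$, followed by averaging over $\task\sim\env$ and optimizing $\la$. The only cosmetic difference is that the paper absorbs the non-negativity of the regularizer into the decomposition itself (defining B with $\cR_{\Zn,h}$ so that A+B is an upper bound), whereas you keep the decomposition exact and invoke non-negativity when bounding B; the two are equivalent.
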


\begin{proof} 
For $\task \sim \env$, consider the following decomposition 
\begin{equation} \label{dec}
\Exp_{\Zn \sim \task^n}~\cR_\task(\whD(\Zn)) - \cR_\task(\wmu) \le \text{A} + \text{B},
\end{equation}
where, A and B are respectively defined by
\begin{equation} \label{est}
\begin{split}
A & = \Exp_{\Zn \sim \task^n}~\big[\cR_{\task}(\whD(\Zn)) - \cR_{\Zn}(\whD(\Zn)) \big] \\
B & = \Exp_{\Zn \sim \task^n}~\big[\cR_{\Zn,h}(\whD(\Zn)) - \cR_{\task}(w_{\task}) \big].
\end{split}
\end{equation}
In order to bound the term A, we use \shortautoref{generalization_error}.
As regards the term B, we apply Eq. \eqref{approx} in the paper.
The desired statement derives from combining the bounds on the two terms, taking 
the average of the result with respect to $\task \sim \env$ and optimizing with respect to 
$\la$ the entire bound.
\end{proof}


\subsection{Analysis for the Bias $\bar h_T$ Returned by \shortautoref{OGDA2_paper}}
\label{estimated_bias_ERM}

In this part we study the performance of the bias $\bar h_T$ returned by an exact version of \shortautoref{OGDA2_paper}.
As a matter of fact, in this case, differently from the case analyzed in the paper for the SGD family, thanks to the assumption 
on the availability of the ERM algorithm in exact form and the closed form of the gradient of the meta-objective $\LL_{\Zn}$ (see \shortautoref{properties}), \shortautoref{OGDA2_paper} is assumed to run with exact meta-gradients. 
The following theorem should be compared with \shortautoref{excess_transfer_risk_LTL} in the paper.

\begin{theorem}[Excess Transfer Risk Bound for the Bias $\bar h_T$ Estimated by \shortautoref{OGDA2_paper}, ERM]
\label{excess_transfer_risk_LTL_REM}
Let \shortautoref{bounded_inputs} and \shortautoref{lipschitz_loss} hold. 
Let $\bar h_T$ be the output of \shortautoref{OGDA2_paper} with
exact meta-gradients and 
\begin{equation}
\gamma = \displaystyle \frac{\| \wbar \|}{L \rx}~\sqrt{\frac{1}{T}}.
\end{equation}
Consider $w_{\bar h_T}$ the biased ERM algorithm in Eq. \eqref{RERM} with bias $h = \bar h_T$
and regularization parameter 
\begin{equation}
\la = \frac{\rx L}{{\rm Var}_{\wbar}}~\sqrt{\frac{1}{n}}.
\end{equation}
Then, the following bound holds
\begin{equation}
\Exp ~ \een(w_{\bar h_T}) - \eeinf \le {\rm Var}_{\wbar} ~ 2 \rx L  
~\sqrt{\frac{1}{n}} + \| \wbar \| ~ L \rx~\sqrt{\frac{1}{T}},
\end{equation}
where the expectation above is with respect to the sampling 
of the datasets $\Zn^{(1)}, \dots, \Zn^{(T)}$ from the environment $\env$.
\end{theorem}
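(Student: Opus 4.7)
The plan is to mirror the three-term decomposition used in the proof of \shortautoref{excess_transfer_risk_LTL}, but replacing \shortautoref{estimation_error} (the online-to-batch control of the SGD iterate) by the ERM generalization bound \shortautoref{generalization_error}, and replacing \shortautoref{regret_OGDA_proj} (the $\epsilon$-subgradient convergence rate of \shortautoref{OGDA2_paper}) by its exact-gradient counterpart. Concretely, writing $\hat\E_n(h) = \Exp_{\task\sim\env}\Exp_{\Zn\sim\task^n}~\LL_{\Zn}(h)$ with $\LL_{\Zn}(h) = \cR_{\Zn,h}(\whD(\Zn))$, I decompose
\begin{equation*}
\Exp\,\een(w_{\bar h_T}) - \eeinf \le \underbrace{\een(w_{\bar h_T}) - \Exp\,\hat\E_n(\bar h_T)}_{\text{A}} + \underbrace{\Exp\,\hat\E_n(\bar h_T) - \hat\E_n(\wbar)}_{\text{B}} + \underbrace{\hat\E_n(\wbar) - \eeinf}_{\text{C}}.
\end{equation*}

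For term A, I would apply \shortautoref{generalization_error} with the data-dependent bias $h=\bar h_T$ (which is legitimate once one averages over $\Zn\sim\task^n$ independent of the meta-sample used to build $\bar h_T$) and average over $\task\sim\env$, obtaining $\text{A}\le L^2\rx^2/(\la n)$. For term C, I reuse exactly the inequality established in Eq.~\eqref{approx}: by definition of ERM, $\cR_{\Zn,\wbar}(w_{\wbar}(\Zn))\le \cR_{\Zn,\wbar}(w_\task)$ for every $\Zn$ and $\task$, so taking the expectation of $\Zn\sim\task^n$ and then $\task\sim\env$, and using $\Exp_\task \tfrac{\la}{2}\|w_\task-\wbar\|^2 = \la\,{\rm Var}_{\wbar}^2$, yields $\text{C}\le \la\,{\rm Var}_{\wbar}^2$.

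Term B is the convergence rate of \shortautoref{OGDA2_paper} in the idealized regime where $\nabla^{(t)} = \nabla\LL_{\Zn^{(t)}}(h^{(t)})$ is available exactly. Here I would invoke the deterministic inequality of \shortautoref{basic_lemma_1_proj} (with $\hat\nabla^{(t)}$ replaced by $\nabla^{(t)}$), take expectations, and use that each $\nabla^{(t)}$ is an unbiased estimate of $\nabla\hat\E_n(h^{(t)})$, together with convexity of $\LL_{\Zn}$ and Jensen's inequality on $\bar h_T$. The key quantitative input is that, under \shortautoref{bounded_inputs} and \shortautoref{lipschitz_loss}, \shortautoref{properties} gives $\|\nabla^{(t)}\|\le L\rx$, so the standard SGD bound produces
\begin{equation*}
\text{B}\le \frac{\|\wbar\|^2}{2\gamma T} + \frac{\gamma L^2\rx^2}{2},
\end{equation*}
which, optimized at $\gamma = \|\wbar\|/(L\rx\sqrt{T})$, yields $\text{B}\le \|\wbar\|L\rx/\sqrt{T}$. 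Plugging in the stated $\la = \rx L/({\rm Var}_{\wbar}\sqrt{n})$ makes $\text{A}+\text{C} = 2\rx L\,{\rm Var}_{\wbar}/\sqrt{n}$, and summing with $\text{B}$ delivers the claimed bound.

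The only non-routine point is the measurability/independence bookkeeping needed to apply \shortautoref{generalization_error} at the random bias $\bar h_T$: since $\bar h_T$ is a function only of the meta-training datasets $\Zn^{(1)},\dots,\Zn^{(T)}$, while the transfer risk integrates over a fresh $(\task,\Zn)$ drawn independently, one conditions on $\bar h_T$, applies \shortautoref{generalization_error} pointwise in $h$, and then takes the outer expectation—this is the same device implicit in \shortautoref{excess_transfer_risk_LTL}. Everything else is a direct rewrite of the SGD proof with the two substitutions above, and the need to only handle exact gradients actually removes the $(\log n+1)/n$ correction term that appears in \shortautoref{excess_transfer_risk_LTL}.
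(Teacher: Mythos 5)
Your proposal is correct and follows essentially the same route as the paper: the identical three-term decomposition, with term A bounded by \shortautoref{generalization_error} averaged over $\task\sim\env$, term C by Eq.~\eqref{approx} with $h=\wbar$, and term B by rerunning the argument of \shortautoref{regret_OGDA_proj} with exact meta-gradients and $\epsilon_{\la,n}=0$. If anything, your bound $\text{B}\le\|\wbar\|L\rx/\sqrt{T}$ is the consistent one — the paper's displayed intermediate bound for B retains a spurious $2\rx^2L^2/(\la n)$ term that contradicts its own stipulation $\epsilon_{n,\la}=0$ and is absent from the final statement.
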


\begin{proof} 
We consider the following decomposition
\begin{equation}
\Exp ~ \een(w_{\bar h_T}) - \eeinf \le \text{A} + \text{B} + \text{C},
\end{equation}
where we have defined the following terms
\begin{equation}
\begin{split}
\text{A} & = \een(w_{\bar h_T}) - \hat \E_n(\bar h_T) \\
\text{B} & = \Exp ~ \hat \E_n(\bar h_T) -  \hat \E_n(\wbar) \\
\text{C} & = \hat \E_n(\wbar) - \eeinf.
\end{split}
\end{equation}
Now, in order to bound the term A, we use \shortautoref{generalization_error}
with $h = \bar h_T$ and we average with respect to $\task \sim \env$. As regards the 
term C, we apply the inequality given in Eq. \eqref{approx} with $h = \wbar$
and we take again the average on $\task \sim \env$. Finally, the term B is the convergence rate of 
\shortautoref{OGDA2_paper}, but this time, with exact meta-gradients. Now, repeating
exactly the same steps described in the proof \shortautoref{regret_OGDA_proj} with 
$\hat h = \wbar$ and $\epsilon_{n,\la}= 0$, it is immediate to show that for the choice 
of $\gamma$ given in the statement we have that
\begin{equation*}
\text{B} = \Exp ~ \hat \E_n(\bar h_T) - \hat \E_n(\wbar)
\le \big \| \wbar \big \| ~~ L \rx~\sqrt{\frac{1}{T}}
+ \frac{2 \rx^2 L^2}{\la n}.
\end{equation*}
The desired statement follows from combining the bounds on the three terms and 
optimizing the above bound with respect to $\la$.
\end{proof}

Looking at the results above, we immediately see that, up to constants and logarithmic factors, 
the LTL bounds we have stated in the paper for the SGD family are equivalent to the ones 
we have reported in this appendix for the biased ERM family.


\section{Hyper-parameters Tuning in the LTL Setting}
\label{validation_app}

\newcommand{\mdata}{{\bf Z}}

Denote by ${\bar h}_{T, \la, \gamma}$ the output of \shortautoref{OGDA2_paper} computed with $T$ iterations (hence $T$ tasks)
with values $\la$ and $\gamma$. In all experiments, we obtain this estimator $\bar h_{T_{\rm tr},\la, \gamma}$ by learning it 
on a dataset $\mdata_{\rm tr}$ of $T_{\rm tr}$ {\em training} tasks, each comprising a dataset $\Zn$ of $n$ input-output 
pairs $(x,y)\in\X\times\Y$. We perform this meta-training for different values of $\la\in \{\lambda_1,\dots,\lambda_p\}$ and 
$\gamma \in \{\gamma_1,\dots,\gamma_r\}$ and we select the best estimator based on the prediction error measured on a 
separate set $\mdata_{\rm va}$ of $T_{\rm va}$ {\em validation} tasks. Once such optimal $\la$ and $\gamma$ values have 
been selected, we report the average risk of the corresponding estimator on a set $\mdata_{\rm te}$ of 
$T_{\rm te}$ {\em test} tasks. 

In particular, for the synthetic data we considered 10 (30 for the real data) candidates values for both $\lambda$ and $\gamma$ in the range $[10^{-6}, 10^3]$ ($[10^{-3}, 10^3]$ for the real data) with logarithmic spacing.

Note that the tasks in the test and validation sets $\mdata_{\rm te}$ and $\mdata_{\rm va}$ 
are all provided with both a training and test dataset both sampled from the same distribution. Since we are 
interested in measuring the performance of the algorithm trained with $n$ points, the training datasets have all the same sample 
size $n$ as those in the meta-training datasets in $\mdata_{\rm tr}$, while the test datasets contain $n'$ points each, for some positive integer $n'$. Indeed, in order to evaluate the performance of a bias $h$, we need to first train the corresponding
algorithm $\bwhD$ on the training dataset $Z_n$, and then test its performance on the test set $Z'_{n'}$, by  computing the
empirical risk $\cR_{Z_{n'}}(\bwhD(\Zn))$. 

In addition to this, since we are considering the online setting, the training datasets arrive one at the time, therefore model selection is performed {\em online}: the system keeps track of all candidate values $\bar h_{T_{\rm tr},\la_j,\gamma_k}$, $j \in [p]$, $k \in [r]$, and, whenever a new training task is presented, these vectors are all updated by incorporating the corresponding new observations. The best bias $h$ is then returned at each iteration, based on its performance on the validation set $\mdata_{\rm va}$. The previous procedure describes how to tune simultaneously both $\la$ and $\gamma$. When the bias $h$ we use is fixed a priori (e.g. in ITL), we just need to tune the parameter $\la$; in such a case the procedure is analogous to that described above. 



\section{Additional Experiments}
\label{additional_exps}

Our method uses SGD (\shortautoref{Within-Task Algorithm Online_paper}) in 
two ways (i) to estimate the meta-gradient during meta-training and (ii) to 
evaluate the bias during the meta-validation or testing phase. In this section, we report additional experiments, in which we compared the proposed approach with exact meta-gradient approaches based on ERM. In the following experiments we approximate the ERM algorithm by running FISTA algorithm (see \cite{beck2009fast}) up to convergence 
on the within-taks dual problem introduced in \shortautoref{primal_dual_formulation_app},
see \shortautorefsubapp{approx_ERM} below for more details.

In particular, we evaluated the following three settings.

\begin{itemize}
    \item LTL SGD-SGD (our LTL method described in the paper): we use SGD both during meta-training and meta-validation / testing phases.
    \item LTL ERM-SGD: we use exact meta-gradients (computed by the ERM, as described in \shortautoref{properties} in the text) during the meta-training phase, but we apply SGD during the meta-validation/testing.
    \item LTL ERM-ERM: we use ERM both for meta-training process (to compute the exact meta-gradients) and during meta-validation/testing. This is the approach we theoretically analyzed in \shortautoref{ERM_analysis}.
\end{itemize}
We also compare the above method with four ITL settings:
\begin{itemize}
    \item ITL ERM: we perform independent task learning using the ERM algorithm with bias $h = 0$.
    \item ITL SGD: we perform independent task learning using the SGD algorithm with bias $h = 0$.
    \item MEAN ERM: we perform independent task learning using the ERM algorithm with bias $h = \wbar$ 
(only in synthetic experiments, in which this quantity is available). 
    \item MEAN SGD: we perform independent task learning using the SGD algorithm with bias $h = \wbar$ 
(only in synthetic experiments in which this quantity is available).
\end{itemize}

\begin{figure}[t] 
\begin{center}
    \includegraphics[width=0.48\textwidth]{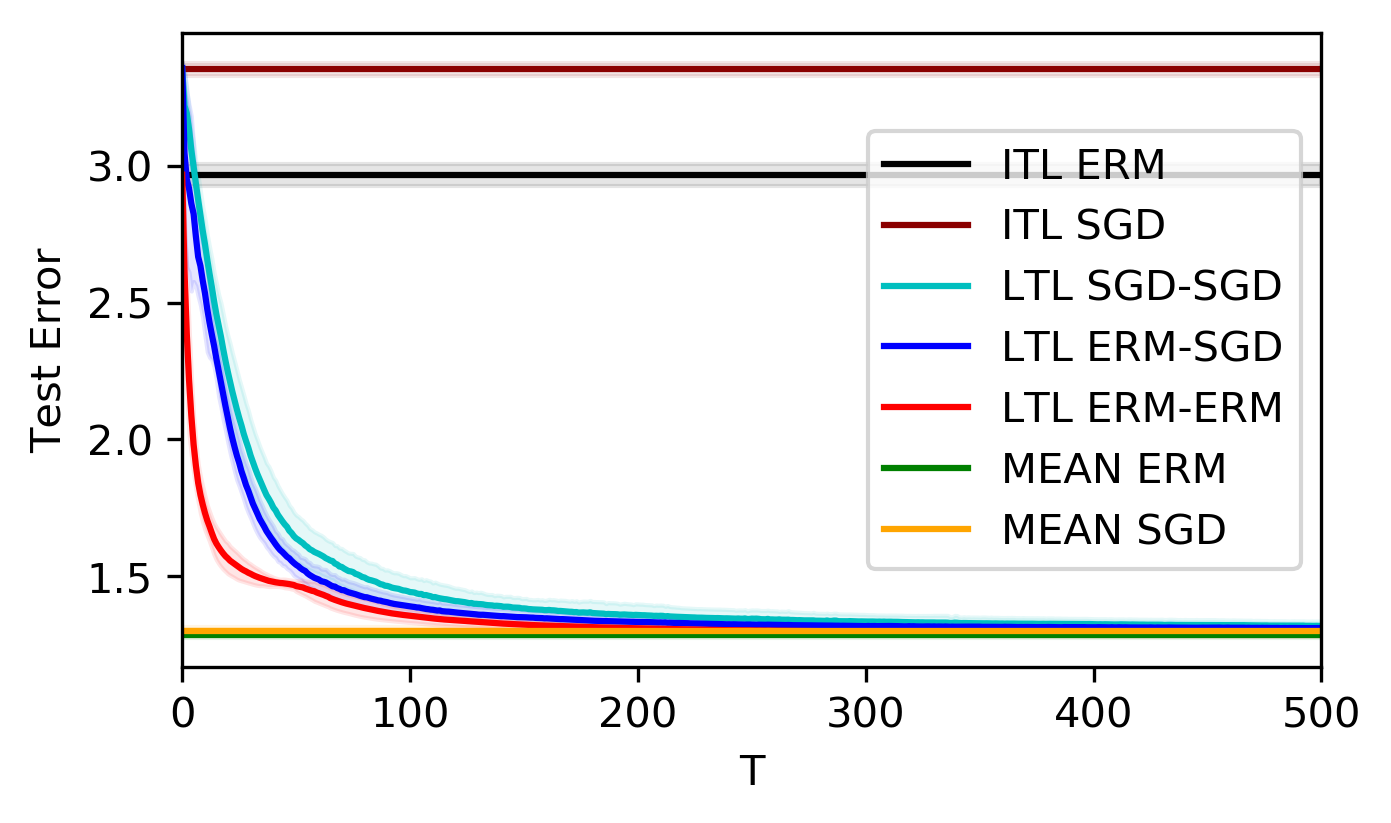} 
\quad 
    \includegraphics[width=0.48\textwidth]{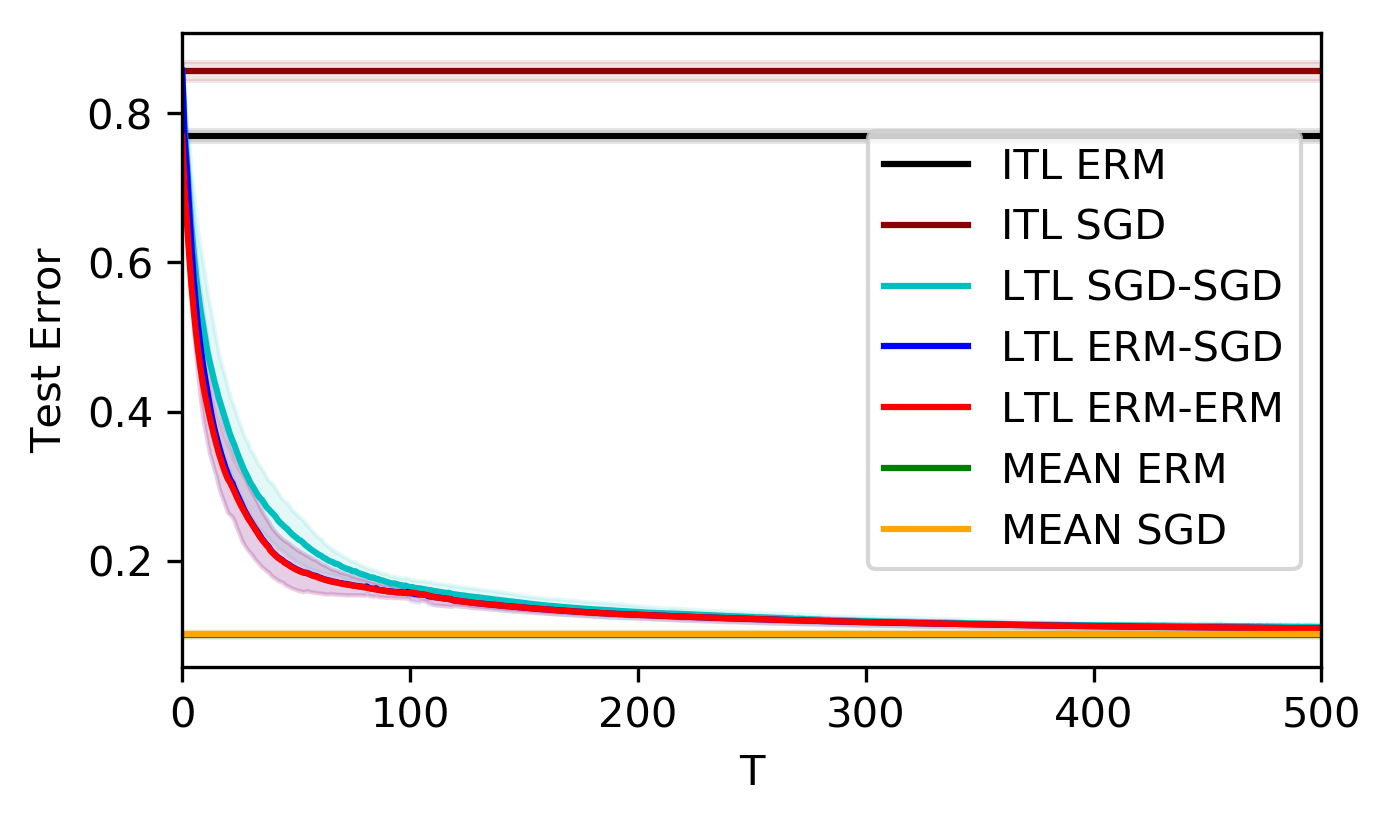}
    \caption{{\bf Synthetic Data.} Test performance of different bias with respect to an increasing number 
of tasks. (Left) Regression with absolute loss. (Right) Classification with hinge loss. The results are averaged 
over $10$ independent runs (datasets generations).\label{fig:synth-data-app}}
\end{center}
\end{figure}

\begin{figure}[t] 
\begin{center}
    \includegraphics[width=0.48\textwidth]{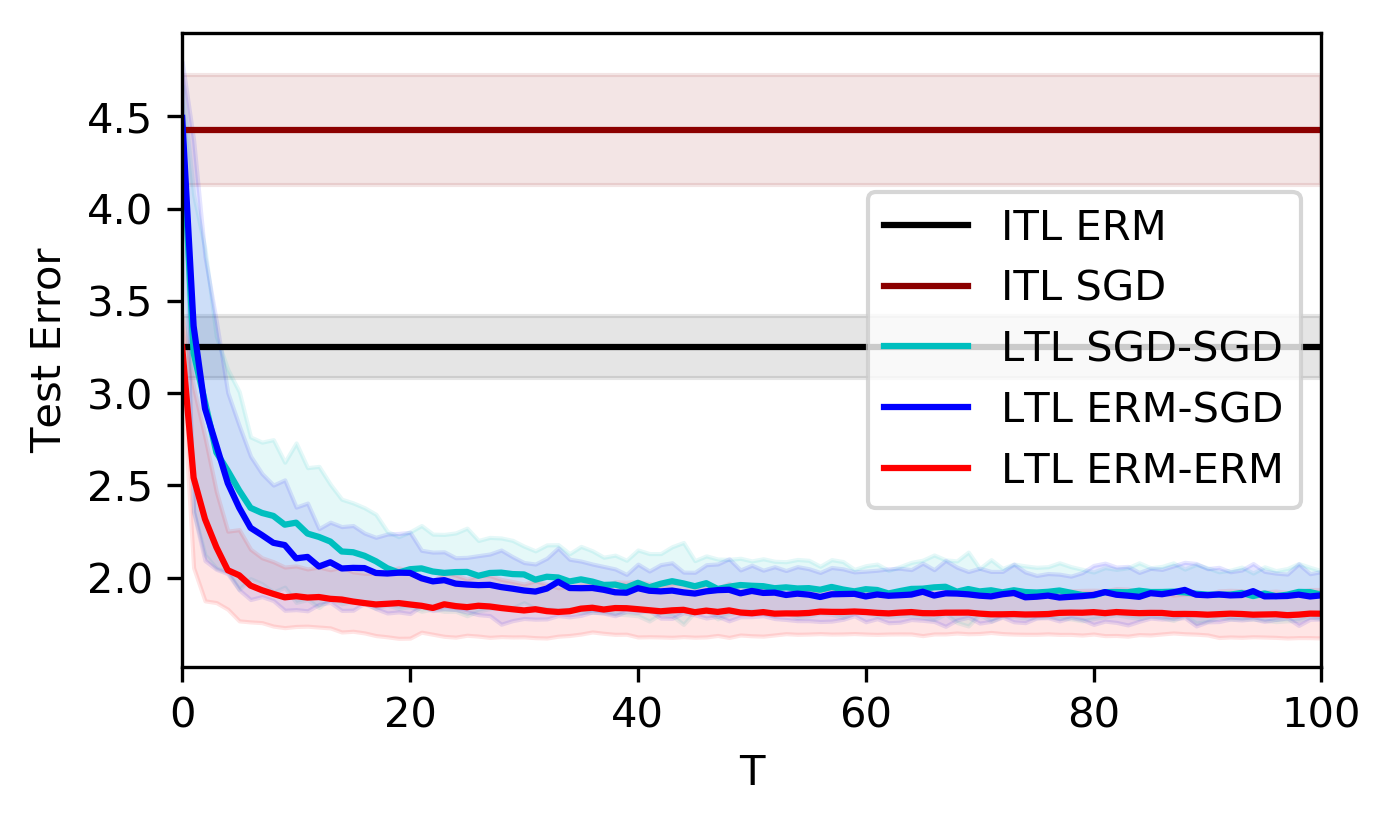}
\quad 
    \includegraphics[width=0.48\textwidth]{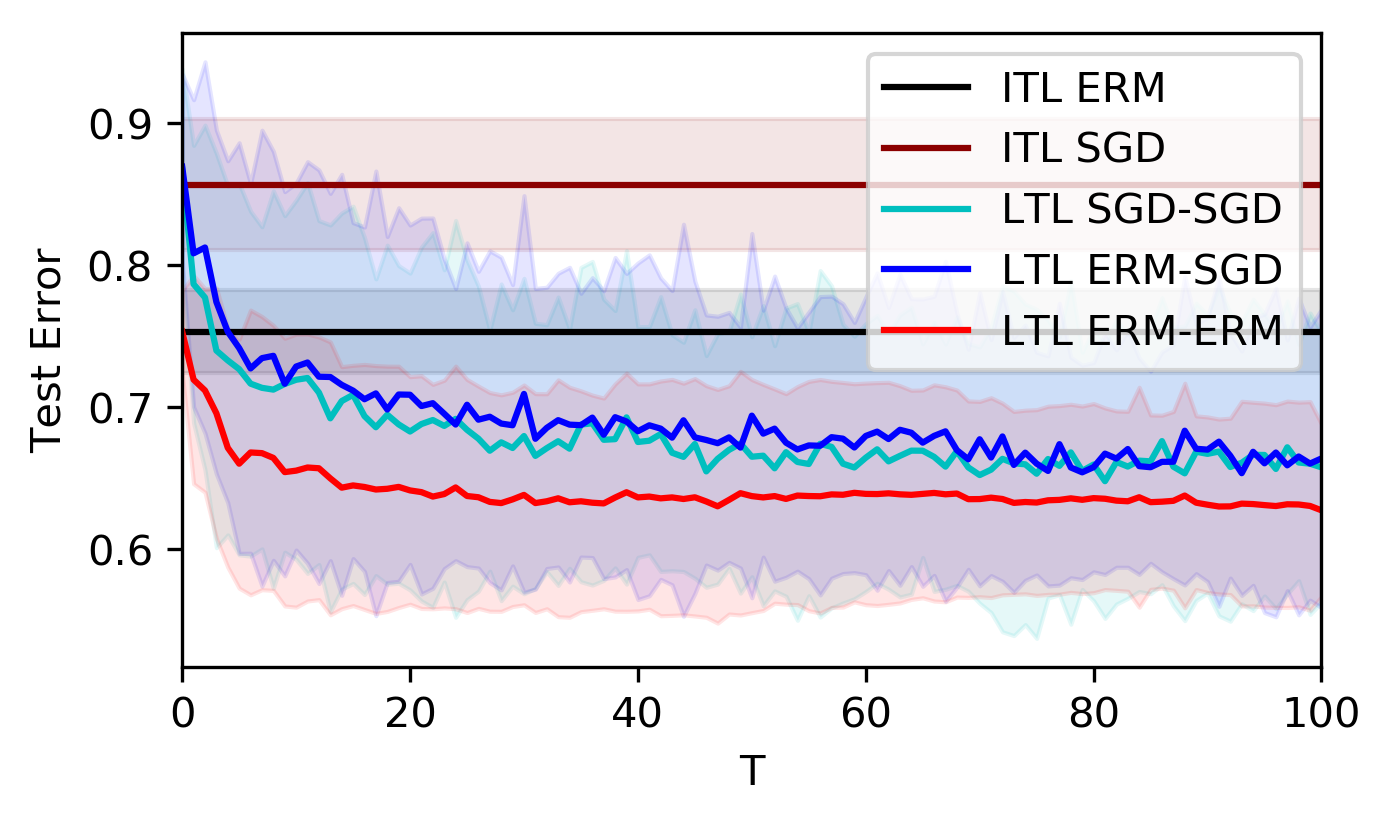}
    \caption{{\bf Real Data.} Test performance of different bias with respect to an increasing number of tasks. (Left) 
Lenk Dataset Regression. (Right) Lenk Dataset Classification. The results are averaged over $30$ independent runs (datasets generations).\label{fig:real-data-app}}
\end{center}
\end{figure}

We evaluated the performance of all the settings described above in the synthetic and real datasets used in the paper in \shortautoref{exps_sec}. The results are reported in Fig.~\ref{fig:synth-data-app} and Fig.~\ref{fig:real-data-app}, respectively. Looking at the plots, we can observe that, in all the experiments, SGD-SGD and ERM-SGD perform similarly. This confirms our theoretical finding:
approximating the meta-gradients by SGD introduces an error which does not significanlty affect the 
resulting generalization performance,
and, at the same time, it allows us to obtain an overall method with a very low computational cost.

We also point out that ERM-ERM achieves lower loss values than the other two LTL methods but, especially on the synthetic experiments, the difference is almost negligible and this is
coherent with the results obtained in \shortautoref{ERM_analysis}.
Finally, as already observed in the paper, all the LTL methods perform better than the ITL approaches (ITL ERM and ITL SGD) by a large margin, and, as expected, in the synthetic experiments, they almost match the performance of both MEAN ERM and MEAN SGD when the number of training tasks $T$ is 
sufficiently large.



\subsection{Approximating ERM by FISTA}
\label{approx_ERM}

In this section we describe how we apply FISTA algorithm (\cite{beck2009fast})
on the dual within-task problem in order to compute an approximation of the ERM 
algorithm in Eq. \eqref{RERM}.

We start from recalling the the primal within-task problem 
\begin{equation} \label{primal_problem}
\whD = \argmin_ {w \in \Real^d} \prhD(w) \quad \quad  
\prhD(w) = \frac{1}{n} \sum_{i=1}^n \ell_i\bigl( \langle x_i, w \rangle \bigr) ~ 
+ ~ \frac{\la}{2} ~ \| w - h \|^2
\end{equation}
and we rewrite its dual as follows
\begin{equation}
\uhD \in \argmin_ {u \in \Real^n} \duhD(u) \quad \quad 
\duhD(u) = G(u) + F_h(u)
\end{equation}
\begin{equation}
G(u) = \frac{1}{n} \sum_{i =1}^n \ell_i^*(n u_i) \quad \quad 
F_h(u) = \frac{1}{2 \la} \big \| \Xn \trans u \big \|^2 - \big \langle \Xn h, u \big \rangle.
\end{equation}
We apply FISTA algorithm (\cite{beck2009fast}) on this function
$\duhD$, treating $F_h$ as the smooth part and $G$ as the non-smooth
proximable part. The primal variable is then defined as before from the 
dual one by the KKT conditions. The algorithm is reported in 
\shortautoref{Within-Task Algorithm Batch With Acceleration paper} below.
In the experiments reported above, we run 
\shortautoref{Within-Task Algorithm Batch With Acceleration paper}
for $K = 2000$ iterations or until the duality gap 
\begin{equation}
\prhD \bigl( \uhD^{(k)} \bigr) + \duhD \bigl( \whD^{(k)} \bigr) 
\end{equation}
is lower than $10^{-6}$.

\begin{algorithm}[H]
\caption{Approximation of ERM by FISTA Algorithm}
\label{Within-Task Algorithm Batch With Acceleration paper}
\begin{algorithmic}
\State ~
   \State {\bfseries Input} ~~ $K$ number of iterations, $\gamma = \la/ (n \rx^2)$ step size, $\lambda > 0$, $h \in \Real^d$,
   $t_1 = 1$
   \vspace{.2cm}
   \State {\bfseries Initialization} ~~ $\uhD^{(0)} = p_h^{(1)} \in \Real^n$
  \vspace{.2cm}
   \State {\bfseries For} ~~ $k=1$ to $K$
    \vspace{.1cm}
   \State \qquad Update ~~ $\uhD^{(k)} = \prox_{\gamma G} \Bigl( p_h^{(k)} - 
   \gamma \nabla F_h \bigl(p_h^{(k)} \bigr)\Bigr)$
    \vspace{.1cm}
   \State \qquad Define ~~ $w_h^{(k)} = \displaystyle - \frac{1}{\la} \Xn \trans \uhD^{(k)} + h$ \quad \quad \text{KKT condition}
     \vspace{.1cm}
   \State \qquad Update ~~ $t_{k+1} = \displaystyle \frac{1 + \sqrt{1 + 4 t_k^2}}{2}$
    \vspace{.1cm}
   \State \qquad Update ~~ $p_h^{(k+1)} = \uhD^{(k)} + \displaystyle \frac{t_k - 1}{t_{k+1}} \bigl(\uhD^{(k)} - \uhD^{(k-1)} \bigr)$
   \vspace{.2cm}
 \State {{\bfseries Return} ~~ $w_h^{(K)} \approx \whD$} 
\State ~
\end{algorithmic}
\end{algorithm}

More precisely, we observe that, thanks to \shortautoref{bounded_inputs}, 
for any $h \in \H$, $F_h$ is $\bigl( n \rx^2 /\la \bigr)$-smooth. As a matter of fact, 
for any $u \in \Real^n$, its gradient is given by
\begin{equation}
\nabla F_h(u) = \frac{1}{\la} \Xn \Xn \trans u - \Xn h
\end{equation}
and $\big \| \Xn \Xn \trans \big \|_\infty \le n \rx^2$. The term $G$
play the role of the non-smooth part and, thanks to its separability,
for any step-size $\gamma > 0$, any $i \in [n]$ and any $u \in \Real^n$, 
we have 
\begin{equation}
\Bigl( \prox_{\gamma G} (u) \Bigr)_i = \frac{1}{n} ~ \prox_{n \gamma \ell_i^*} (n u_i).
\end{equation}

Note that, by Moreau's Identity \cite[Thm.~14.3]{bauschke2011convex},
for any $\eta > 0$ and any $a \in \Real$,  we have $\prox_{\eta \ell_i^*}(a) = 
a - \eta \prox_{\frac{1}{\eta} \ell_i} \bigl( a/\eta \bigr)$.
At last, we report the conjugate, the subdifferential and the closed form of the proximity 
operator for the absolute and the hinge loss used in our experiments.

\begin{example}[Absolute Loss for Regression and Binary Classification] \label{absolute_loss}
Let $\Y \subseteq \Real$ or $\Y = \{ \pm 1 \}$. For any $\hat y, y \in \Y$, let 
$\ell(\hat y, y) = \big | \hat y - y \big |$ and denote $\ell_y(\cdot) = \ell(\cdot, y)$.
Then, we have
\begin{equation}
u \in \partial \ell_y(\hat y) \iff u \in 
\begin{cases}
\{ 1 \} & \text{ if } \hat y - y > 0 \\
\{ - 1 \} & \text{ if } \hat y - y < 0 \\
[-1, 1] & \text{ if } \hat y - y = 0.
\end{cases}
\end{equation}
Moreover, for any $y \in \Y$, $\ell_y(\cdot)$ is $1$-Lipschitz, and, for any $u \in \Real$, $\eta > 0$, $a \in \Real$, we have that
\begin{equation}
\ell_y^*(u) = \iota_{[-1,1]}(u) + \langle u, y \rangle
\end{equation}
\begin{equation}
\prox_{\frac{1}{\eta} \ell_y}(a) =
\begin{cases}
a - \frac{1}{\eta} & \text{if $a - y > \frac{1}{\eta}$} \\
y & \text{if $a - y \in \Big[ -\frac{1}{\eta}, \frac{1}{\eta} \Big ]$} \\
a + \frac{1}{\eta} & \text{if $a - y < - \frac{1}{\eta}$}.
\end{cases}
\end{equation}
\end{example}

\begin{example}[Hinge Loss for Binary Classification] \label{hinge_loss}
Let $\Y = \{ \pm 1 \}$. For any $\hat y, y \in \Y$, let 
$\ell(\hat y, y) = \max \big \{ 0, 1 - y \hat y \big \}$ and
denote $\ell_y(\cdot) = \ell(\cdot, y)$. Then, we have
\begin{equation}
u \in \partial \ell_y(\hat y) \iff u \in 
\begin{cases}
\{ - y \} & \text{ if } 1 - y \hat y > 0 \\
\{ 0 \} & \text{ if } 1 - y \hat y < 0 \\
[-1, 1] \{- y \} & \text{ if } 1 - y \hat y = 0.
\end{cases}
\end{equation}
Moreover, for any $y \in \Y$, $\ell_y(\cdot)$ is $1$-Lipschitz, and, for any $u \in \Real$, $\eta > 0$, $a \in \Real$, we have that
\begin{equation}
\ell_i^*(u) = \frac{u}{y} + \iota_{[-1,0]} \Bigl( \frac{u}{y} \Bigr)
\end{equation}
\begin{equation}
\prox_{\frac{1}{\eta} \ell_y}(a) =
\begin{cases}
a + \frac{y}{\eta} & \text{if $ya < 1 - \frac{y^2}{\eta}$} \\
\frac{1}{y} & \text{if $ya \in \Big [ 1 - \frac{y^2}{\eta}, 1 \Big]$} \\
a & \text{if $ya > 1$}.
\end{cases}
\end{equation}
\end{example}

\end{appendices}


\end{document}